\def\HiLi{\leavevmode\rlap{\hbox to \hsize{\color{gray!30}\leaders\hrule height .6\baselineskip depth .5ex\hfill}}}
\newcolumntype{L}[1]{>{\raggedright\let\newline\\\arraybackslash\hspace{0pt}}m{#1}}
\newcolumntype{C}[1]{>{\centering\let\newline\\\arraybackslash\hspace{0pt}}m{#1}}
\newcolumntype{R}[1]{>{\raggedleft\let\newline\\\arraybackslash\hspace{0pt}}m{#1}}
\definecolor{light-gray}{gray}{0.8}
\newtheorem{proposition}{Proposition}
\newtheorem{definition}{Definition}
\newtheorem{theorem}{Theorem}
\newcommand\Rset{{\mathbb{R}}}
\newcommand\Nset{{\mathbb{N}}}
\newcommand\probm{{\mathbb{P}}}
\newcommand\expv{{\mathbb{E}}}
\newcommand\calR{{\mathcal{R}}}
\newcommand\calL{{\mathcal{L}}}
\title{Robustness Verification of Deep Reinforcement Learning Based Control Systems using Reward Martingales}
\author {
    Dapeng Zhi\textsuperscript{\rm 1},
    Peixin Wang\textsuperscript{\rm 2}\equalcontrib,
    Cheng Chen\textsuperscript{\rm 1},
    Min Zhang\textsuperscript{\rm 1}\equalcontrib
}
\begin{document}

\maketitle

\begin{abstract}
Deep Reinforcement Learning (DRL) has gained prominence as an effective approach for control systems. However, its practical deployment is impeded by state perturbations that can severely impact system performance. Addressing this critical challenge requires robustness verification about system performance, which involves tackling two quantitative questions: (i) how to establish guaranteed bounds for expected cumulative rewards, and (ii) how to determine tail bounds for cumulative rewards.
In this work, we present the first approach for robustness verification of DRL-based control systems by introducing   \emph{reward martingales}, which offer a rigorous mathematical foundation to characterize the impact of state perturbations on system performance in terms of cumulative rewards. Our verified results provide provably quantitative certificates for the two questions.
We then show that reward martingales can be implemented and trained via neural networks, against different types of control policies. Experimental results demonstrate that our certified bounds tightly enclose simulation outcomes on various DRL-based control systems, indicating the effectiveness and generality of the proposed approach. 
\end{abstract}

\vspace{-3mm}
\section{Introduction}

Deep Reinforcement Learning (DRL) is gaining widespread adoption in various control systems, including safety-critical ones like power systems~\citep{power_system_2,power_system_3} and traffic signal controllers~\cite{light_control_1,light_control_3}.
As these systems collect state information via sensors,  uncertainties inevitably originate from sensor errors, equipment inaccuracy, or even adversarial attacks~\cite{SAMDP,power_system_1,vulnerability_DRL}. 
In real-world scenarios, the robustness guarantee of their performance is of utmost importance when they are subjected to reasonable environmental perturbations and adversarial attacks. Failing to do so could lead to critical errors and a significant decline in performance, which may cause fatal consequences in safety-critical applications. 

A DRL-based control system's robustness is usually reflected in its performance variation, i.e., the cumulative rewards,  when the system is perturbed \cite{CORL}. The robustness verification refers to answering two quantitative questions: (i) how to establish guaranteed bounds for expected cumulative rewards, and (ii) how to determine tail bounds for cumulative rewards. However, the verification is a very challenging task. Firstly, DRL-based control systems are complex cyber-physical systems, making formal verification difficult  \citep{deshmukh2019formal}. Secondly, the inclusion of opaque AI models like Deep Neural Networks (DNNs) adds complexity to the problem \citep{larsen2022formal}. Thirdly, performance is measured statistically rather than by analytical calculations, lacking theoretical guarantees.

In this work, we propose a novel approach for formally verifying the robustness of DRL-based control systems.
By leveraging the concept of \emph{martingales} from probabilistic programming \citep{term_martingales,cost_martingales}, we establish provable upper and lower bounds for the expected cumulative rewards of DRL-based control systems under state perturbations. Specifically, we define  \emph{upper reward supermartingales (URS)} and \emph{lower reward submartingales (LRS)} and prove they provide theoretical guarantees in the system's certified reward range. Moreover, we extend our analysis to encompass tail bounds of rewards, utilizing a combination of martingales and Hoeffding's inequality~\cite{hoeffding1994probability}.
This refined approach can derive upper bounds for tail probabilities that show how system performance deviates from some predefined threshold, offering a more comprehensive understanding of the system's robustness.

We further show that reward martingales can be efficiently implemented and trained as  DNNs, as ranking martingales are trained  \cite{stab_martingales}, against various control policies. Given a DRL-based control system, we define a corresponding loss function and train a DNN repeatedly until the DNN satisfies the conditions of being a reward martingale or timeout. 
We identify that computing expected rewards is the difficult part in checking whether a trained DNN is a reward martingale and it varies in the training approaches of 
policies. If policies are implemented by DNNs on infinite and continuous state space, we take advantage of the over-approximation-based method \cite{stab_martingales}. We also propose an analytical method to compute expected values precisely when policies are trained on discretized abstract state space in recently emerging approaches \cite{trainify,li2022neural,drews2020proving}.

We intensively evaluate the effectiveness of our approach on four classic control problems, namely, MountainCar, CartPole, B1, and B2. Through rigorous quantitative robustness verification using our proposed method, we assess the performance of the corresponding DRL-based control systems. To demonstrate our approach's  effectiveness, we compare the verified lower and upper bounds, and tail bounds with the performance achieved through simulations under the same settings. Encouragingly, our experimental results demonstrate that our verified bounds tightly enclose the simulation outcomes.

In summary, this work makes three major contributions:
\begin{itemize}
		
	\item We introduce reward martingales and prove that they analytically  characterize both reward bounds and tail bounds to  the performance of perturbed DRL-based control systems, rendering us the first robustness verification approach to those systems.

    \item We show that	reward martingales can be represented and efficiently trained in the form of deep neural networks and propose corresponding validation approaches for policies trained by two different approaches. 
	
    \item We intensively evaluate our approach on four classic control problems with  control policies under two different training approaches, demonstrating the effectiveness and generality of the proposed approach.
\end{itemize}

\section{Related Work}

\paragraph{Qualitative Verification of DRL-based Control Systems} Formal verification of DRL-based control systems has received increasing attention for safety assurance in recent years. \citet{trainify} proposed a CEGAR-driven training
and verification framework that guarantees that the trained systems  satisfy the properties predefined in ACTL formulas.
\citet{verify_DRL_1} developed formal models of controllers executing under uncertainty and proposed new verification techniques based on abstract interpretation. 
\citet{verify_DRL_2} provided a new formulation for the safety properties to ensure that the agent always makes rational decisions. 
However, these works mainly focus on qualitative verification for specific properties but lack quantitative guarantees.

\paragraph{Robust Training of DRL Systems} Several attempts are made to improve DRL systems' robustness by means of formal verification~\cite{RADIAL,policy_smoothing}. 
For instance, \citet{CORL}  proposed to compute guaranteed lower bounds on state-action values to determine the optimal action under a worst-case deviation in input space.
\citet{RADIAL} designed adversarial loss functions by leveraging existing formal verification bounds w.r.t. neural network robustness.
\citet{SAMDP} studied the fundamental properties of state-adversarial Markov decision processes and developed a theoretically principled policy regularization.
These approaches focused on robust training rather than verification, and they have to rely on simulation to demonstrate the effectiveness of their approaches in robustness improvement.

\paragraph{Quantitative Verification in Stochastic Control Systems via Martingales} Robustness verification of  control systems is essentially a  quantitative verification problem, which provides certified guarantees to systems' quantitative properties such as stabilization time. 
Some studies emerged in this direction. \citet{stab_martingales} considered the problem of formally verifying almost-sure (a.s.) asymptotic stability in discrete-time nonlinear stochastic control systems and presented an approach for general nonlinear stochastic control problems with two aspects: using ranking supermartingales (RSMs) to certify a.s. asymptotic stability and presenting a method for learning neural network RSMs. 
\citet{reach_avoid_martingale} studied the problem of learning controllers for discrete-time non-linear stochastic dynamical systems with formal reach-avoid guarantees by combining and generalizing stability and safety guarantees with a tolerable probability threshold over the infinite time horizon in general Lipschitz continuous systems.  However, these works do not consider system robustness, a non-trivial property of DRL-based control systems.

\section{Preliminaries}\label{sec:prelim}

\paragraph{DRL-Based Control Systems}
In this work, we consider  DRL-based control systems where the control policies are implemented by  neural networks (NNs)  and suppose the networks are trained. 
Formally, a DRL-based control system is a tuple $M=(S,S_0,S_g,A,\pi,f,R)$, where $S\subseteq \Rset^n$ is the set of system states (possibly infinite), $S_0\subseteq S$ (resp. $S_g\subseteq S$) is the set of initial (resp. terminal)  states and $S_0\cap S_g=\emptyset$, $A$ is the set of actions, $\pi:S\rightarrow \Rset$ is the trained policy implemented by a neural network, $f:S\times A\rightarrow S$ is the system dynamics, and $R:S\times A\times S\rightarrow \Rset$ is the reward function. \footnote{Here we focus on deterministic system dynamics and policies, and leave the analysis of probabilistic ones as future work.}

A trained DRL-based control system $M$ is a decision-making system that continuously interacts with the environment. 
At each time step $t\in\Nset_0$, it observes a state $s_t$ and feeds $s_t$ into its planted NN to compute the optimal action $a_t=\pi(s_t)$ that shall be taken. Action $a_t$ is then performed, which transits $s_t$ into the next state $s_{t+1}=f(s_t,a_t)$ via the system dynamics $f$ and earns some reward $r_{t+1} = R(s_t,a_t,s_{t+1})$. Given an  initial state $s_0 \in S_0$, a sequence of states generated during interaction is called an \emph{episode}, denoted as ${\cal e}=\{s_t\}_{t\in\Nset_0}$.

\paragraph{State Perturbations}
During  interaction with environments,
the observed states of systems may be perturbed and actions are computed based on the perturbed states. Formally, an observed state at time $t$ is $\hat{s}_t:=s_t+\delta_t$ where $\delta_t\sim \mu$ and $\mu$ is a probability distribution over $\Rset^n$. 
Due to perturbation, the actual successor state is 
$s_{t+1}:=f(s_t,\hat{a}_t)$ with $\hat{a}_t:=\pi(\hat{s}_t)$ and the reward is $r_{t+1}:=R(s_t,\hat{a}_t,s_{t+1})$. Note that the successor state and reward are calculated according to the actual state and the action on the perturbed state, and this update is common~\cite{SAMDP}. We then denote a DRL-based control system $M$ perturbed by the noise distribution $\mu$ as $M_\mu=(S,S_0,S_g,A,\pi,f,R,\mu)$.

\paragraph{Probability Space} Given an $M_\mu$, for each $s_0\in S_0$, there exists a \emph{probability space} $(\Omega_{s_0},\mathcal{F}_{s_0},\probm_{s_0})$ 
such that $\Omega_{s_0}$ is the set of all episodes that start from $s_0$ by the environmental interaction, $\mathcal{F}_{s_0}$ is a $\sigma$-algebra over $\Omega_{s_0}$ (i.e., a collection of subsets of $\Omega_{s_0}$ that contains the empty set $\emptyset$ and is closed under complementation and countable union), and $\probm_{s_0}:\mathcal{F}\rightarrow [0,1]$ is a probability measure on $\mathcal{F}_{s_0}$. We also denote 
the expectation operator in this probability space by $\expv_{s_0}$.

\paragraph{Termination Time} When states are perturbed, actions may become sub-optimal, which may cause non-termination or premature termination, i.e., the system never or untimely reaches a terminal state. 
Thus, prerequisites for studying the robustness of DRL-based control systems are to guarantee the system is terminating and know its \emph{termination time}.  Intuitively, the termination time of an episode is the number of steps it takes for the episode to reach the terminal set or $\infty$ if it never reaches $S_g$.

Formally, the \emph{termination time} of an $M_\mu$ is a random variable defined on episodes as  $T(\{s_t\}_{t\in\Nset_0}):=\mathrm{min}\ \{t\in\Nset_0\mid s_t\in S_g\}$. We define $\mathrm{min}\ \emptyset=\infty$. 
A control system is \emph{finitely terminating} if
it has finite expected termination time over all episodes, i.e., $\expv_{s_0}[T]<\infty$ for all states $s_0\in S_0$. Besides, a system has the \emph{concentration property} if there exist two constants $a,b>0$ such that for sufficiently large $n\in\Nset$, we have $\probm_{s_0}(T> n)\le a\cdot \mathrm{exp}(-b\cdot n)$ for all states $s_0\in S_0$, i.e. if the probability that the system executes $n$ steps or more decreases exponentially as $n$ grows.

\section{Problem Formulation}

\paragraph{Model Assumptions} Given a DRL-based control system, it is assumed  that its state space $S$ is compact in the Euclidean topology of $\Rset^n$, its system dynamics $f$ and trained policy $\pi$ are Lipschitz continuous. This assumption is common in control theory \cite{reach_avoid_martingale}. Besides, we further assume that once the system state enters $S_g$, it will stop and no more actions will be taken, i.e., for any $s_T\in S_g$, $s_{T+1}=s_T$. The control systems of interest are assumed to be finitely terminating, which can be checked by the stability verification approach  \cite{stab_martingales}. For perturbation, we  assume that a noise distribution $\mu$ either has bounded support
or is a product of independent univariate distributions.

\begin{definition}[Cumulative Rewards]
    Given an  $M_\mu$ with termination time $T$, its  \emph{cumulative reward} is a random variable defined on episodes as $\calR(e):=\sum_{t=0}^{T} r_t$,
	where $e=\{s_t\}_{t=0}^{T}$ is an episode and $r_t$ is the step-wise reward of $e$ that is determined by the reward function $R(\cdot)$ in $M_\mu$ with $r_0\equiv 0$.	
\end{definition}
Intuitively, the cumulative reward is the sum of all step-wise rewards until the system reaches a terminal state. It is a random variable and varies from different episodes.

\paragraph{Robustness Verification Problems of $M_\mu$} Given an $M_\mu$ and an initial state $s_0\in S_0$, we are interested in the following two robustness problems:
\begin{enumerate}
	\item What are the upper and lower bounds for $\expv_{s_0}[\calR]$?
	\item Given a reward $c$, what is the tail bound of $\probm_{s_0}(\calR\ge c)$ (resp. $\probm_{s_0}(\calR\le c)$) 
	if $c$  is larger (resp. smaller) than the upper (resp. lower) bound of $\expv_{s_0}[\calR]$? 
\end{enumerate}
The first problem concerns certified upper and lower bounds of expected cumulative rewards when systems are perturbed. The second problem considers two cases. Provided a cumulative reward $c$ that is greater than the upper bound of the expected cumulative reward $\expv_{s_0}[\calR]$, we are interested in the tail probability that a system can achieve a reward greater than $c$. The dual problem is to compute the tail probability that the system can achieve a cumulative reward lower than $c$, when $c$ is less than the lower bound of $\expv_{s_0}[\calR]$. A higher tail probability implies worse robustness because it indicates a higher probability that the reward gets out of the certified range of expected cumulative rewards.

\section{Reward Martingales and the Fundamentals}
In this section, we present our theoretical results about the two robustness verification problems by introducing the notion of \textit{reward martingales}. It is the foundation of  reducing the robustness verification problems of perturbed DRL-based control systems to the analysis of a stochastic process.

In the following, we fix a perturbed DRL-based control system $M_\mu=(S,S_0,S_g,A,\pi,f,R,\mu)$ and  denote the difference $S\setminus S_g$ by $\overline{S_g}$ for  the set of non-terminal  states in $S$.

To define reward martingales, we first need the notion of pre-expectation of functions. 
Given a function $h(\cdot)$, the pre-expectation $pre_h(\cdot)$ of $h(\cdot)$ is the reward of the current step plus the expected value of $h(\cdot)$ in the next step of the system.

\begin{definition}[Pre-Expectation]
\label{def:pre-ex}
Given an $M_\mu$ and a function $h:S\to \Rset$, the pre-expectation of $h$ is a  function $pre_h:S\to\Rset$, such that:
\begin{align}
pre_h(s) =\left\{\begin{array}{ll}
h(s) & \text{if } s\in S_g\\
r+\expv_{\delta\sim \mu}[h(f(s,\pi(s+\delta)))] & \text{if } s\in \overline{S_g}
\end{array}\right.\notag 
\end{align}
where, $r=R(s,\pi(s),f(s,\pi(s)))$ is the reward of performing action $\pi(s)$ in state $s$. 
\end{definition}

We next define the notion of reward martingales. First, we begin with the definition of URS which can be served as an upper bound for the expected cumulative reward of $M_\mu$.

\begin{definition}[Upper Reward Supermartingales, \textbf{URS}]
\label{def:URS}
Given an $M_\mu$, a function $h: S \rightarrow {\mathbb R}$ is an upper reward supermartingale (URS) of $M_\mu$ if there exist  $K,K'\in\Rset$ such that:
\begin{align}
\forall s\in S_g, K\le h(s)\le K'\tag{Boundedness}\\
\forall s\in \overline{S_g}, pre_h(s)\le h(s)\tag{Decreasing Pre-Expectation}
\end{align}

\end{definition}
Intuitively, the first condition says that the values of the
URS at terminal states should always be bounded, and the second condition specifies that for all non-terminal  states, the
pre-expectation is no more than the value of the URS itself.

Similar to the definition of URS (\Cref{def:URS}), we define LRS as follows and will employ it as a lower bound for the expected cumulative reward of $M_\mu$. 
\begin{definition}[Lower Reward Submartingales, \textbf{LRS}]
\label{def:LRS}
Given an $M_\mu$, a function $h:S\to \Rset$ is a lower reward submartingale (LRS) of $M_\mu$  if there exist  $K,K'\in\Rset$ such that:
\begin{align}
\forall s\in S_g.K\le h(s)\le K'\tag{Boundedness}\\
\forall s\in \overline{S_g}. pre_h(s)\ge h(s)\tag{Increasing Pre-Expectation}
\end{align}
\end{definition}

Compared with the definition of URS, the only difference is that 
the second condition of LRS specifies that the pre-expectation is no less than the value of the LRS itself at all non-terminal  states. We call $K,K'$  the bounds of $h$ if $h$ is a URS or LRS.

\begin{definition}[Difference-boundedness]\label{def:diff-bound}
Given an $M_\mu$ and a function $h:S\to \Rset$, $h$ is  difference-bounded if 
there exists $m\in \Rset$ such that for any state $s\in S$, $|h(f(s,\pi(s)))-h(s)|\le m$.
\end{definition}

Based on the URS and LRS, we have \Cref{thm:bounds}, stating that there must exist upper and lower bounds of the expected cumulative rewards of the perturbed system when we can calculate URS and LRS for the system.  
\begin{theorem}[Bounds for Expected Cumulative Rewards]\label{thm:bounds}
	Suppose an $M_\mu$ has a difference-bounded URS (resp. LRS) $h$ and $K,K'\in\Rset$ are the bounds of $h$. For each state $s_0\in S_0$, we have 
	\begin{align}
	&\expv_{s_0}[\calR]\le h(s_0)-K.\tag{Upper Bound}\\
	 (\textit{resp.}\  &\expv_{s_0}[\calR]\ge h(s_0)-K') \tag{Lower Bound}
	\end{align} 
\end{theorem}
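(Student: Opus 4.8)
The plan is to reduce both bounds to a single application of the optional stopping theorem, after turning the running cumulative reward together with the value of $h$ at the current state into a supermartingale (for URS) or a submartingale (for LRS). I treat the upper bound; the lower bound is symmetric. First I would fix $s_0\in S_0$ and work in the probability space $(\Omega_{s_0},\mathcal{F}_{s_0},\probm_{s_0})$ with the natural filtration $\{\mathcal{F}_t\}_{t\in\Nset_0}$ generated by $s_0,\dots,s_t$. Writing $\calR_t=\sum_{i=0}^{t}r_i$ for the reward accrued through step $t$ (so $\calR_0=0$ and $\calR_T=\calR$), define
\begin{align}
X_t:=\calR_t+h(s_t).\notag
\end{align}
Then $X_0=h(s_0)$, and for a non-terminal state $s_t\in\overline{S_g}$ the definition of pre-expectation (\Cref{def:pre-ex}) gives $\expv_{s_0}[r_{t+1}+h(s_{t+1})\mid\mathcal{F}_t]=pre_h(s_t)$, where the conditional expectation integrates over the independent noise $\delta_t\sim\mu$ that produces $s_{t+1}=f(s_t,\pi(s_t+\delta_t))$. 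Consequently,
\begin{align}
\expv_{s_0}[X_{t+1}\mid\mathcal{F}_t]=\calR_t+pre_h(s_t)\le\calR_t+h(s_t)=X_t\notag
\end{align}
by the Decreasing Pre-Expectation condition of \Cref{def:URS}; and for $s_t\in S_g$ the system is frozen with no further reward, so $X_{t+1}=X_t$. Hence $\{X_t\}$ is a supermartingale.

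Next I would verify the hypotheses of the optional stopping theorem at the termination time $T$. Finite termination supplies $\expv_{s_0}[T]<\infty$, so it remains to control the conditional increments uniformly: difference-boundedness (\Cref{def:diff-bound}) bounds the $|h(s_{t+1})-h(s_t)|$ contribution, while compactness of $S$ and continuity of $R$ bound the one-step reward, together producing a constant $c$ with $\expv_{s_0}[|X_{t+1}-X_t|\mid\mathcal{F}_t]\le c$ for all $t$. The version of the optional stopping theorem valid under $\expv_{s_0}[T]<\infty$ together with bounded increments then yields
\begin{align}
\expv_{s_0}[X_T]\le\expv_{s_0}[X_0]=h(s_0).\notag
\end{align}

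Finally I would read the bound off at the stopping time. Since $s_T\in S_g$, we have $X_T=\calR+h(s_T)$, and the Boundedness condition gives $h(s_T)\ge K$, hence $X_T\ge\calR+K$ pointwise. Taking expectations and chaining with the previous display gives $\expv_{s_0}[\calR]+K\le h(s_0)$, i.e.\ $\expv_{s_0}[\calR]\le h(s_0)-K$. For the LRS case the identical process $X_t$ is a submartingale by the Increasing Pre-Expectation condition, optional stopping reverses to $\expv_{s_0}[X_T]\ge h(s_0)$, and $h(s_T)\le K'$ gives $X_T\le\calR+K'$, producing $\expv_{s_0}[\calR]\ge h(s_0)-K'$.

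I expect the main obstacle to be the rigorous invocation of optional stopping, because $T$ is almost surely finite but unbounded, so the elementary bounded-time version does not apply. This is precisely where difference-boundedness earns its role: it is the ingredient that bounds the martingale increments and thereby guarantees that $\expv_{s_0}[X_{\min(t,T)}]\to\expv_{s_0}[X_T]$, so that the supermartingale inequality survives passage to the limit. Getting this integrability and convergence step right, rather than the one-step drift algebra, is the crux of the argument.
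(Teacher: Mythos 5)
Your proposal is correct and follows essentially the same route as the paper's proof: your process $X_t=\calR_t+h(s_t)$ is exactly the paper's $Y_n=h(\overline{s}_n)+\sum_{i=0}^n r_i$, the supermartingale/submartingale property is derived from the pre-expectation conditions in the same way, and the conclusion is obtained by the same version of the optional stopping theorem (finite expected stopping time plus increments bounded via difference-boundedness of $h$ and the bounded reward), followed by the same use of the boundedness condition at $s_T\in S_g$. Your closing remark correctly identifies the OST integrability step as the point where difference-boundedness is needed, which matches the paper's argument.
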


\begin{proof}[Proof Sketch]
 For upper bounds, we define the stochastic process  $\{X_n\}_{n=0}^{\infty}$ as $X_n:= h( \overline{s}_n)$, where $h$ is an URS and $\overline{s}_n$ is a random (vector) variable representing value(s) of the state at the $n$-th step of an episode. Furthermore, we construct the stochastic
process $\{Y_n\}_{n=0}^{\infty}$ such that $Y_n := X_n + \textstyle\sum_{i=0}^{n} r_i$, where $r_i$ is the reward of the $i$-th step.
Let $T$ be termination time of $M_\mu$. We prove that $\{Y_n\}_{n=0}^{\infty}$ satisfies the prerequisites of the Optional Stopping Theorem (OST)~\cite{williams1991}. 
This proof depends on the assumption that $M_\mu$ is finitely terminating and $h$ is difference-bounded. Then by applying OST, we have that $\expv[Y_T]\le \expv[Y_0]$. By the boundedness condition  in \cref{def:URS}, we obtain that $\calR=\sum_{i=0}^T r_i=Y_T-X_T\le Y_T-K$. Finally, we conclude that $\expv_{s_0}[\calR]\le \expv[Y_T]-K\le \expv[X_0]-K=h(\overline{s}_0)-K$. The proof of lower bounds is similar. 
\end{proof}

Lastly in this section, we present our fundamental results about tail bounds of cumulative rewards, with the aid of reward martingales, as formulated by the following theorem. 

\begin{theorem}[Tail Bounds for Cumulative Rewards]\label{thm:tail-bounds}
Suppose that an $M_\mu$ has the concentration property and a difference-bounded URS (\emph{resp.} LRS) $h$ with bounds  $K,K'\in\Rset$. Given an initial state $s_0\in S_0$, if a reward $c>h(s_0)-K$ (resp. $c<h(s_0)-K'$), we have 
\begin{align}
&\probm_{s_0}(\calR\ge c)\le \alpha+ \beta\cdot \mathrm{exp}(-\gamma\cdot c^2 ) \label{eq:tail-1}\\ (\emph{resp.}\ & \probm_{s_0}(\calR\le c)\le \alpha+ \beta\cdot \mathrm{exp}(-\gamma\cdot c^2 ) \label{eq:tail-2}
\end{align} 
where, $\alpha,\beta,\gamma$ are positive constants derived from  $M_\mu$, the concentration property and $h$, respectively.	
\end{theorem}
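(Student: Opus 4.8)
The plan is to convert the tail probability of the cumulative reward $\calR$ into a tail probability of the reward supermartingale built in the proof of \Cref{thm:bounds}, and then control that tail by splitting episodes according to their length: short episodes are handled by a Hoeffding/Azuma-type concentration inequality for bounded-difference supermartingales, while the contribution of long episodes is absorbed by the concentration property. I describe the URS case for \eqref{eq:tail-1}; the LRS case for \eqref{eq:tail-2} is symmetric with the inequalities reversed.

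First I would reuse the process $\{Y_n\}$ with $Y_n := h(\overline{s}_n)+\sum_{i=0}^{n} r_i$ and its stopped version $\{Y_{n\wedge T}\}$, which is a supermartingale for a URS $h$ (exactly what the proof of \Cref{thm:bounds} establishes). The key quantitative input is a uniform increment bound: on $\{T\ge n\}$ we have $Y_n-Y_{n-1}=\big(h(\overline{s}_n)-h(\overline{s}_{n-1})\big)+r_n$, and since $h$ is difference-bounded by some $m$ and the one-step reward is bounded by some $r_{\max}$ (as $S$ is compact and $R$ continuous), $|Y_{n\wedge T}-Y_{(n-1)\wedge T}|\le \Delta := m+r_{\max}$ almost surely, the increments being $0$ after termination. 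Writing $\mu_0 := h(s_0)-K$, the boundedness condition of \Cref{def:URS} gives, on $\{T\le N\}$, that $Y_{N\wedge T}=Y_T=h(\overline{s}_T)+\calR\ge K+\calR$, so that $\{\calR\ge c,\ T\le N\}\subseteq\{Y_{N\wedge T}\ge c+K\}$.

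Next I would fix a threshold $N\in\Nset$ large enough for the concentration property to apply and split $\probm_{s_0}(\calR\ge c)\le \probm_{s_0}(\calR\ge c,\ T\le N)+\probm_{s_0}(T>N)$. The second term is at most $a\cdot\exp(-b\cdot N)$, a constant independent of $c$. For the first term, the event containment above together with the Azuma--Hoeffding inequality applied to $\{Y_{n\wedge T}\}$ at horizon $N$ gives $\probm_{s_0}(\calR\ge c,\ T\le N)\le \probm_{s_0}(Y_{N\wedge T}-Y_0\ge c-\mu_0)\le \exp\!\big(-(c-\mu_0)^2/(2N\Delta^2)\big)$, where $c-\mu_0>0$ by hypothesis. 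Since $\mu_0$ is a fixed shift, for all sufficiently large $c$ we have $(c-\mu_0)^2\ge \tfrac12 c^2$, while on the remaining bounded range of $c$ the left-hand probability is trivially at most $1$; absorbing both contributions into a single multiplicative constant $\beta$ converts this into $\beta\cdot\exp(-\gamma c^2)$ with $\gamma=1/(4N\Delta^2)$. Setting $\alpha:=a\cdot\exp(-b\cdot N)$ then yields \eqref{eq:tail-1}. For \eqref{eq:tail-2} one runs the same argument with the LRS submartingale, the reversed containment $\{\calR\le c,\ T\le N\}\subseteq\{Y_{N\wedge T}\le c+K'\}$, and the lower-tail Azuma bound.

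I expect the main obstacle to be the interaction between the random, a priori unbounded termination time $T$ and the concentration inequality: Azuma--Hoeffding only yields a useful estimate over a fixed horizon $N$ (its exponent degrades like $1/N$), so the argument genuinely needs the concentration property to ensure that episodes longer than $N$ contribute only an exponentially small, $c$-independent mass $\alpha$. A secondary technical point is verifying the increment bound $\Delta$ under the perturbed dynamics $\overline{s}_n=f(\overline{s}_{n-1},\pi(\overline{s}_{n-1}+\delta_{n-1}))$ rather than the unperturbed transition of \Cref{def:diff-bound}, which is where compactness of $S$ and Lipschitz continuity of $f,\pi,h$ enter. Finally, because $\alpha$ is a fixed positive constant, there is an intrinsic trade-off governed by $N$ (larger $N$ shrinks $\alpha$ but weakens $\gamma$), which is precisely why the certificate takes the additive form $\alpha+\beta\exp(-\gamma c^2)$ rather than a single exponential.
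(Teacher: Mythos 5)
Your proposal is correct and follows the same skeleton as the paper's proof: build the compensated process $Y_n = h(\overline{s}_n)+\sum_{i=0}^n r_i$, show it is a supermartingale with bounded increments, translate $\{\calR\ge c\}$ into a deviation event for $Y$ via the terminal boundedness condition $h(\overline{s}_T)\ge K$, split on $\{T\le N\}$ versus $\{T>N\}$, absorb the long episodes into a constant $\alpha$ via the concentration property, and control the short ones with a Hoeffding-type martingale inequality. The one substantive difference lies in how the bounded-horizon term is handled. The paper bounds $\probm_{s_0}(Y_T-Y_0\ge\lambda \wedge T\le n^*)$ by $\probm_{s_0}(Y_1-Y_0\ge\lambda)$ and applies Hoeffding's inequality at horizon $1$, which yields a $\gamma$ independent of $n^*$ but rests on an event containment that is not justified as written (the deviation of $Y$ over up to $n^*$ steps need not be witnessed in the first step). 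You instead apply Azuma--Hoeffding to the stopped process $Y_{n\wedge T}$ at the fixed horizon $N$, which is the rigorous route; the price is that your exponent degrades as $1/N$, so $\gamma$ depends on the horizon chosen from the concentration property, and you need the extra (correct) observation that $(c-\mu_0)^2\ge c^2/2$ for large $c$ with the bounded range of $c$ absorbed into $\beta$. Both arguments deliver the claimed additive form $\alpha+\beta\exp(-\gamma c^2)$; yours trades a slightly weaker constant for a cleaner justification of the concentration step. Two minor points to tidy up: your increment bound should be $\Delta=m+\max(|r_{\max}|,|r_{\min}|)$ rather than $m+r_{\max}$ if rewards can be negative, and the difference-boundedness you need is exactly along the perturbed transitions, which (as you note) follows from compactness and the Lipschitz assumptions but should be stated as the version of \cref{def:diff-bound} actually invoked.
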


To prove~\cref{eq:tail-1}, we construct a stochastic process $\{Y_n\}_{n=0}^{\infty}$ such that $Y_n:=h(\overline{s}_n)+\sum_{i=0}^n r_i$ where $h$ is an URS, $\overline{s}_n$ and $r_i$ are defined as those in the proof sketch of \cref{thm:bounds}. 
By \cref{def:URS}, we prove that $\{Y_n\}_{n=0}^{\infty}$ is a supermartingale. 
 Then by the difference-bounded property of $h$ (\cref{def:diff-bound}), we derive the upper bound of $\probm_{s_0}(\calR\ge c)$ by the concentration property of $M_\mu$ and Hoeffding's Inequality on Martingales~\cite{hoeffding1994probability}. \cref{eq:tail-2} is obtained in the same manner.

\section{Neural Network-Based  Reward Martingales}
In this section, we present our method for training and validating neural network martingales including URS and LRS. 
Martingales are not necessarily polynomial functions and can be as complex as deep neural networks, as shown by the pioneering works \cite{RSM_PP,stab_martingales,reach_avoid_martingale,survey_fan}.   
Likewise, we show that reward martingales can be also achieved in the form of DNNs.

Our method consists of two modules that alternate
within a loop: training and validating. In each loop iteration, we train a candidate reward martingale in the form of
a neural network which is then passed to the validation. 
If the validation result is false, we compute a set of counterexamples for future training. This iteration is repeated until a trained candidate is validated or a given timeout is reached. The whole process is sketched in ~\Cref{alg:train_verify}.

\SetAlgoSkip{SkipBeforeAndAfter}
\begin{algorithm}[t]
	\SetKwData{Left}{left}\SetKwData{This}{this}\SetKwData{Up}{up}
	\SetKwFunction{Union}{Union}\SetKwFunction{FindCompress}{FindCompress}
	\SetKwInOut{Input}{input}\SetKwInOut{Output}{output}
	\SetVlineSkip{1mm}
	\SetNlSkip{1mm}

	\caption{Sketch of Reward Martingale Training.}
	\label{alg:train_verify}
	\Input{Perturbed System $M_\mu$,  Granularity $\tau$, Refinement step length $\xi$.}
	\Output{Trained DNN $h$ or UNKNOWN.}
        $\tilde{S} \leftarrow Discretize(S, \tau)$; \\
	$h \leftarrow Initialize(\cdot)$; \\
	\While{timeout not reached}{
		$h$$~\leftarrow~$$ \mathit{Train}(h,\tilde{S})$ \tcp*[r]{\mbox{Train a candidate.}}
            \eIf(\tcp*[h]{Validate.}){\cref{eq:bd-check}$\wedge$ (\cref{formula:verify_LRS}$\vee$ \cref{formula:verify_URS})}
            {Return $h$ \tcp*[r]{Return $h$ once it is valid.}}
            {$\tau \leftarrow \tau-\xi$  \tcp*[r]{Make $\tau$ smaller.}
            $\tilde{S} \leftarrow \mathit{Discretize}(S, \tau)\cup \mathit{Counterexamples}(h)$;\\} 
        {Return UNKNOWN.}
	}
\end{algorithm}

\subsection{Training Candidate Reward Martingales}

The training phase involves two important steps, i.e., training data construction and loss function definition. 

 \vskip 1mm
 \paragraph{Discretizing Training Data} Since the state space $S$ is possibly continuous and infinite, to boost the training 
we choose a finite set of states and then train reward martingale candidates on it.
This can be achieved by discretizing the state space $S$ and constructing a \emph{discretization} $\tilde{S}\subseteq S$ such that for each $s \in S$, there is a $\tilde{s} \in \tilde{S}$ with $||s-\tilde{s}||_1 < \tau $,
where $ \tau>0$ is called the granularity of $\tilde{S}$. As $S$ is compact and thus bounded, this discretization can be computed by simply picking vertices of a grid with sufficiently small cells. 
For the training after validation failure, $\tilde{S}$ is constructed on a set of counterexamples and a new finite set of states triggered by a smaller  $\tau$.
Once the discretization $\tilde{S}$ is obtained, we construct three finite sets $S_{\mathrm{C1}}:=\tilde{S}\cap S_g$,  $S_{\mathrm{C2}}:=\tilde{S}\cap \overline{S_g}$ 
and $S_{\mathrm{C3}}:=\tilde{S}\cap S_0$ 
used for the training process.

\vskip 1mm
\paragraph{Loss Functions of URS}
A candidate URS is initialized as a neural network $h_\theta$ w.r.t. the network parameter $\theta$. Then $h_\theta$ is learned by minimizing the following loss function:
\begin{equation}
\calL_{URS}(\theta):=k_1 \cdot \calL_{C1}(\theta)+k_2 \cdot \calL_{C2}(\theta)+k_3 \cdot \calL_{C3}(\theta)
\end{equation}
where $k_i, \ i=1,2,3$ are the algorithm parameters balancing the loss terms.

The first loss term is defined via the boundedness condition of URS in~\cref{def:URS} as:
\begin{eqnarray}
\calL_{C1}(\theta)=\frac{1}{|S_{\mathrm{C1}}|} \sum\limits_{s\in S_{\mathrm{C1}}} \left(\mathop{\mathrm{max}}\{h_\theta(s)-K',0\}+ \mathop{\mathrm{max}} \{K-h_\theta(s),0\}\right)
\end{eqnarray}
Intuitively, a loss will incur if either $h_\theta(s)$ is not bounded from above by $K'$ or below by $K$ for any $s\in S_{\mathrm{C1}}$.

The second loss term is defined via the decreasing pre-expectation condition of URS in~\cref{def:URS} as: 	
\begin{eqnarray}
\calL_{C2}(\theta)=\frac{1}{|S_{\mathrm{C2}}|}\sum\limits_{s\in S_{\mathrm{C2}}} \left(  \mathop{\mathrm{max}}\{ \sum\limits_{s'\in \mathcal{D}_s}\frac{h_\theta(s')}{N} -h_\theta(s)+\zeta ,0\}  \right),
\end{eqnarray}
where for each $s\in S_{\mathrm{C2}}$, $\mathcal{D}_s$ is the set of its successor states such that $\mathcal{D}_s:=\{s'\mid s'=f(s,\pi(s+\delta_i)), \delta_i\sim \mu, i\in [1,N]  \}$,
$N>0$ is the sample number of successor states. Note that $h_\theta$ is a neural network, so it is intractable to directly compute the closed form of its expectation. Instead, we use the mean of $h_\theta(\cdot)$ at the $N$ successor states to approximate the expected value $\expv_{\delta\sim \mu}[h(f(s,\pi(s+\delta)))]$ for each $s\in S_{\mathrm{C2}}$, 
and $\zeta$  to tighten the decreasing pre-expectation 
condition. Details will be explained in \Cref{theorem:verify_RM}.

The third loss term is the regularization term used to assure the tightness of upper bounds from URS:
\begin{eqnarray}
\calL_{C3}(\theta):=\frac{1}{|S_{\mathrm{C3}}|}\sum\limits_{s\in S_{\mathrm{C3}}} \left( \mathop{\mathrm{max}}\{h_\theta(s) - \overline{u},0\}\right)
\end{eqnarray}
where $\overline{u}$ is a  hyper-parameter enforcing the upper bounds always under some tolerable thresholds, which makes the upper bounds as tight as possible. 

\vskip 1mm
\paragraph{Loss Functions of LRS} Like URS, a candidate neural network LRS $h_\theta$ w.r.t. the parameter $\theta$ is learned by minimizing the loss function
\begin{equation}
\calL_{LRS}(\theta):=k_1 \cdot \calL_{C1'}(\theta)+k_2 \cdot \calL_{C2'}(\theta)+k_3 \cdot \calL_{C3'}(\theta)
\end{equation}
where $k_i, \ i=1,2,3$ are hyperparameters balancing the loss terms.  $\calL_{C1'},\calL_{C2'}$
are defined based on the LRS conditions in \cref{def:LRS}, while $\calL_{C3'}$ is the regularization term used to assure the tightness of lower bounds from LRS:
\begin{linenomath*}
\begin{equation}
\begin{split}
     & \calL_{C1'}(\theta)=\frac{1}{|S_{\mathrm{C1}}|} \sum\limits_{s\in S_{\mathrm{C1}}} \left(\mathop{\mathrm{max}}\{h_\theta(s)-K',0\}+ \mathop{\mathrm{max}} \{K-h_\theta(s),0\}\right), \\
     & \calL_{C2'}(\theta)=\frac{1}{|S_{\mathrm{C2}}|}\sum\limits_{s\in S_{\mathrm{C2}}} \left(  \mathop{\mathrm{max}}\{ h_\theta(s)-  \sum\limits_{s'\in\mathcal{D}_s}\frac{h_\theta(s')}{N}  -\zeta' ,0\}  \right),\\
     & \calL_{C3'}(\theta)=\frac{1}{|S_{\mathrm{C3}}|}\sum\limits_{s\in S_{\mathrm{C3}}} \left( \mathop{\mathrm{max}}\{  \underline{l} -h_\theta(s),0\}\right) ,
\end{split}
\notag \end{equation}
\vskip -3ex 
\end{linenomath*}
\noindent where $\zeta'$ is used to make the increasing pre-expectation condition stricter (see~\cref{theorem:verify_RM}), and $\underline{l}$ is a  hyper-parameter that enforces the lower bounds are as tight as possible by incentivizing not exceeding some tolerable thresholds.

\subsection{Reward Martingale Validation}
A candidate URS (resp. LRS) can be validated if it meets the conditions in ~\Cref{def:URS} (resp. ~\Cref{def:LRS}). 
Because candidate URS and LRS are neural networks, they are Lipschitz continuous~\cite{RuanHK18}. 
Thus, the difference-bounded condition (\cref{def:diff-bound}) is  satisfied straightforwardly. 
For the boundedness condition, we can check  
\begin{align}\label{eq:bd-check}
   \mathop{\mathrm{inf}}\limits_{s\in S_g} h(s)\ge K \text{ and } \mathop{\mathrm{sup}}\limits_{s\in S_g} h(s)\le K'
\end{align}
using the interval bound propagation approach  ~\cite{Interval_Bound_Propagation,interval_pr}. 
When a state $s\in S_g$ violates  \cref{eq:bd-check}, it is treated as a counterexample and added to $\tilde{S}$ for future training.

For the decreasing and increasing pre-expectation conditions in \cref{def:URS,def:LRS}, 
~\Cref{theorem:verify_RM} establishes 
two corresponding sufficient conditions, which are easier to check.

\begin{theorem}
\label{theorem:verify_RM}
Given an $M_{\mu}$ and a function $h:S\rightarrow\Rset$, 
we have 
$pre_h(s)\le h(s)$ for any state $s\in \overline{S_g}$ if the formula below 
\begin{align}\label{formula:verify_URS}
\expv_{\delta\sim \mu}[h(f(\tilde{s},\pi(\tilde{s}+\delta)))]\le h(\tilde{s})-\zeta
\end{align} holds 
for any state $\tilde{s}\in \tilde{S}\cap  \overline{S_g}$, where $\zeta=r_{\mathrm{max}}+ \tau\cdot L_h\cdot (1+L_f\cdot (1+L_\pi))$ with  $L_f,L_\pi,L_h$ being the Lipschitz constants of $f,\pi,h$, 
and $r_{\mathrm{max}}$ being the maximum value of $R$, respectively. \\
Analogously, we have  
$pre_h(s)\ge h(s)$ for any state $s\in \overline{S_g}$ if: 
\begin{align}\label{formula:verify_LRS}
\expv_{\delta\sim \mu}[h(f(\tilde{s},\pi(\tilde{s}+\delta)))]\ge h(\tilde{s})-\zeta'
\end{align} holds 
for any state $\tilde{s}\in \tilde{S}\cap \overline{S_g}$, where $\zeta'=r_{\mathrm{min}}-\tau\cdot L_h\cdot (1+L_f\cdot (1+L_\pi))$ with $r_{\mathrm{min}}$ being the minimum  value of $R$.
\end{theorem}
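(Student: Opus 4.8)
The plan is to bound the gap between the true pre-expectation $pre_h(s)$ at an arbitrary non-terminal state $s\in\overline{S_g}$ and the discretized quantity evaluated at the nearest grid point $\tilde{s}\in\tilde{S}\cap\overline{S_g}$, so that the checkable inequality~\eqref{formula:verify_URS} at $\tilde{s}$ forces the desired inequality $pre_h(s)\le h(s)$ everywhere. First I would fix $s\in\overline{S_g}$ and, using the defining property of the discretization, select $\tilde{s}\in\tilde{S}$ with $\|s-\tilde{s}\|_1<\tau$; I would aim to do this with $\tilde{s}\in\overline{S_g}$, which is where the discretized training set $S_{\mathrm{C2}}$ lives. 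The key is to recall the definition $pre_h(s)=r+\expv_{\delta\sim\mu}[h(f(s,\pi(s+\delta)))]$ from \Cref{def:pre-ex} and compare it term by term against the value at $\tilde{s}$.

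The main work is a Lipschitz perturbation estimate, carried out in the order the constants appear in $\zeta$. I would decompose
\begin{align}
pre_h(s)-\expv_{\delta\sim\mu}[h(f(\tilde{s},\pi(\tilde{s}+\delta)))]\notag
\end{align}
into the reward contribution $r$ and the difference of the two expectations. For the reward part, $r=R(s,\pi(s),f(s,\pi(s)))\le r_{\mathrm{max}}$, which accounts for the $r_{\mathrm{max}}$ summand in $\zeta$. For the expectation part, I would bound it pointwise inside the expectation (using monotonicity of $\expv$) by tracking how the displacement $\|s-\tilde{s}\|_1<\tau$ propagates through the composition $h\circ f(\cdot,\pi(\cdot+\delta))$: the same noise realization $\delta$ is used for both $s$ and $\tilde{s}$, so $\pi$ sees inputs $s+\delta$ and $\tilde{s}+\delta$ differing by at most $\tau$, contributing a factor $L_\pi$; the dynamics $f$ takes the pair $(s,\pi(s+\delta))$ versus $(\tilde{s},\pi(\tilde{s}+\delta))$, whose combined input difference is at most $\tau(1+L_\pi)$, contributing $L_f$; and finally $h$ contributes $L_h$. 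Chaining these gives a bound of $\tau\cdot L_h\cdot(1+L_f\cdot(1+L_\pi))$ on the expectation difference, which is exactly the remaining part of $\zeta$. Adding the two pieces yields $pre_h(s)\le \expv_{\delta\sim\mu}[h(f(\tilde{s},\pi(\tilde{s}+\delta)))]+\zeta$. Invoking hypothesis~\eqref{formula:verify_URS} then gives $pre_h(s)\le h(\tilde{s})$, and a final application of $h$'s Lipschitz continuity ($|h(\tilde{s})-h(s)|\le L_h\|s-\tilde{s}\|_1<\tau L_h\le\zeta$) must be reconciled to conclude $pre_h(s)\le h(s)$. The symmetric LRS case follows by reversing the inequalities and replacing $r_{\mathrm{max}}$ with $r_{\mathrm{min}}$ and sign conventions to obtain $\zeta'$.

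The hard part, and the step I would scrutinize most carefully, is verifying that the constant $\zeta$ as written genuinely dominates \emph{all} the accumulated error terms simultaneously — in particular whether the $h$-Lipschitz gap between $h(\tilde{s})$ and $h(s)$ is already absorbed by the $\tau L_h$ factor in $\zeta$ or requires a separate accounting, since the structure $L_h(1+L_f(1+L_\pi))$ suggests the outermost $1$ is meant to cover precisely this direct $h$-displacement term. I would therefore organize the estimate so that the three contributions (direct $h$-shift, dynamics-induced shift, policy-induced shift) are each matched against the corresponding summand in the factored expression for $\zeta$, and confirm the grid property $\|s-\tilde{s}\|_1<\tau$ feeds uniformly into every one of them. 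A secondary subtlety is ensuring the chosen nearest grid point $\tilde{s}$ can be taken non-terminal; if $s$ is close to the boundary $\partial S_g$ this may need the refinement mechanism or a compactness argument, but for the core estimate I would treat $\tilde{s}\in\overline{S_g}$ as guaranteed by the discretization construction.
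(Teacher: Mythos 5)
Your proposal is correct and follows essentially the same route as the paper's proof: bound $r$ by $r_{\mathrm{max}}$, chain the Lipschitz constants of $\pi$, $f$, and $h$ under the expectation to compare $\expv_{\delta\sim\mu}[h(f(s,\pi(s+\delta)))]$ with $\expv_{\delta\sim\mu}[h(f(\tilde{s},\pi(\tilde{s}+\delta)))]$, add the direct gap $|h(s)-h(\tilde{s})|\le\tau L_h$, and invoke \cref{formula:verify_URS}. The accounting question you flag resolves exactly as you suspect: the expectation difference consumes only $\tau L_h L_f(1+L_\pi)$, and the outermost $1$ in $L_h(1+L_f(1+L_\pi))$ is reserved for the direct $h$-displacement term $\tau L_h$, so $\zeta$ covers all three contributions with no double counting.
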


Similarly, any state violating  \cref{formula:verify_URS} (or \cref{formula:verify_LRS})  is treated as a counterexample and will be added to $\tilde{S}$ for training.

To check the satisfiablility of \cref{formula:verify_URS,formula:verify_LRS} in a state $\tilde{s}$, 
we need to compute the expected value  $\expv_{\delta\sim \mu}[h(f(\tilde{s},\pi(\tilde{s}+\delta)))]$. 
However, it is difficult to compute a closed form because $h$ is provided in the form of neural networks. 
We devise two strategies below depending on the training approaches of control policies.

\vskip 1mm
\paragraph{An Over-Approximation Approach}
For control policies that are trained on compact but infinitely continuous state space, we bound the  expected value $\expv_{\delta\sim \mu}[h(f(\tilde{s},\pi(\tilde{s}+\delta)))]$ via interval arithmetic \cite{Interval_Bound_Propagation,interval_pr} instead of computing it, which is inspired by the work~\citep{stab_martingales,reach_avoid_martingale}.
In particular, given the noise distribution $\mu$ and its support $\mathrm{N} =\{\delta \in \Rset^n \ | \ \mu(\delta) > 0 \}  $, we first partition $\mathrm{N}$ into finitely $k$
cells $\mathrm{cell(N)} = \{N_1,\cdots,N_k\}$, 
and use  $\mathrm{maxvol} = max_{N_i \in \mathrm{cell(N)}} \mathrm{vol}(N_i)$ (resp. $\mathrm{minvol} = min_{N_i \in \mathrm{cell(N)}} \mathrm{vol}(N_i)$) to denote the maximal (resp. minimal) volume with respect to the Lebesgue measure of any cell in the partition, respectively. For the expected value in \cref{formula:verify_URS}, we bound it from above: 
\begin{align}\label{eq:overes-up} 
\expv_{\delta\sim \mu}[h(f(\tilde{s},\pi(\tilde{s}+\delta)))]\le \sum\limits_{N_i\in \mathrm{cell(N)}}  \mathop{\mathrm{maxvol}} \cdot \mathop{\mathrm{sup}}_{\delta}F(\delta) 
\end{align}
\noindent where $F(\delta)=h(f(\tilde{s},\pi(\tilde{s}+\delta)))$.   
Similarly, for the expected value in \cref{formula:verify_LRS}, we bound it from below: 
\begin{align}\label{eq:overes-lo} 
\expv_{\delta\sim \mu}[h(f(\tilde{s},\pi(\tilde{s}+\delta)))]\ge \sum\limits_{N_i\in \mathrm{cell(N)}}  \mathop{\mathrm{minvol}} \cdot \mathop{\mathrm{inf}}_{\delta}F(\delta) 
\end{align}
Both supremum and  infimum  can be  calculated via interval arithmetic.
We refer interested readers to 
\citep{stab_martingales} and \citep{reach_avoid_martingale}  for more details.

By replacing the actual expected values with  their overestimated  upper bounds and lower bounds in \cref{eq:overes-up,eq:overes-lo}, the validation becomes pragmatically feasible without losing the soundness, i.e., a reward martingale candidate that is validated must be  valid. However, due to the overestimation, it may produce false positives and incur unnecessary further training or even timeout.

\vskip 1mm
\paragraph{An Analytic Approach}

Next, we propose an analytical approach for the control policies that are trained on discretized abstract states. In previous work~\cite{trainify,BBreach,li2022neural}, a compact but infinitely continuous 
state space $S$ was discretized to a  finite  set of abstract states, i.e., $S = \bigcup _{i=1}^{L}S^i $ and $\forall i\neq j, \ S^i \cap S^j=\emptyset $. Then a neural network policy $\pi$ was trained on the set of abstract states. After training, each abstract state $S^i$ corresponds to a constant action $a_i$, i.e., $\pi(s)=a_i$ for all $s \in S^i,i=1,\dots,L$. Based on the work, we present our analytic approach below.

For the finiteness of the abstract state space, we can calculate the probabilities of all possible actions for the perturbed state $\hat{s}=s+\delta$ with $\delta\sim \mu$ as follows:
\begin{linenomath*}
	\begin{equation}
        \label{equ:exp_1}
		\begin{split}
			\Delta^i:=p(\hat{a}=a_i)=p(s+\delta\in S^i)
		\end{split} 
	\end{equation}
\end{linenomath*}
for $i=1,\dots,L$. Refer to \cref{fig:dis_s} for an illustrative example.
Since the system dynamics $f$ is deterministic, we can have that the probability of the next state $s'_i$ is equivalent to the probability of the action $a_i$, i.e., $p(f(s,\hat{a})=s'_i)=\Delta^i$. Then we can compute the analytic solution:
\begin{linenomath*}
	\begin{equation}
		\label{equ:exp_2}
		\begin{split}
			\expv_{\delta\sim \mu}[h(f(s,\pi(s+\delta)))]=\textstyle\sum_{i=1}^{L} h(s'_i) \times \Delta^i,
		\end{split} 
	\end{equation}
\end{linenomath*}
where $\Delta^i=\textstyle \int_{S^i} \mu' \ d\hat{s}$
with $S^i$ being the set of all  states whose  action is $a_i$,  and $ \mu'$ being the distribution of the perturbed state $\hat{s}$ that obtained using the value of the actual state $s$ and the noise distribution $\mu$ ~\cite{williams1991}.

\begin{figure}[t]
	\centering
		\captionsetup{skip=2pt}
	\includegraphics[width=\columnwidth]{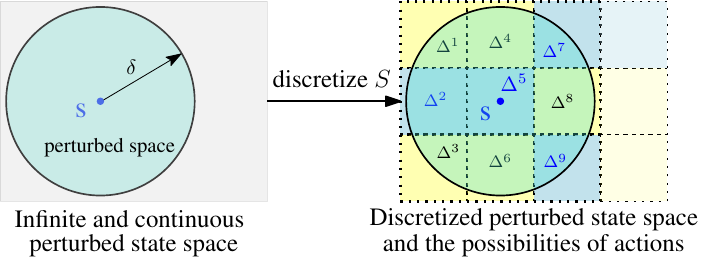}
	\caption{An example of state space discretization}
	\label{fig:dis_s}
	\vspace{-6mm}
\end{figure}

\begin{figure}[h!]
	\footnotesize 
	\centering
	\captionsetup[subfigure]{aboveskip=0pt,belowskip=0pt}
	\begin{subfigure}[b]{0.2\textwidth}
		\includegraphics[width=\textwidth]{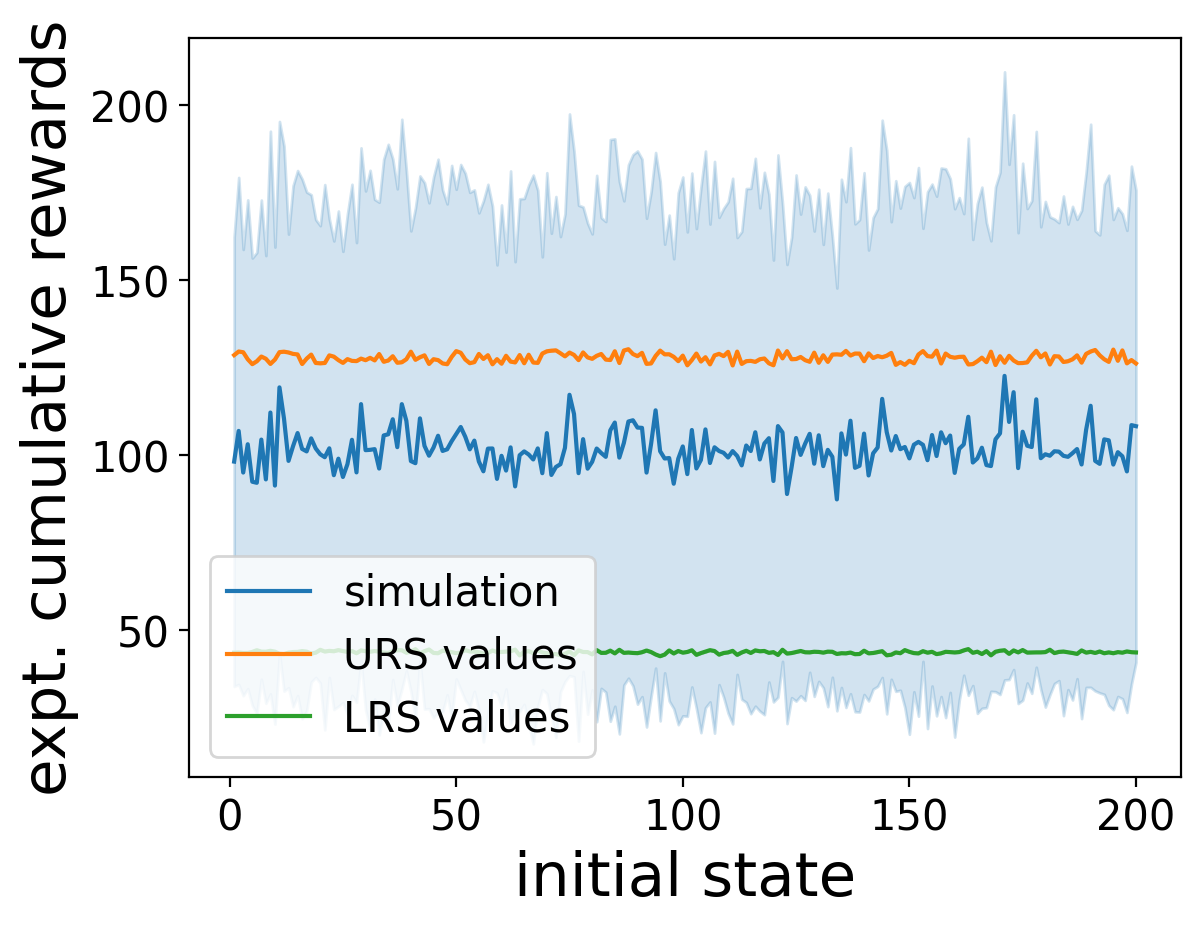}
		\caption{$\sigma=0.1$(A.S.)}
	\end{subfigure}\qquad 
	\begin{subfigure}[b]{0.2\textwidth}
		\includegraphics[width=\textwidth]{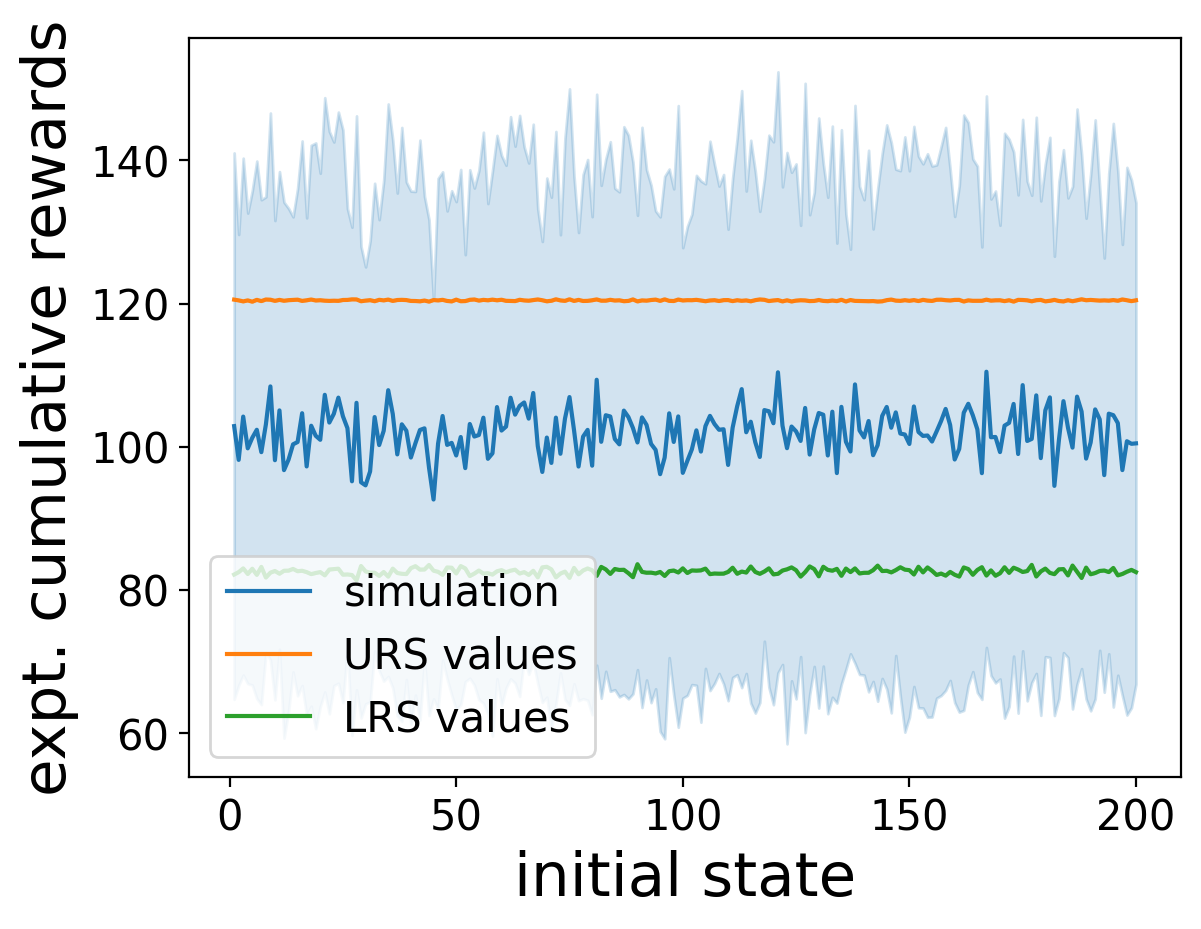}
		\caption{$\sigma=0.1$(C.S.)}
	\end{subfigure}\\ 
	\begin{subfigure}[b]{0.2\textwidth}
		\includegraphics[width=\textwidth]{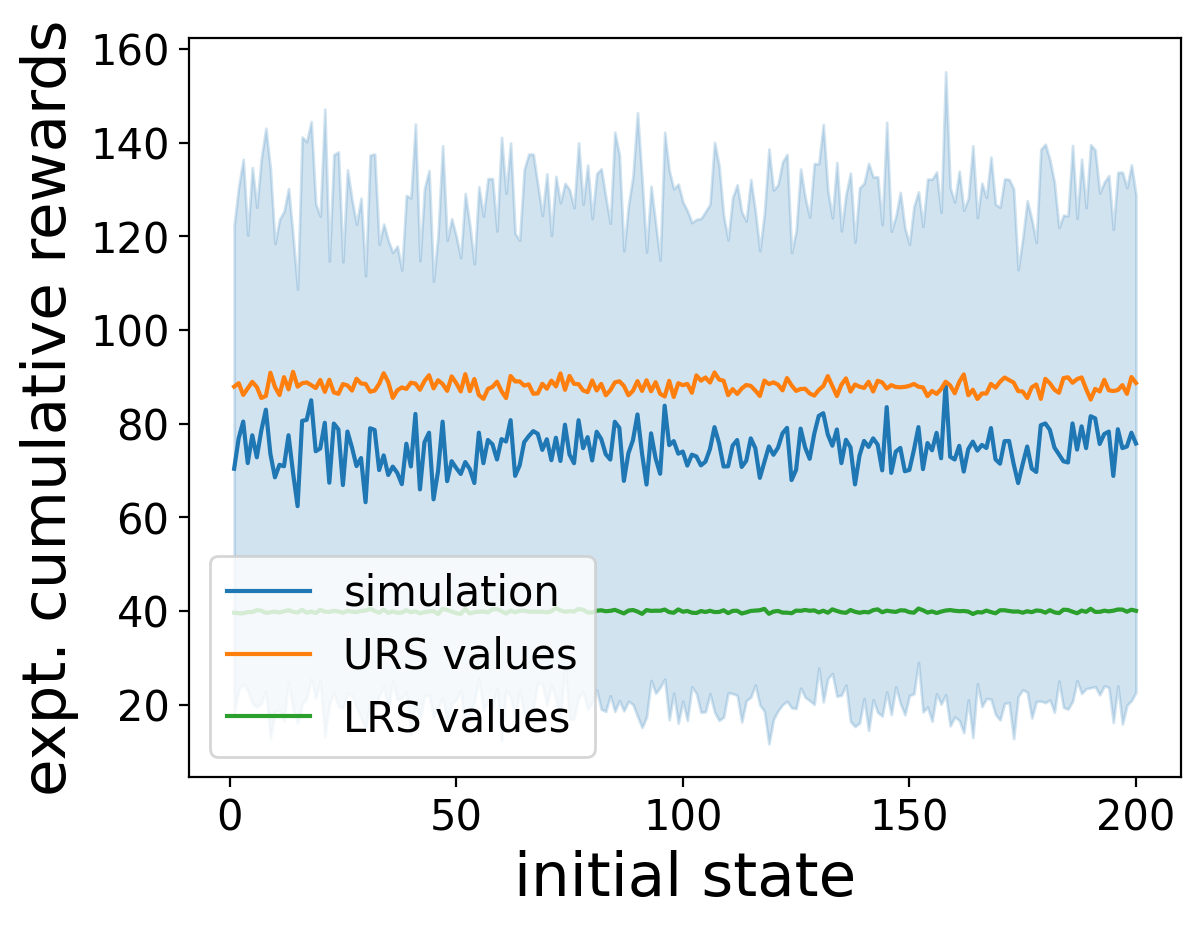}
		\caption{$r=0.2$(A.S.)}
	\end{subfigure}\qquad  
	\begin{subfigure}[b]{0.2\textwidth}
		\includegraphics[width=\textwidth]{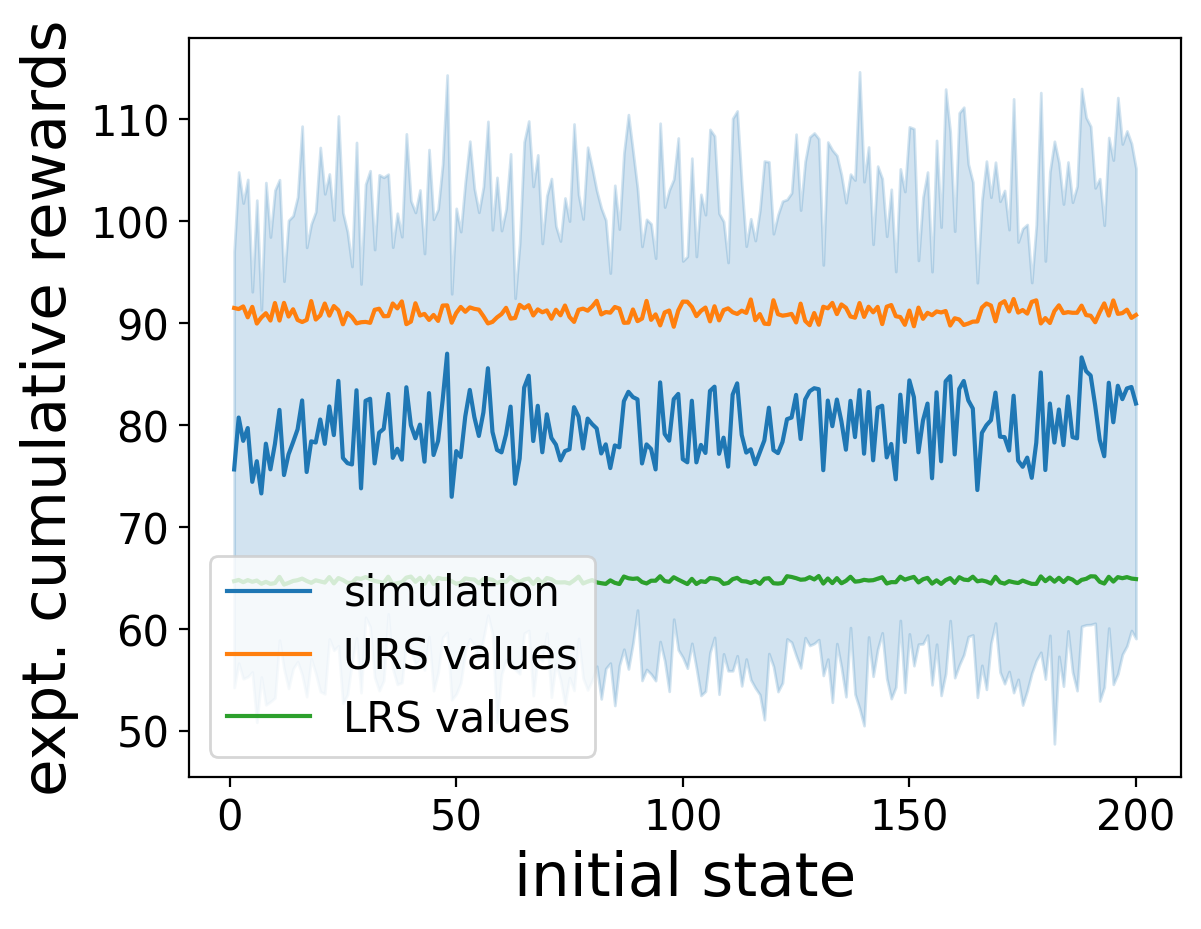}
		\caption{$r=0.2$(C.S.)}
	\end{subfigure}\\
	\begin{subfigure}[b]{0.2\textwidth}
		\includegraphics[width=\textwidth]{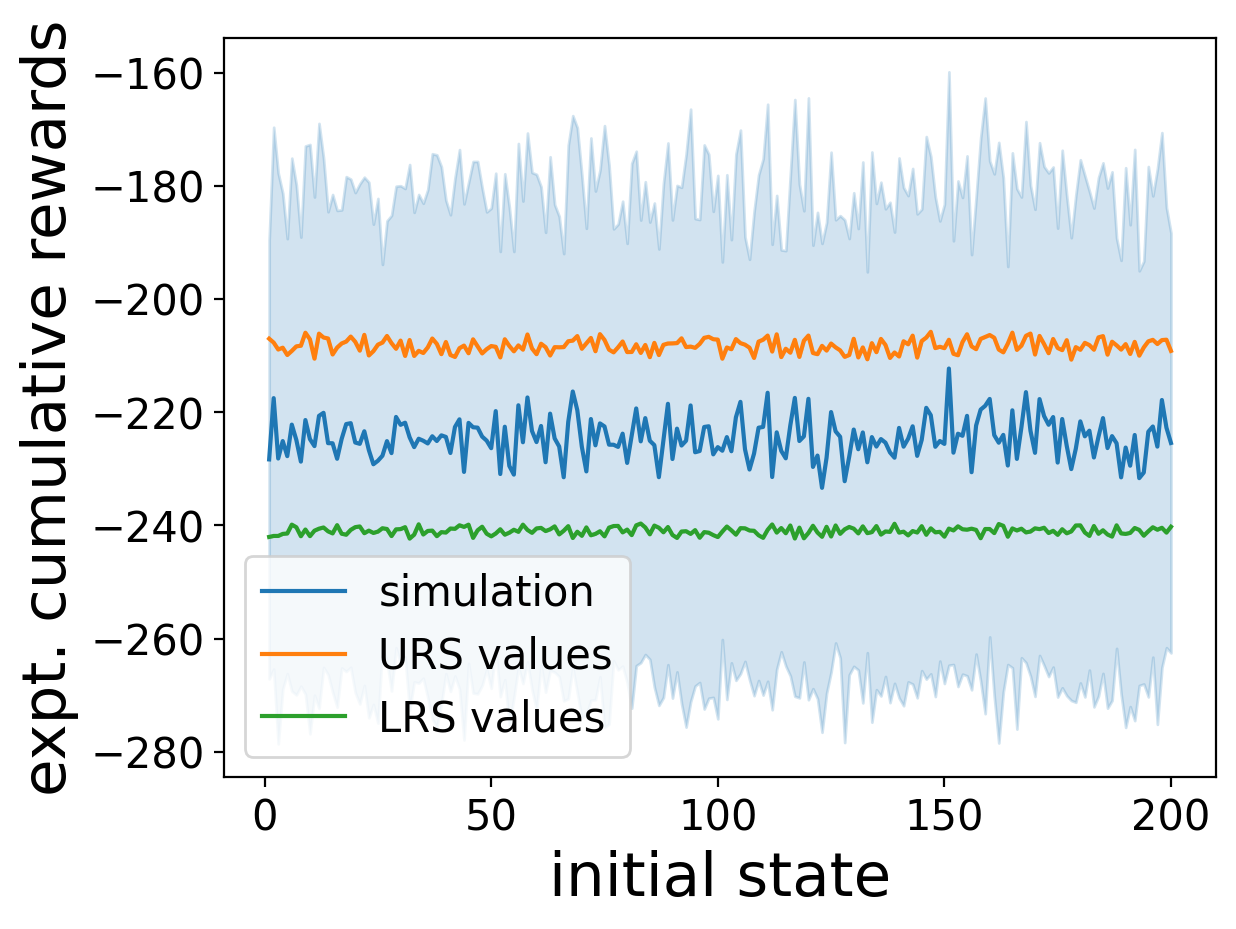}
		\caption{$\sigma=0.3$(A.S.)}
	\end{subfigure}\qquad 
	\begin{subfigure}[b]{0.2\textwidth}
		\includegraphics[width=\textwidth]{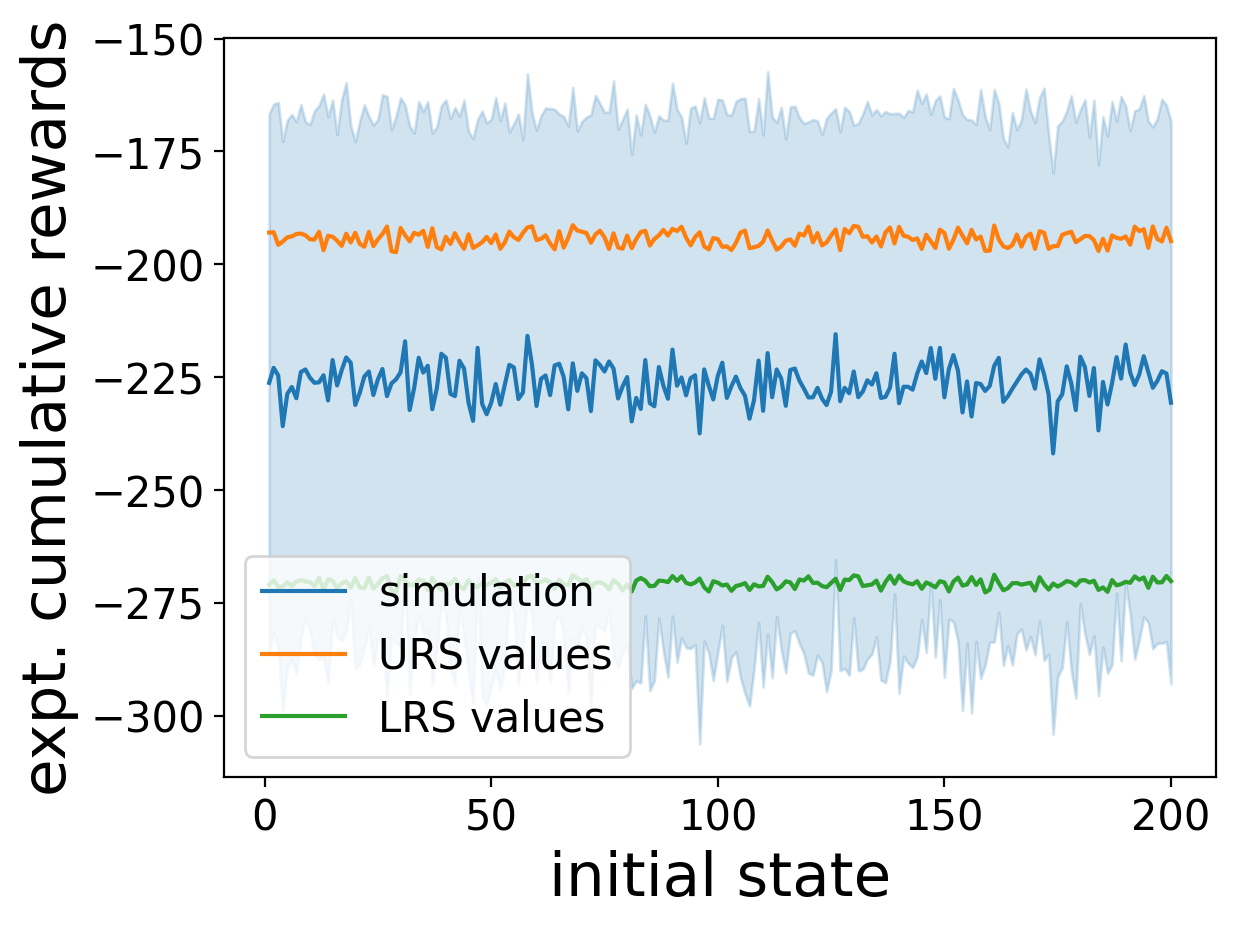}
		\caption{$\sigma=0.3$(C.S.)}
	\end{subfigure}\\ 
	\begin{subfigure}[b]{0.2\textwidth}
		\includegraphics[width=\textwidth]{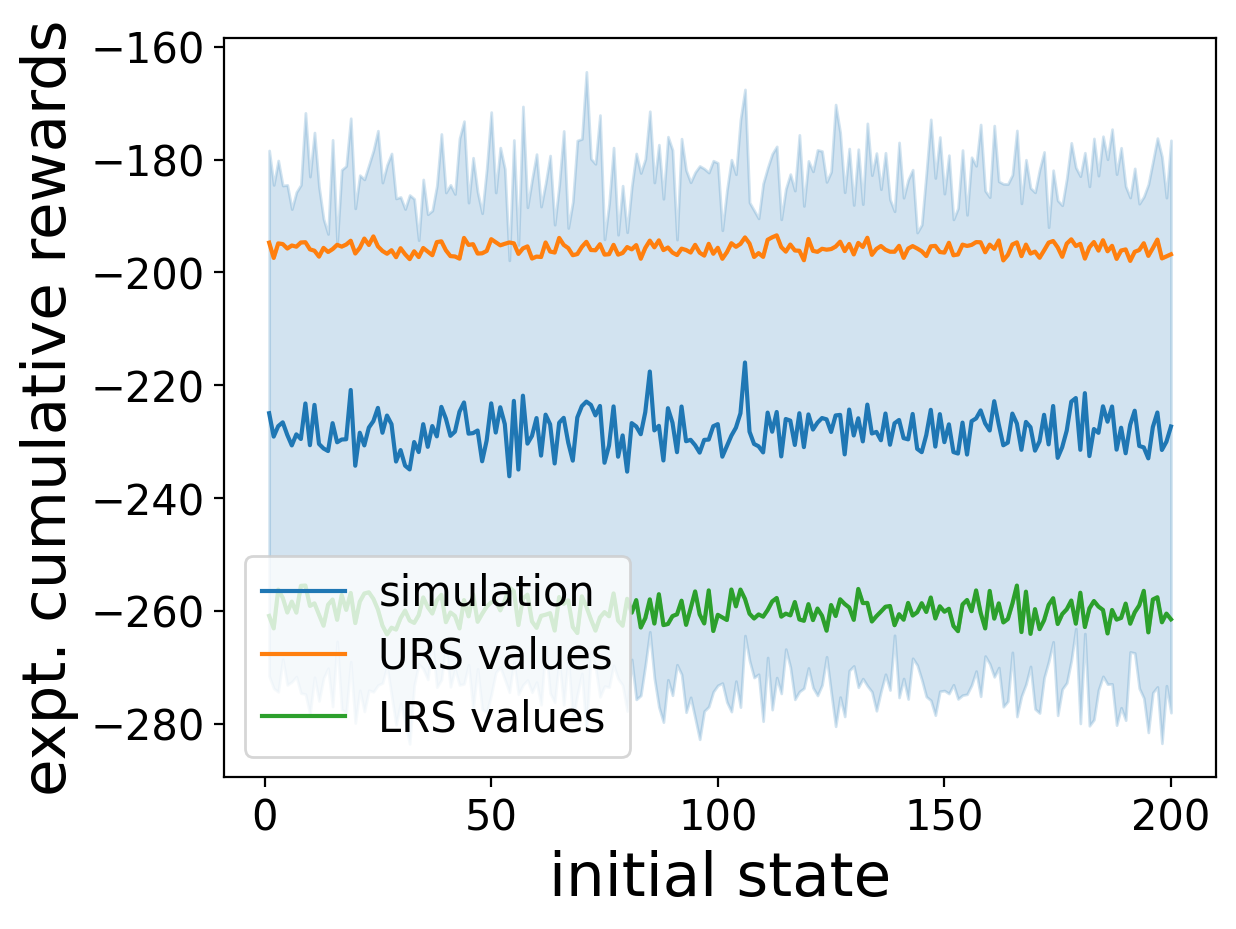}
		\caption{$r=0.5$(A.S.)}
	\end{subfigure}\qquad 
	\begin{subfigure}[b]{0.2\textwidth}
		\includegraphics[width=\textwidth]{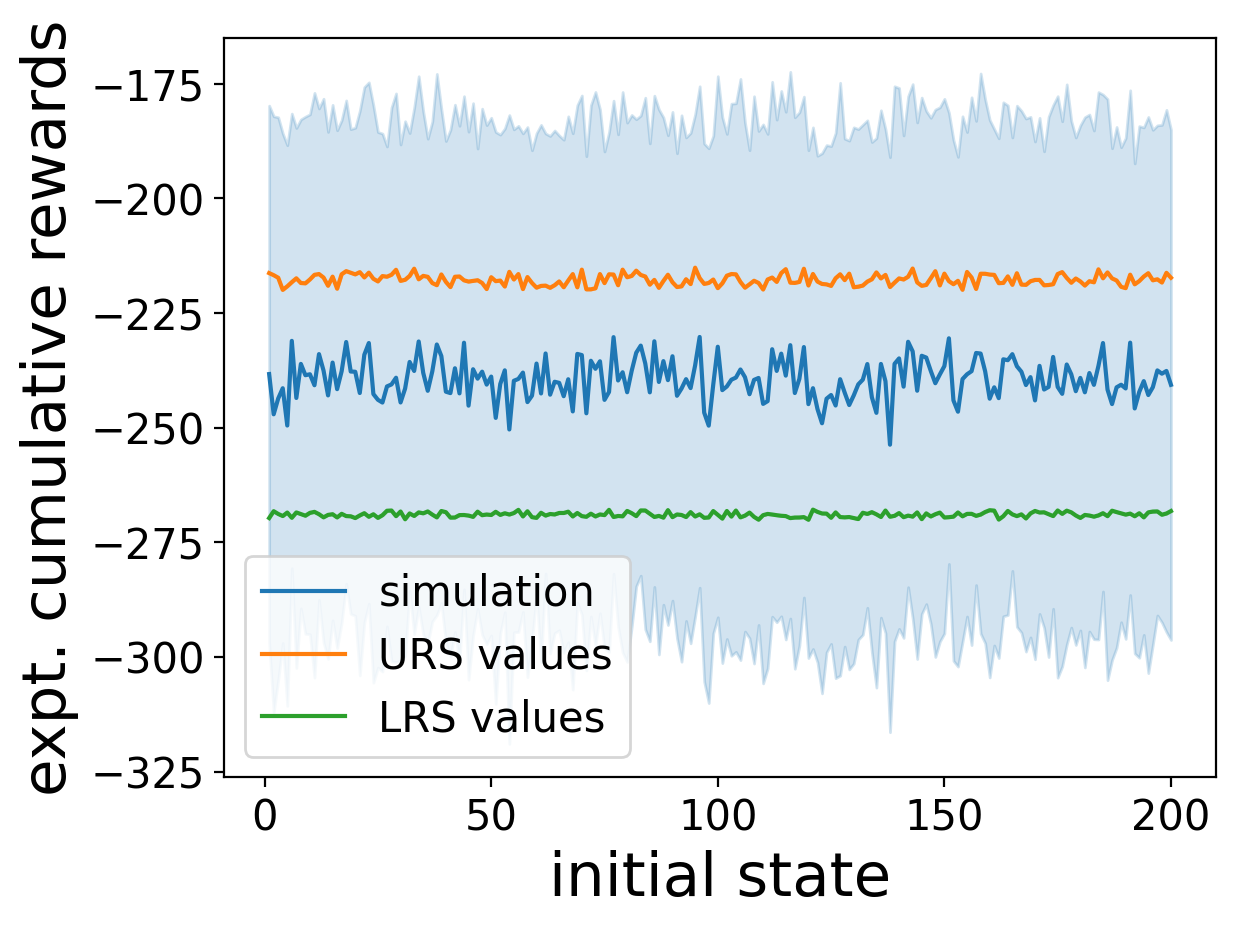}
		\caption{$r=0.5$(C.S.)}
	\end{subfigure}
		\captionsetup{skip=5pt}
	\caption{Certified bounds and simulation results for CartPole (a-d) and B1 (e-h).}
	\vspace{-2mm}
	\label{fig:CP_B1_RM}
\end{figure}

\section{Experimental Evaluations}
We first verified the termination of perturbed systems using existing methods in \citet{stab_martingales}, and then performed robustness verification for those 
finitely terminating systems.

\paragraph{Experiment Objectives.}
The goals of our experimental evaluations are to evaluate: (i)  the effectiveness of certified upper and lower bounds for expected cumulative rewards,  (ii) the effectiveness of certified tail bounds for cumulative rewards, and (iii) the efficiency of training and validating reward martingales.

\vspace{-2mm}
\paragraph{Experimental Settings.}
We consider four benchmarking problems: CartPole (CP),  MountainCar (MC), B1, and B2 from Gym ~\cite{GYM} and the benchmarks for reachability analysis ~\cite{Verisig}, respectively. 
To demonstrate the generality of our approach, we train systems with different activation functions and network structures of the planted NNs, using  different DRL algorithms such as DQN~\cite{DQN} and DDPG~\cite{DDPG}. 
\Cref{table:benchmarks_setting} gives the details of training settings. Besides, $\tau$ is 0.02 for CP, B1 and B2, and 0.01 for MC. $\xi$  is set to 0.002.

For the robustness verification, we consider two different state perturbations as follows:  
\begin{itemize}
\item Gaussian noises with zero means and 
different deviations. 

\item Uniform noises with different
radii. 
\end{itemize}

\begin{figure*}[t]
	\footnotesize 
	\centering
	\captionsetup[subfigure]{aboveskip=1pt,belowskip=1pt}
	\begin{subfigure}[b]{0.22\textwidth}
		\includegraphics[width=\textwidth]{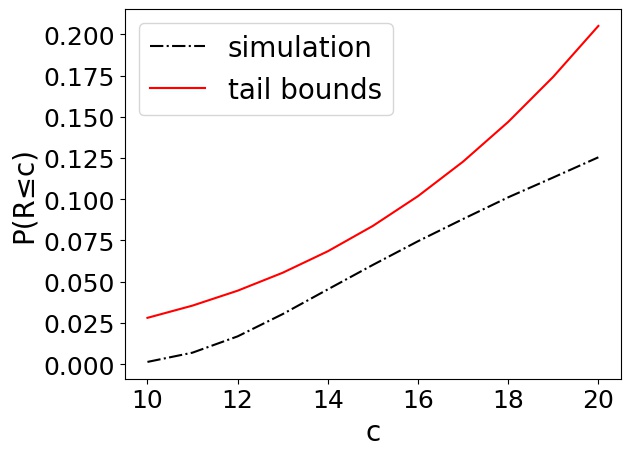}
		\caption{$r=0.2$(A.S.)}
	\end{subfigure}\qquad  
	\begin{subfigure}[b]{0.22\textwidth}
		\includegraphics[width=\textwidth]{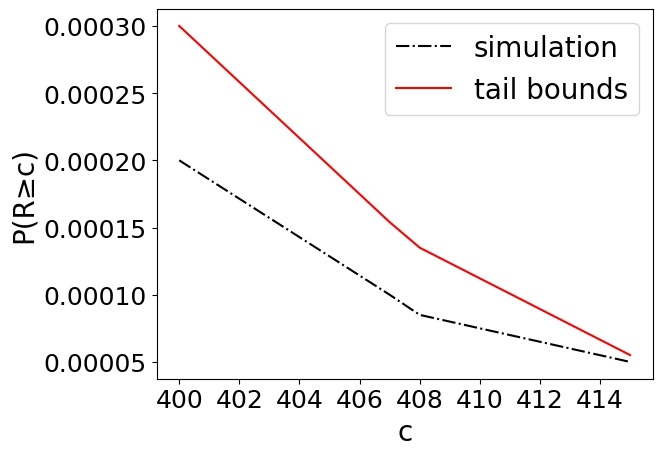}
		\caption{$r=0.2$(A.S.)}
	\end{subfigure} \qquad 
	\begin{subfigure}[b]{0.21\textwidth}
		\includegraphics[width=\textwidth]{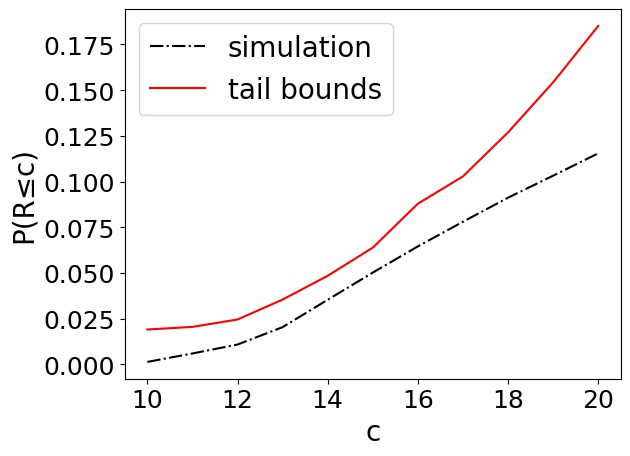}
		\caption{$r=0.2$(C.S.)}
	\end{subfigure}\qquad 
	\begin{subfigure}[b]{0.22\textwidth}
		\includegraphics[width=\textwidth]{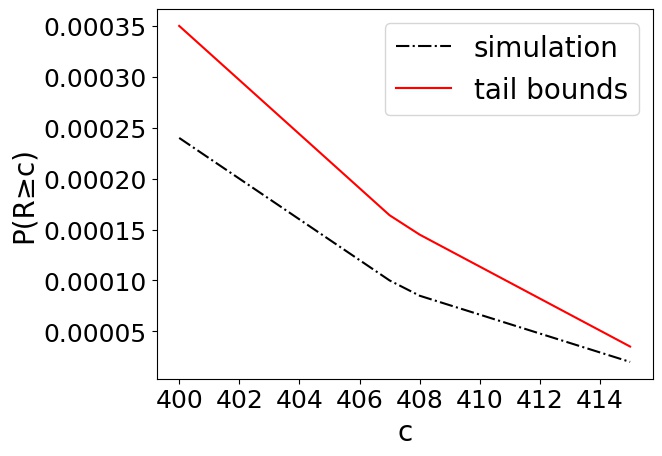}
		\caption{$r=0.2$(C.S.)}
	\end{subfigure}\\
	\begin{subfigure}[b]{0.22\textwidth}
		\includegraphics[width=\textwidth]{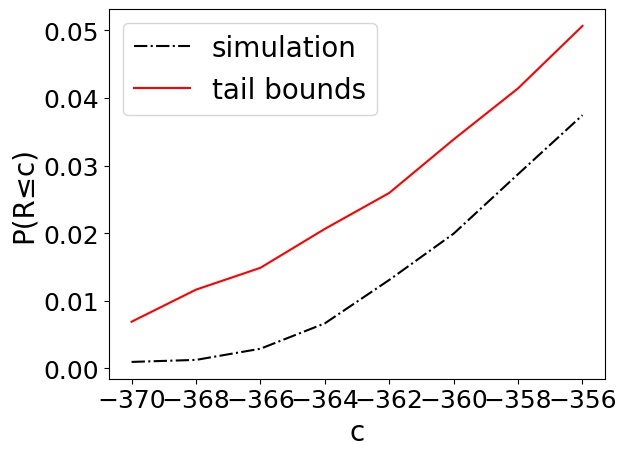}
		\caption{$r=0.5$(A.S.)}
	\end{subfigure}\qquad 
	\begin{subfigure}[b]{0.22\textwidth}
		\includegraphics[width=\textwidth]{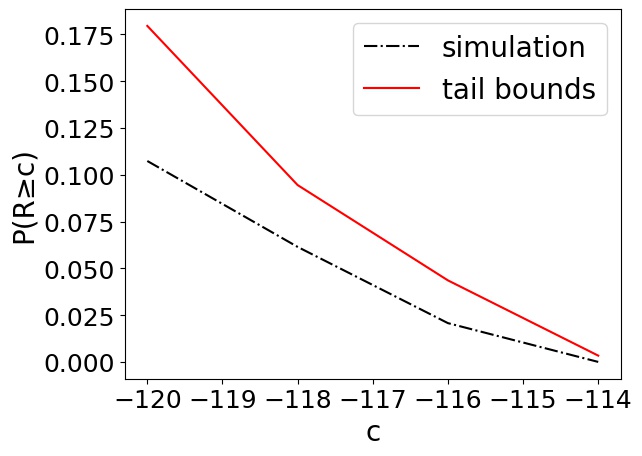}
		\caption{$r=0.5$(A.S.)}
	\end{subfigure} \qquad 
	\begin{subfigure}[b]{0.22\textwidth}
		\includegraphics[width=\textwidth]{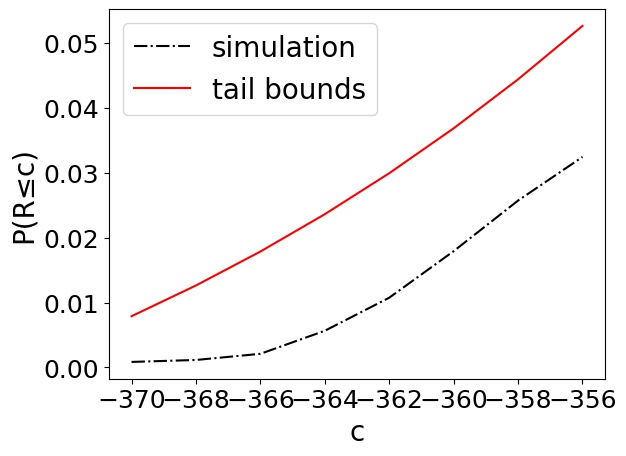}
		\caption{$r=0.5$(C.S.)}
	\end{subfigure}\qquad 
	\begin{subfigure}[b]{0.22\textwidth}
		\includegraphics[width=\textwidth]{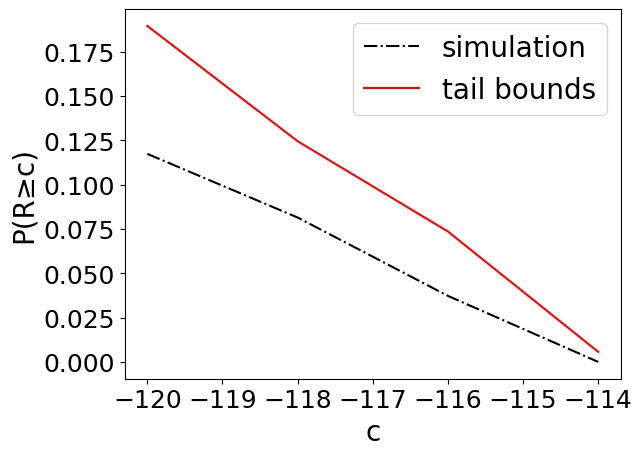}
		\caption{$r=0.5$(C.S.)}
	\end{subfigure}
	\vspace{-2mm}
	\caption{Certified tail bounds of expected cumulative rewards and simulation results for CartPole (a-d) and B1 (e-h).}
	\vspace{-5mm}
	\label{fig:CP_B1_Tail_Bound}
\end{figure*}

\begin{table}[h!]
	\vspace{-4mm}
	\centering
	\footnotesize
	\setlength{\tabcolsep}{2.7pt}
	\begin{tabular}{l|r r r r r r r}
		\hline
		\textbf{Task}&\centering \textbf{Dim.}&\centering \textbf{Alg.} &\centering \textbf{A.F.}&\centering \textbf{Size}&\textbf{A.T.} &\textbf{S.P.} &\textbf{Training} \\
		\hline
		CP & 4 & DQN & ReLU & $3\times200$  & Dis.  & Gym &C.S./A.S.\\
		MC & 2 & DQN & Sigmoid & $2\times200$  & Dis.  & Gym &C.S./A.S.\\
		B1 & 2 & DDPG & ReLU & $2\times100$  & Cont.  & R.A. &C.S./A.S.\\
		B2 & 2 &  DDPG & Tanh & $2\times300$  & Cont.  & R.A.  &C.S./A.S.\\
		\hline
	\end{tabular}
	\begin{tablenotes}
		\footnotesize \item \textbf{Remarks.}
		\textbf{Dim.}: dimension; 
		\textbf{Alg.}: DRL algorithm; 
		\textbf{A.F.}: activation function;   
		\textbf{A.T.}: action type; 
		\textbf{S.P.}: sources of problems;
		\textbf{Dis.}: discrete; 
		\textbf{Cont.}: continuous; 
		\textbf{R.A.}: reachability analysis;
		\textbf{C.S.}:  training on concrete states;
		\textbf{A.S.}: training on abstract states.
	\end{tablenotes}
	\captionsetup{skip=5pt}
	\caption{Experimental settings.}
	\vspace{-4mm}
	\label{table:benchmarks_setting}
\end{table}

\noindent Specifically, for each  state $s = (s_1,\ldots,s_n)$, we add noises $X_1,\ldots,X_n$ to each dimension of $s$ and obtain the perturbed state $(s_1 + X_1,\ldots,s_n + X_n)$, where $X_i \sim \mathbf{U}(-r, r)$ ($1\le i \le n$) is some uniformly distributed noise or  $X_i \sim \mathbf{G}(0, \sigma)$ ($1\le i \le n$) is some Gaussian distributed noise.

\paragraph{Effectiveness of Certified Upper and Lower Bounds.}

\vspace{-2mm}
\cref{fig:CP_B1_RM}  shows the certified bounds of expected cumulative rewards (\cref{thm:bounds}) and the simulation results for CP and B1 under different perturbations and policies, respectively. The $x$-axis indicates different initial states, while the $y$-axis means the corresponding expected cumulative rewards. The orange lines represent the upper bounds calculated by the trained URSs, the green lines represent the lower bounds computed by the trained LRSs, and the blue lines and shadows represent the means and standard deviations of the simulation results that are obtained by executing $200$ episodes for each initial state. 
We can find that the certified bounds tightly enclose the simulation outcomes,  demonstrating the effectiveness of our trained reward martingales.

We also observe that the tightness of the certified bounds depends on particular systems,  trained policies, and perturbations. It is worth further investigating how these factors affect the tightness to produce tighter bounds. 

\vspace{-2mm}
\paragraph{Effectiveness of Certified Tail Bounds.} 
\cref{fig:CP_B1_Tail_Bound} depicts the certified tail bounds 
and statistical results for CP and B1 under different perturbations and  policies. Due to the data sparsity of cumulative rewards, we choose 200 different initial states (instead of a single one) and execute the systems by 200 episodes for each initial state. We record the cumulative rewards, and
the statistical results of tail probabilities for different $c$'s are shown by the black dashed lines. 
For each initial state $s_0$ from the 200 initial states, we calculate $\probm_{s_0}(\calR\ge c)$ and $\probm_{s_0}(\calR\le c)$ according to \cref{thm:tail-bounds}. Their average values $P(\calR \ge c)$ and $P(\calR \le c)$ are shown by the red solid lines.
The results show that our calculated tail bounds tightly enclose the statistical outcomes. 
We also observe that the trend of the calculated tail bounds is exponential upward or downward, which is consistent with \Cref{thm:tail-bounds}. The same claims also hold for any single initial state.

\vspace{-2mm}
\paragraph{Efficiency Comparison.} \Cref{table:time} shows the time cost of  training and validating reward martingales under different policies and perturbations. In general,  training costs more time than validating,  and high-dimensional systems e.g., CP (4-dimensional state space)  cost more time than low-dimensional ones, e.g, B1 (2-dimensional state space). 
That is because the validation step suffers from the curse of high-dimensionality~\cite{Stability_Guarantees}.

\begin{table}[t]
	\centering
	\footnotesize
	\setlength{\tabcolsep}{4pt}
	\begin{tabular}{l|r| r| r r r| r r r}
	\hline
	&&& \multicolumn{3}{c|}{\textbf{URS}} & \multicolumn{3}{c}{\textbf{LRS}}\\ \hline
 
	\textbf{Task} &\centering \textbf{Pert.} &\centering \textbf{Policy}&\centering \textbf{T.T.} &\centering \textbf{V.T.}&\centering \textbf{To.T.}&\centering \textbf{T.T.}&\textbf{V.T.} &\textbf{To.T.} \\
        \hline
	\multirow{4}{*}{\textbf{CP}} & \multirow{2}{*}{\textbf{$\sigma=0.1$}} & A.S.  &912  &751 & 1663 & 1165 & 822 & 1987\\
	&   &C.S. & 1203  &  873 &2076  & 1384 & 918 & 2302 \\
	\cline{2-9}
	& \multirow{2}{*}{$r=0.2$} &A.S. & 901  & 409 & 1310  & 1081 & 535 &1616 \\
	& &   C.S. & 898  &  529 &1427 & 1009 & 649 & 1658\\
	\hline
	\multirow{4}{*}{\textbf{B1} } & \multirow{2}{*}{\textbf{$\sigma=0.3$}} & A.S.  & 501  &  45 & 546 & 629  &  41&  670\\
	&   & C.S. & 570  &  53 & 623 & 850 & 189 & 1036\\
	\cline{2-9}
	& \multirow{2}{*}{\textbf{$r=0.5$}} & A.S.  & 530 & 47 & 577  & 681  & 175 &856 \\
	&  & C.S. & 593  & 54 & 647 & 945 & 280 & 1225 \\
	\hline
	\end{tabular}
	\begin{tablenotes}
		\footnotesize \item 
		\textbf{T.T.}: training time; 
		\textbf{V.T.}: validating time; 
		\textbf{To.T.}: total time. 
		\vspace{-1mm}
	\end{tablenotes}
	\captionsetup{skip=2pt}
	\caption{Time on training and validating reward martingales.}
	\vspace{-7mm}
	\label{table:time}
\end{table}

We also observe that, the analytic  approach is more efficient in training and validating than the over-approximation-based approach with up to 35.32\% improvement. This result is consistent to the fact that the analytical approach can produce more precise results of the expected values (\Cref{theorem:verify_RM}) and consequently can reduce false positives and unnecessary refinement and training.

Note that the conclusions from the above results are also applicable to MC and B2. 
More experimental results, a detailed discussion of different hyperparameter settings, and all omitted proof can be found in the Appendix.


\vspace{-1mm}
\section{Concluding Remarks}
In this paper, we have introduced a groundbreaking quantitative robustness verification approach for perturbed DRL-based control systems, utilizing the innovative concept of reward martingales. Our work has established two fundamental theorems that serve as cornerstones: the certification of reward martingales as rigorous upper and lower bounds, as well as their role in tail bounds for system robustness. We have presented an algorithm that effectively trains reward martingales through the implementation of neural networks. Within this algorithm, we have devised two distinct methods for computing expected values, catering to control policies developed across diverse state space configurations. Through an extensive evaluation encompassing four classical control problems, we have convincingly showcased the versatility and efficacy of our proposed approach.

We believe this work would inspire several future studies to reduce the complexity in validating reward martingales for high dimensional systems. Besides, it is also  worth further investigation to advance the approach for calculating tighter certified bounds by training more precise reward martingales. 

\clearpage

\section{Acknowledgments}

The work has been supported by the NSFC-ISF Joint Program (62161146001,3420/21), NSFC Project (62372176), Huawei Technologies Co., Ltd., Shanghai International Joint Lab of Trustworthy Intelligent Software (Grant No. 22510750100),  Shanghai Trusted Industry Internet Software Collaborative Innovation Center, the Engineering and Physical Sciences Research Council (EP/T006579/1), National Research Foundation (NRF-RSS2022-009), Singapore, and Shanghai Jiao Tong University Postdoc Scholarship.
\bibliography{aaai24}
\clearpage
\begin{center}
	\Large\textbf{Appendix}
\end{center}

\section{Probability Theory}

We start by reviewing some notions from probability theory.

\noindent\textbf{Random Variables and Stochastic Processes.}
A probability space is a triple ($\Omega, {\cal F}, {\mathbb P}$), where $\Omega$ is a non-empty sample space, ${\cal F}$ is a
$\sigma$-algebra over $\Omega$, and  $\mathbb P(\cdot)$ is a probability measure over $\cal F$, i.e. a function $\mathbb P$: ${\cal F}  \rightarrow [0,1]$ that satisfies the following properties: (1) ${\mathbb P}(\emptyset)=0$, (2)${\mathbb P}(\Omega - A)=1-{\mathbb P}[A]$, and (3) ${\mathbb P} (\textstyle\cup_{i=0}^{\infty} A_i)= \textstyle\sum_{i=0}^{\infty}{\mathbb P}(A_i)$ for  any
sequence $\{A_i\}_{i=0}^{\infty}$ of pairwise disjoint sets in ${\cal F}$.

Given a probability space ($\Omega, {\cal F}, {\mathbb P}$), a
random variable is a function $X: \Omega \rightarrow {\mathbb R} \cup \{\infty\}$ that is $\cal F$-measurable, i.e., for each $a \in {\mathbb R}$ we have that $\{\omega \in \Omega | X(\omega) \leq a \} \in {\cal F}$.
Moreover, a discrete-time stochastic process is a sequence $\{X_n\}_{n=0}^{\infty}$ of random variables in ($\Omega, {\cal F}, {\mathbb P}$).

\noindent\textbf{Conditional Expectation.}
Let ($\Omega, {\cal F}, {\mathbb P}$) be a probability space and $X$ be a random variable in ($\Omega, {\cal F}, {\mathbb P}$). The expected value of the random variable $X$,
denoted by ${\mathbb E}[X]$, is the Lebesgue integral of $X$ wrt $\mathbb P$. If the range
of $X$ is a countable set $A$, then ${\mathbb E}[X]= \textstyle\sum_{\omega \in A}\omega \cdot {\mathbb P}(X=\omega)$. 
Given a sub-sigma-algebra ${\cal F}' \subseteq {\cal F}$, a conditional expectation of $X$ for the given ${\cal F}'$ is a ${\cal F}'$-measurable random variable $Y$ such
that, for any $A \in {\cal F}'$, we have:
\begin{linenomath*}
	\begin{equation}
	\begin{split}
     {\mathbb E}[X \cdot {\mathbb I}_{A}]={\mathbb E}[Y \cdot {\mathbb I}_{A}]
	\end{split}
	\end{equation}
\end{linenomath*}

\noindent Here, ${\mathbb I}_{A}: \Omega \rightarrow \{0,1\}$  is an indicator function of $A$, defined as ${\mathbb I}_{A}(\omega)=1$ if $\omega \in A$ and ${\mathbb I}_{A}(\omega)=0$ if $\omega \notin A$. 
Moreover, whenever the conditional expectation exists, it is also almost-surely unique, i.e., for any two ${\cal F}'$-measurable random variables $Y$ and $Y'$ which are conditional expectations of $X$ for given ${\cal F}'$, we have that ${\mathbb P}(Y=Y')=1$.

\noindent\textbf{Filtrations and Stopping Times.}
A filtration of the probability space ($\Omega, {\cal F}, {\mathbb P}$) is an infinite sequence $\{{{\cal F}_n}\}_{n=0}^{\infty}$ such that
for every $n$, the triple ($\Omega, {\cal F}_n, {\mathbb P}$) is a probability space and ${\cal F}_n \subseteq {\cal F}_{n+1} \subseteq {\cal F}$. 
A stopping time with respect to a filtration $\{{{\cal F}_n}\}_{n=0}^{\infty}$  is a random variable $T: \Omega \rightarrow {\mathbb N}_0 \cup \{\infty\}$ such that, for every $i \in {\mathbb N}_0$, it holds that $\{\omega \in \Omega |T(\omega) \leq i \} \in {\cal F}_i$. 
Intuitively, $T$ returns the time step at which some stochastic process shows a desired behavior and should be “stopped”.

A discrete-time stochastic process  $\{X_n\}_{n=0}^{\infty}$ in ($\Omega, {\cal F}, {\mathbb P}$) is adapted to a filtration $\{{{\cal F}_n}\}_{n=0}^{\infty}$, if
for all $n \geq 0$, $X_n$ is a random variable in ($\Omega, {\cal F}_n, {\mathbb P}$). 

\noindent\textbf{Martingales.}
A discrete-time stochastic process $\{X_n\}_{n=0}^\infty$ to a filtration $\{{{\cal F}_n}\}_{n=0}^{\infty}$ is a martingale (resp. supermartingale, submartingale) if for all $n \geq 0$, ${\mathbb E}[|X_n|] < \infty$ and it holds almost surely (i.e., with probability 1) that ${\mathbb E}[X_{n+1}|{\cal F}_n] = X_n$ (resp. ${\mathbb E}[X_{n+1}|{\cal F}_n] \leq X_n$, ${\mathbb E}[X_{n+1}|{\cal F}_n] \geq X_n$).

\noindent\textbf{Ranking Supermartingales.}
Let $T$ be a stopping time w.r.t. a filtration $\{{{\cal F}_n}\}_{n=0}^{\infty}$. 
A discrete-time stochastic process $\{X_n\}_{n=0}^\infty$ 
w.r.t. a stopping time $T$ 
is a ranking supermartingale (RSM) if for all $n \geq 0$, ${\mathbb E}[|X_n|] < \infty$ and there exists $\epsilon >0$ such that it holds almost surely (i.e., with probability 1) that $X_n \geq 0$ and ${\mathbb E}[X_{n+1}|{\cal F}_n] \leq X_n-\epsilon \cdot {\mathbb I}_{T>n}$.

\section{Proofs of Theorems}

Consider a perturbed DRL-based control system $M_\mu=(S,S_0,S_g,A,\pi,f,R,\mu)$. Fix an initial state $s_0\in S_0$ and its probability space $(\Omega_{s_0},\mathcal{F}_{s_0},\probm_{s_0})$. 

\subsection{Proofs of \cref{thm:bounds}}
Let $\{X_n\}_{n=0}^{\infty}$ be a stochastic process w.r.t. some filtration $\{\mathcal{F}_n\}_{n=0}^\infty$ over $(\Omega_{s_0},\mathcal{F}_{s_0},\probm_{s_0})$ such that $X_n:=h(\overline{s}_n)$ where $h:S\rightarrow \Rset$ is a function over $S$ and $\overline{s}_n$ is a random (vector) variable representing the value(s) of the state at the $n$-th step of an episode. 
Then we construct another stochastic process $\{Y_n\}_{n=0}^\infty$ such that $Y_n:=X_n+\sum_{i=0}^n r_i$ where $r_i$ is the reward at the $i$-th step of an episode.

\begin{proposition}\label{prop:URS}
   If $h$ is an URS (see \cref{def:URS}), then $\{Y_n\}_{n=0}^\infty$ is a supermartingale. 
\end{proposition}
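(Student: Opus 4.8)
The plan is to verify the two defining properties of a supermartingale for $\{Y_n\}_{n=0}^\infty$ with respect to the natural filtration $\mathcal{F}_n:=\sigma(\overline{s}_0,\dots,\overline{s}_n)$ generated by the episode: namely integrability, $\expv[|Y_n|]<\infty$ for every $n$, and the one-step inequality $\expv[Y_{n+1}\mid\mathcal{F}_n]\le Y_n$ almost surely. Adaptedness is immediate, since $X_n=h(\overline{s}_n)$ and each $r_i$ are measurable functions of $\overline{s}_0,\dots,\overline{s}_n$ (equivalently, of the realized noises $\delta_0,\dots,\delta_{n-1}$), so $Y_n$ is $\mathcal{F}_n$-measurable.

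For integrability I would exploit the model assumptions directly. Since $S$ is compact and $h$ is Lipschitz continuous, $B_h:=\sup_{s\in S}|h(s)|<\infty$ and hence $|X_n|\le B_h$; likewise $R$ is continuous on the compact domain, so every step reward is bounded by $r^{\ast}:=\max(|r_{\mathrm{min}}|,|r_{\mathrm{max}}|)<\infty$. For a fixed $n$ this gives $\expv[|Y_n|]\le B_h+(n+1)\,r^{\ast}<\infty$, which settles the first condition.

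For the one-step inequality I would condition on $\overline{s}_n$ and use the Markov structure of the perturbed dynamics: because $\overline{s}_{n+1}=f(\overline{s}_n,\pi(\overline{s}_n+\delta_n))$ with fresh noise $\delta_n\sim\mu$ independent of $\mathcal{F}_n$, conditioning on $\mathcal{F}_n$ reduces to conditioning on $\overline{s}_n$. Writing $\expv[Y_{n+1}\mid\mathcal{F}_n]-Y_n=\expv[\,h(\overline{s}_{n+1})+r_{n+1}\mid\mathcal{F}_n\,]-h(\overline{s}_n)$, I would split into two cases. If $\overline{s}_n\in S_g$, the state is frozen by the standing assumption ($\overline{s}_{n+1}=\overline{s}_n$) and no further reward accrues ($r_{n+1}=0$), so the difference is exactly $0$, consistent with the terminal branch $pre_h(s)=h(s)$ of \cref{def:pre-ex}. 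If $\overline{s}_n\in\overline{S_g}$, I would identify $\expv[\,h(\overline{s}_{n+1})+r_{n+1}\mid\mathcal{F}_n\,]$ with the pre-expectation $pre_h(\overline{s}_n)$, at which point the decreasing-pre-expectation condition of \cref{def:URS} yields $pre_h(\overline{s}_n)\le h(\overline{s}_n)$, so the difference is $\le 0$. Combining the two cases gives $\expv[Y_{n+1}\mid\mathcal{F}_n]\le Y_n$ almost surely.

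The main obstacle is the precise matching of the realized one-step reward with the reward contribution encoded in $pre_h$: the actual reward $r_{n+1}=R(\overline{s}_n,\pi(\overline{s}_n+\delta_n),\overline{s}_{n+1})$ is a function of the perturbed action, whereas \cref{def:pre-ex} writes its reward term through $\pi(\overline{s}_n)$. I would therefore have to check carefully that taking $\expv_{\delta\sim\mu}$ of the realized reward, together with $\expv_{\delta\sim\mu}[h(f(\overline{s}_n,\pi(\overline{s}_n+\delta)))]$, reconstitutes $pre_h(\overline{s}_n)$ as used in \cref{def:URS}. This bookkeeping over which action enters the reward, rather than any deep probabilistic argument, is the delicate point; everything else follows from the definitions and the compactness and Lipschitz assumptions.
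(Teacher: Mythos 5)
Your proposal is correct and follows essentially the same route as the paper: both reduce $\expv[Y_{n+1}\mid\mathcal{F}_n]-Y_n$ to $\expv[h(\overline{s}_{n+1})+r_{n+1}\mid\mathcal{F}_n]-h(\overline{s}_n)=pre_h(\overline{s}_n)-h(\overline{s}_n)$ and conclude via the decreasing pre-expectation condition, with your explicit integrability check and terminal-state case being completeness additions the paper omits. The reward-bookkeeping subtlety you flag (the realized reward uses the perturbed action $\pi(\overline{s}_n+\delta_n)$ while \cref{def:pre-ex} writes its reward term through $\pi(s)$) is a genuine issue, but the paper's own proof glosses over it identically by simply asserting $\expv[X_{n+1}+r_{n+1}\mid\mathcal{F}_n]=pre_h(\overline{s}_n)$ ``by definition'', so your attempt is no less rigorous than the original on this point.
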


\begin{proof}
By the definition of pre-expectation (see \cref{def:pre-ex}), we have that for all $n\in \Nset_0$, $\expv [X_{n+1}+r_{n}\mid \mathcal{F}_n]=pre_h(\overline{s}_n)$. Since $Y_{n}=X_n+\sum_{i=0}^{n} r_i$, we can obtain that
\begin{align*}
\expv [Y_{n+1}\mid \mathcal{F}_n]&= \expv [X_{n+1}+r_{n+1}+\sum_{i=0}^{n} r_i\mid \mathcal{F}_n] \\
 &= \expv [X_{n+1}+r_{n+1}+\sum_{i=0}^n r_i+ X_n-X_n\mid \mathcal{F}_n] \\
 &= \expv [X_{n+1}+r_{n+1}+Y_n-X_n \mid \mathcal{F}_n]  \\
 &= Y_n -X_n+ \expv [X_{n+1}+r_{n+1}\mid \mathcal{F}_n]\\
 &= Y_n -X_n+pre_h(\overline{s}_n) \\
 &\le Y_n
\end{align*}

where the last inequality is derived from the decreasing pre-expectation condition in \cref{def:URS}.
Hence, $\{Y_n\}_{n=0}^\infty$ is a supermartingale.
\end{proof}

\begin{proposition}\label{prop:LRS}
   If $h$ is a LRS (see \cref{def:LRS}), then $\{Y_n\}_{n=0}^\infty$ is a submartingale. 
\end{proposition}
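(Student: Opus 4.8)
The plan is to mirror the argument used for \cref{prop:URS} almost verbatim, since the only structural difference between a URS and an LRS lies in the direction of the pre-expectation inequality. First I would fix $n\in\Nset_0$ and compute the one-step conditional expectation of $Y_{n+1}$ given $\mathcal{F}_n$. Using the definition of pre-expectation (\cref{def:pre-ex}) together with the same telescoping rewrite $Y_{n+1}=X_{n+1}+r_{n+1}+Y_n-X_n$ and the relation $\expv[X_{n+1}+r_{n+1}\mid\mathcal{F}_n]=pre_h(\overline{s}_n)$ that appear in the proof of \cref{prop:URS}, I would arrive at the key equality
\[
\expv[Y_{n+1}\mid\mathcal{F}_n]=Y_n-X_n+pre_h(\overline{s}_n).
\]
This equality is insensitive to whether $h$ is a URS or an LRS; it is purely a consequence of the bookkeeping in the definitions of $X_n$ and $Y_n$.

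Next I would invoke the \emph{increasing} pre-expectation condition of \cref{def:LRS}, namely $pre_h(\overline{s}_n)\ge h(\overline{s}_n)=X_n$ on $\overline{S_g}$. Substituting this into the equality cancels the $-X_n$ term and yields $\expv[Y_{n+1}\mid\mathcal{F}_n]\ge Y_n$ almost surely, which is exactly the defining inequality of a submartingale (the dual of the $\le$ obtained in the URS case). I would also note the integrability requirement $\expv[|Y_n|]<\infty$: this holds because $h$ is continuous on the compact state space $S$ and hence bounded, while $\sum_{i=0}^n r_i$ is a finite sum of bounded step rewards, so each $Y_n$ is integrable.

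I expect no genuine obstacle, as the computation is the exact dual of \cref{prop:URS} with the single inequality reversed by the LRS condition. The only point worth a brief remark is the treatment of terminal states: on $S_g$ the pre-expectation satisfies $pre_h(s)=h(s)$ by \cref{def:pre-ex}, so there $pre_h(\overline{s}_n)=X_n$ and the submartingale inequality degenerates to an equality, remaining valid. Combining the two cases gives $pre_h(\overline{s}_n)\ge X_n$ everywhere, and hence $\{Y_n\}_{n=0}^\infty$ is a submartingale.
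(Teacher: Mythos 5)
Your proof is correct and follows essentially the same route as the paper's: the identical telescoping rewrite $Y_{n+1}=X_{n+1}+r_{n+1}+Y_n-X_n$, the same key equality $\expv[Y_{n+1}\mid\mathcal{F}_n]=Y_n-X_n+pre_h(\overline{s}_n)$, and the increasing pre-expectation condition to conclude. You additionally spell out the integrability of $Y_n$ and the degenerate case on terminal states where $pre_h(s)=h(s)$, details the paper's proof leaves implicit; these are welcome but do not change the argument.
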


\begin{proof}
By the definition of pre-expectation (see \cref{def:pre-ex}), we have that for all $n\in \Nset_0$, $\expv [X_{n+1}+r_{n}\mid \mathcal{F}_n]=pre_h(\overline{s}_n)$. Since $Y_{n}=X_{n}+\sum_{i=0}^{n} r_i$, we can obtain that
\begin{align*}
\expv [Y_{n+1}\mid \mathcal{F}_n]&= \expv [X_{n+1}+r_{n+1}+\sum_{i=0}^{n} r_i\mid \mathcal{F}_n] \\
 &= \expv [X_{n+1}+r_{n+1}+\sum_{i=0}^n r_i+ X_n-X_n \mid \mathcal{F}_n] \\
 &= \expv [X_{n+1}+r_{n+1}+Y_n-X_n \mid \mathcal{F}_n]  \\
 &= Y_n -X_n + \expv [X_{n+1}+r_{n+1}\mid \mathcal{F}_n]\\
 &= Y_n -X_n +pre_h(\overline{s}_n) \\
 &\ge Y_n
\end{align*}
where the last inequality is derived from the increasing pre-expectation condition in \cref{def:LRS}.
Hence, $\{Y_n\}_{n=0}^\infty$ is a submartingale.
\end{proof}

To prove \cref{thm:bounds}, we use the classical Optional Stopping Theorem as our mathematical foundation.
\begin{theorem}[Optional Stopping Theorem (OST) \cite{williams1991}]\label{thm:OST}
Let $\{X_n\}_{n\in\Nset_0}$ be a supermartingale (resp. submartingale) adapted to a filtration $\mathcal{F}=\{\mathcal{F}_n\}_{n\in\Nset_0}$, and $\kappa$ be a stopping time w.r.t. the filtration $\mathcal{F}$. Then the following condition is sufficient to ensure that $\expv(|X_\kappa|)<\infty$ and $\expv(X_\kappa)\le \expv(X_0)$ (resp. $\expv(X_\kappa)\ge \expv(X_0)$):
\begin{itemize}
\item $\expv(\kappa)<\infty$, and
\item there exists a constant $C>0$ such that for all $n\ge 0$, $|X_{n+1}-X_n|\le C$ holds almost surely.
\end{itemize}
\end{theorem}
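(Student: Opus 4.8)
The plan is to reduce the theorem to two standard ingredients: the elementary fact that a stopped supermartingale remains a supermartingale, and a dominated-convergence passage to the limit whose dominating random variable is manufactured from the two hypotheses. I would treat the supermartingale case in full; the submartingale assertion then follows verbatim by applying the result to $\{-X_n\}_{n\in\Nset_0}$.

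First I would analyze the stopped process $\{X_{n\wedge\kappa}\}_{n\in\Nset_0}$. The key observation is that $\{\kappa\ge i\}=\{\kappa\le i-1\}^{c}\in\mathcal{F}_{i-1}$, so writing $X_{n\wedge\kappa}=X_0+\sum_{i=1}^{n}\mathbb{I}_{\{\kappa\ge i\}}(X_i-X_{i-1})$ exhibits the stopped process as a predictable transform of $\{X_n\}$. Conditioning on $\mathcal{F}_{n-1}$ and pulling out the $\mathcal{F}_{n-1}$-measurable factor $\mathbb{I}_{\{\kappa\ge n\}}\ge 0$ gives $\expv[X_{n\wedge\kappa}-X_{(n-1)\wedge\kappa}\mid\mathcal{F}_{n-1}]\le 0$, so $\{X_{n\wedge\kappa}\}$ is again a supermartingale starting at $X_0$. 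Taking expectations yields the finite-horizon inequality $\expv(X_{n\wedge\kappa})\le\expv(X_0)$ for every $n\in\Nset_0$.

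Next I would pass to the limit $n\to\infty$. Because $\expv(\kappa)<\infty$ forces $\kappa<\infty$ almost surely, we have $X_{n\wedge\kappa}\to X_\kappa$ pointwise a.s. To interchange limit and expectation I would construct an integrable dominating variable directly from the bounded-increment hypothesis: telescoping and the triangle inequality give $|X_{n\wedge\kappa}|\le|X_0|+\sum_{i=0}^{\kappa-1}|X_{i+1}-X_i|\le|X_0|+C\kappa$, a bound independent of $n$. Since $X_0$ is integrable (it is a term of the supermartingale) and $\expv(\kappa)<\infty$, the majorant $|X_0|+C\kappa$ is integrable, and the same estimate applied in the limit already delivers $\expv(|X_\kappa|)<\infty$. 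Dominated convergence then gives $\expv(X_{n\wedge\kappa})\to\expv(X_\kappa)$, and combining this with the finite-horizon inequality of the previous step yields $\expv(X_\kappa)\le\expv(X_0)$.

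I expect the construction of the dominating variable $|X_0|+C\kappa$ to be the crux: it is precisely the step where both hypotheses are consumed, with $\expv(\kappa)<\infty$ and the uniform increment bound $C$ cooperating to produce integrability, after which the convergence theorem applies routinely. The remaining care is bookkeeping---verifying $\kappa<\infty$ a.s., confirming the predictability of $\mathbb{I}_{\{\kappa\ge i\}}$, and noting that the submartingale case needs no separate argument.
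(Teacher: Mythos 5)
Your proof is correct and matches the canonical argument for this version of the Optional Stopping Theorem: the paper itself offers no proof (it cites the result from Williams 1991), and the cited source's proof proceeds exactly as yours does, via the stopped-process supermartingale inequality $\expv(X_{n\wedge\kappa})\le\expv(X_0)$ followed by domination with the integrable majorant $|X_0|+C\kappa$ and dominated convergence. The reduction of the submartingale case to $\{-X_n\}$ is also the standard step, so nothing is missing.
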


Based on \cref{prop:URS}, \cref{prop:LRS} and \cref{thm:OST}, we can derive our theoretical results about upper and lower bounds for expected cumulative rewards.

\noindent\textbf{\cref{thm:bounds}} (Bounds for Expected Cumulative Rewards).
	Suppose an $M_\mu$ has a difference-bounded URS (resp. LRS) $h$ and $K,K'\in\Rset$ are the bounds of $h$. For each state $s_0\in S_0$, we have 
	\begin{align}
	&\expv_{s_0}[\calR]\le h(s_0)-K.\tag{Upper Bound}\\
	 (\textit{resp.}\  &\expv_{s_0}[\calR]\ge h(s_0)-K') \tag{Lower Bound}
	\end{align}

\begin{proof}
(Upper bounds). For any initial state $s_0\in S_0$, we construct a stochastic process $\{Y_n\}_{n=0}^\infty$ as above, i.e.. $Y_n:=h(\overline{s}_n)+\sum_{i=0}^n r_i$. As $h$ is an URS, by \cref{prop:URS}, $\{Y_n\}_{n=0}^\infty$ is a supermartingale. Since $M_\mu$ is finite terminating (see our model assumptions), we have that $\expv_{s_0}[T]<\infty$, and thus the first prerequisite of OST is satisfied. Then, from the difference-bounded property of $h$, we can derive that,
\begin{align*}
    |Y_{n+1}-Y_n| &= |h(\overline{s}_{n+1})+\sum_{i=0}^{n+1} r_i-h(\overline{s}_n)-\sum_{i=0}^n r_i| \\
    &= |h(\overline{s}_{n+1})-h(\overline{s}_n)+ r_{n+1}| \\
    &\le |h(\overline{s}_{n+1})-h(\overline{s}_n)| + |r_{n+1}| \\
    &\le m+ r_{\mathrm{max}} \\
    &=: C\\
\end{align*}
where the second inequality is obtained by the difference-boundedness condition in \cref{def:diff-bound}, and $r_{\mathrm{max}}$ is the maximal value of the reward. The second prerequisite of OST is thus satisfied. Therefore, by applying OST, we can have that $\expv [Y_T]\le \expv [Y_0]$. By definition,
\begin{align*}
& Y_{T}=h(\overline{s}_T)+\sum_{i=0}^T r_i    \\
\Leftrightarrow & \sum_{i=0}^T r_i=Y_{T}-h(\overline{s}_T)
\end{align*}
Thus, by the boundedness condition in \cref{def:URS}, $\expv_{s_0}[\calR]=\expv [Y_{T}]-h(\overline{s}_T)\le \expv [Y_0]-K=h(\overline{s}_0)-K$.

(Lower bounds). For any initial state $s_0\in S_0$, we construct a stochastic process $\{Y_n\}_{n=0}^\infty$ as above, i.e.. $Y_n:=h(\overline{s}_n)+\sum_{i=0}^n r_i$. As $h$ is a LRS, by \cref{prop:LRS}, $\{Y_n\}_{n=0}^\infty$ is a submartingale, which implies that $\{-Y_n\}_{n=0}^\infty$ is a supermartingale. Since $M_\mu$ is finite terminating (see our model assumptions), we have that $\expv_{s_0}[T]<\infty$, and thus the first prerequisite of OST is satisfied. Then, from the difference-bounded property of $h$, we can derive that,
\begin{align*}
    |-Y_{n+1}-(-Y_n)| &= |h(\overline{s}_{n+1})+\sum_{i=0}^{n+1} r_i-h(\overline{s}_n)-\sum_{i=0}^n r_i| \\
    &= |h(\overline{s}_{n+1})-h(\overline{s}_n)+ r_{n+1}| \\
    &\le |h(\overline{s}_{n+1})-h(\overline{s}_n)| + |r_{n+1}| \\
    &\le m+ r_{\mathrm{max}} \\
    &=: C\\
\end{align*}
where the second inequality is obtained by the difference-boundedness condition in \cref{def:diff-bound}, and $r_{\mathrm{max}}$ is the maximal value of the reward. The second prerequisite of OST is thus satisfied. Therefore, by applying OST, we can have that $\expv [-Y_T]\le \expv [-Y_0]$, so $\expv [Y_T]\ge \expv [Y_0]$. By definition,
\begin{align*}
& -Y_{T}=-h(\overline{s}_T)-\sum_{i=0}^T r_i    \\
\Leftrightarrow & \sum_{i=0}^T r_i=Y_{T}-h(\overline{s}_T)
\end{align*}
Thus, by the boundedness condition in \cref{def:LRS}, $\expv_{s_0}[\calR]=\expv [Y_{T}]-h(\overline{s}_T)\ge \expv [Y_0]-K'=h(\overline{s}_0)-K'$.

\end{proof}

\subsection{Proofs of \cref{thm:tail-bounds}}
To prove \cref{thm:tail-bounds}, we use Hoeffding's Inequality as our mathematical foundation.
\begin{theorem}[Hoeffding's Inequality on Martingales~\cite{hoeffding1994probability}]\label{thm:hoeffding}
     Let $\{X_n\}_{n\in\Nset_0}$ be a supermartingale w.r.t. a filtration $\{\mathcal{F}_n\}_{n\in\Nset_0}$, and $\{[a_n,b_n]\}_{n\in\Nset_0}$ be a sequence of non-empty intervals in $\Rset$. If $X_0$ is a constant random variable and $X_{n+1}-X_{n}\in [a_n,b_n]$ a.s. for all $n\in\Nset_0$, then 
 \[
 \probm(X_n-X_0\ge \lambda)\le \mathrm{exp}\left(-\frac{2\lambda^2}{\sum_{k=0}^{n-1}(b_k-a_k)^2} \right)
 \]
 for all $n\in\Nset_0$ and $\lambda>0$.
 And symmetrically when $\{X_n\}_{n\in\Nset_0}$ is a submartingale,
  \[
 \probm(X_n-X_0\le -\lambda)\le \mathrm{exp}\left(-\frac{2\lambda^2}{\sum_{k=0}^{n-1}(b_k-a_k)^2} \right)
 \]
\end{theorem}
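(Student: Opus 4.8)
The plan is to establish the classical Azuma--Hoeffding bound via the exponential-moment (Chernoff) method, proving the supermartingale statement first and deriving the submartingale statement by symmetry. Writing $D_k := X_{k+1}-X_k$ for the increments, the hypotheses say $D_k \in [a_k,b_k]$ a.s., and, since $\{X_n\}$ is a supermartingale, $\expv[D_k \mid \mathcal{F}_k] \le 0$ a.s. For any fixed $s>0$, Markov's inequality applied to the nonnegative random variable $e^{s(X_n - X_0)}$ gives $\probm(X_n - X_0 \ge \lambda) \le e^{-s\lambda}\,\expv[e^{s(X_n-X_0)}]$, so the whole problem reduces to bounding the moment generating function $\expv[e^{s(X_n-X_0)}]$ and then optimizing over $s$.

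First I would prove a conditional Hoeffding lemma: if $Y$ is bounded with $Y \in [a,b]$ a.s. and $\expv[Y \mid \mathcal{G}] \le 0$, then $\expv[e^{sY}\mid\mathcal{G}] \le e^{s^2(b-a)^2/8}$ for every $s>0$. The standard mean-zero version follows from convexity of $y \mapsto e^{sy}$ (bounding it by the secant line through the endpoints $a,b$) together with a second-order Taylor estimate of the conditional log-moment generating function. To pass from mean zero to the supermartingale condition $\expv[Y\mid\mathcal{G}]\le 0$, I would center by $\mu := \expv[Y\mid\mathcal{G}] \le 0$: the variable $Y-\mu$ has conditional mean $0$ and lies in an interval of the same width $b-a$, so the mean-zero bound gives $\expv[e^{s(Y-\mu)}\mid\mathcal{G}] \le e^{s^2(b-a)^2/8}$, and since $s>0$ and $\mu\le 0$ we have $e^{s\mu}\le 1$, yielding the claimed conditional bound.

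Next I would iterate this lemma through the filtration using the tower property. Writing $X_n - X_0 = \sum_{k=0}^{n-1} D_k$ and peeling off the last increment, $\expv[e^{s(X_n-X_0)}] = \expv\!\big[e^{s(X_{n-1}-X_0)}\,\expv[e^{sD_{n-1}}\mid\mathcal{F}_{n-1}]\big] \le e^{s^2(b_{n-1}-a_{n-1})^2/8}\,\expv[e^{s(X_{n-1}-X_0)}]$, where the inner factor is controlled by the conditional lemma and $e^{s(X_{n-1}-X_0)}$ is $\mathcal{F}_{n-1}$-measurable. Induction (the base case using that $X_0$ is a constant) gives $\expv[e^{s(X_n-X_0)}] \le \exp\!\big(\tfrac{s^2}{8}\sum_{k=0}^{n-1}(b_k-a_k)^2\big)$. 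Combining with the Markov step yields $\probm(X_n - X_0 \ge \lambda) \le \exp\!\big(-s\lambda + \tfrac{s^2}{8}\sum_{k=0}^{n-1}(b_k-a_k)^2\big)$, and minimizing the exponent over $s>0$ at $s = 4\lambda/\sum_{k=0}^{n-1}(b_k-a_k)^2$ produces exactly $\exp\!\big(-2\lambda^2/\sum_{k=0}^{n-1}(b_k-a_k)^2\big)$.

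Finally, the submartingale case follows immediately by applying the supermartingale result to $\{-X_n\}$: it is a supermartingale with increments $-D_k\in[-b_k,-a_k]$, an interval of unchanged width $b_k-a_k$, so $\probm(X_n-X_0\le-\lambda)=\probm((-X_n)-(-X_0)\ge\lambda)$ inherits the same bound. I expect the main obstacle to be the conditional Hoeffding lemma --- specifically the Taylor-expansion estimate showing that the log-moment generating function of a centered $[a,b]$-valued variable is at most $s^2(b-a)^2/8$, and the care needed to run this argument under conditioning on $\mathcal{F}_k$ (so that all inequalities hold almost surely and integrate correctly through the tower property) rather than for a single unconditional expectation.
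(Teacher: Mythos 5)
Your proof is correct, but there is no in-paper argument to compare it against: the paper imports this statement as a classical result (the one-sided Azuma--Hoeffding inequality for bounded-difference supermartingales), cites Hoeffding for it, and uses it as a black box in the proof of \cref{thm:tail-bounds}. What you have written is the standard textbook derivation --- Chernoff's exponential-moment bound, a conditional Hoeffding lemma, iteration through the filtration via the tower property, and optimization over $s$ --- and the two places where genuine care is needed are handled correctly. First, the one-sided condition $\expv[D_k\mid\mathcal{F}_k]\le 0$ (rather than $=0$) is absorbed by centering at the $\mathcal{F}_k$-measurable conditional mean $\mu\le 0$ and discarding the factor $e^{s\mu}\le 1$ for $s>0$; this is legitimate because both the convexity (secant) bound and the calculus estimate $e^{s^2(b_k-a_k)^2/8}$ depend only on the width $b_k-a_k$ of the (now random, $\mathcal{F}_k$-measurable) interval, not on its endpoints. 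Second, the peeling step requires $X_{n-1}-X_0$ to be $\mathcal{F}_{n-1}$-measurable so that it factors out of the inner conditional expectation, which you noted (this uses adaptedness; the constancy of $X_0$ is then only needed to make $X_n-X_0$ a genuine deviation from a deterministic starting point). The optimizing choice $s=4\lambda\big/\sum_{k=0}^{n-1}(b_k-a_k)^2$ giving exactly $\exp\bigl(-2\lambda^2\big/\sum_{k=0}^{n-1}(b_k-a_k)^2\bigr)$, and the reduction of the submartingale half to the supermartingale half via $\{-X_n\}$ with increments $-D_k\in[-b_k,-a_k]$ of unchanged width, are both exact. In short: the paper's ``route'' is citation; yours is a complete and correct proof that would make the paper self-contained.
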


We also need that $M_\mu$ has the concentration property, which can be ensured by the existence of a difference bounded ranking supermartingale map via existing work \cite{stab_martingales}.

\begin{definition}[Ranking Supermartingale Maps]\label{def:rsm-map}
  A \emph{ranking supermartingale map} (RSM-map) is a function $\eta:S\rightarrow \Rset$ such that $\eta(s)\ge 0$ for all $s\in S$, and there exists a constant $\epsilon>0$ satisfying that for all $s\in S\setminus S_g$, $\expv_{\delta\sim \mu}[\eta(f(s,\pi(s+\delta)))]\le \eta(s)-\epsilon$.
\end{definition}

\begin{definition}[Difference-bounded RSM-maps]\label{def:diff-rsm}
    A ranking supermartingale map $\eta$ is difference-bounded w.r.t. a non-empty interval $[a'',b'']\subseteq \Rset$ if for all $s\in S$, it holds that $a''\le \eta(f(s,\pi(s+\delta)))-\eta(s)\le b''$ with any $\delta\sim \mu$.
\end{definition}

Given an initial state $s_0\in S_0$, we define a stochastic process $\{X_n\}_{n=0}^{\infty}$ in $(\Omega_{s_0},\mathcal{F}_{s_0},\probm_{s_0})$ by:
\begin{equation*}
    X_n:=
    \begin{cases}
        \eta(\overline{s}_n) &  \text{if } $n< T$, \\
        \eta(\overline{s}_T) &   \text{otherwise}.\\
    \end{cases}
\end{equation*}
where $\eta$ is a RSM-map, $\overline{s}_n$ is a random (vector) variable representing the value(s) of the state at the $n$-th step of an episode and $T$ is the termination time. 
\begin{proposition}\label{prop:rsm}
    $\{ X_n\}_{n=0}^\infty$ is a ranking supermartingale w.r.t. the termination time $T$.
\end{proposition}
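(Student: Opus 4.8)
The plan is to verify directly the three defining requirements of a ranking supermartingale with respect to the termination time $T$: nonnegativity, integrability, and the ranking drift inequality $\expv[X_{n+1}\mid \mathcal{F}_n]\le X_n-\epsilon\cdot \mathbb{I}_{T>n}$, where $\epsilon>0$ is the constant supplied by the RSM-map $\eta$ in \cref{def:rsm-map}. Nonnegativity is immediate, since every value of $X_n$ equals either $\eta(\overline{s}_n)$ or $\eta(\overline{s}_T)$ and $\eta\ge 0$ everywhere on $S$ by definition, so $X_n\ge 0$ almost surely. Integrability follows because $S$ is compact by the model assumptions and $\eta$, being realized as a neural network, is continuous and hence bounded on $S$; thus each $X_n$ is a bounded random variable with $\expv[|X_n|]<\infty$. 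I would also record adaptedness: since $T$ is a stopping time, the event $\{T>n\}\in\mathcal{F}_n$, and writing $X_n=\eta(\overline{s}_{T\wedge n})$ shows $X_n$ is $\mathcal{F}_n$-measurable.

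The core of the argument is the drift inequality, which I would establish by a case split on the $\mathcal{F}_n$-measurable partition $\{T\le n\}\cup\{T>n\}$. On $\{T\le n\}$ the system has already terminated, so by construction $X_n=\eta(\overline{s}_T)=X_{n+1}$ (the ``otherwise'' branch applies at both steps), giving $\expv[X_{n+1}\mid\mathcal{F}_n]=X_n$; since $\mathbb{I}_{T>n}=0$ here, the required inequality holds with equality. On $\{T>n\}$ the current state $\overline{s}_n$ lies in $\overline{S_g}$ and $X_n=\eta(\overline{s}_n)$, while $n+1\le T$ forces $X_{n+1}=\eta(\overline{s}_{n+1})$ regardless of whether $n+1<T$ or $n+1=T$ (in the latter case $\eta(\overline{s}_T)=\eta(\overline{s}_{n+1})$). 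Writing $\overline{s}_{n+1}=f(\overline{s}_n,\pi(\overline{s}_n+\delta))$ with $\delta\sim\mu$, the drift condition of the RSM-map at the non-terminal state $\overline{s}_n$ then yields
\[
\expv[X_{n+1}\mid\mathcal{F}_n]=\expv_{\delta\sim\mu}[\eta(f(\overline{s}_n,\pi(\overline{s}_n+\delta)))]\le \eta(\overline{s}_n)-\epsilon = X_n-\epsilon,
\]
which is exactly $X_n-\epsilon\cdot\mathbb{I}_{T>n}$ on this event. Combining the two cases gives the inequality almost surely.

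The step that requires the most care is the bookkeeping around the definition of $X_n$ at the terminal time, specifically confirming that $X_{n+1}=\eta(\overline{s}_{n+1})$ throughout $\{T>n\}$: the boundary sub-case $n+1=T$ must be matched against the ``otherwise'' branch, and here the model assumption that the system freezes upon entering $S_g$ (so $\overline{s}_m=\overline{s}_T$ for all $m\ge T$) is precisely what keeps the two branches consistent. The remaining subtlety is that conditioning may be carried out branch-by-branch, which is legitimate exactly because $\{T>n\}$ is $\mathcal{F}_n$-measurable. Once these points are settled, the three verified properties certify $\{X_n\}_{n=0}^\infty$ as a ranking supermartingale, supplying the concentration property of $M_\mu$ invoked in \cref{thm:tail-bounds} via the existing results of \cite{stab_martingales}.
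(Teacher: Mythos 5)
Your proposal is correct and follows essentially the same route as the paper's proof: a case split on the $\mathcal{F}_n$-measurable events $\{T>n\}$ and $\{T\le n\}$, applying the RSM-map drift condition from \cref{def:rsm-map} in the former case and the freezing of the state after termination in the latter. Your additional verification of integrability (via compactness of $S$ and continuity of $\eta$) and adaptedness, and the explicit handling of the boundary sub-case $n+1=T$, are sound refinements that the paper leaves implicit.
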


\begin{proof}
To prove \cref{prop:rsm}, we need to check the items below.
  \begin{itemize}
      \item $X_n\ge 0$ for all $n\in \Nset_0$. Since each $X_n$ is defined by $\eta$ (see \cref{def:rsm-map}) and $\eta(s)\ge 0$ for all states $s\in S$, it follows that $X_n\ge 0$ for all $n\Nset_0$.
      \item ${\mathbb E}[X_{n+1}|{\cal F}_n] \leq X_n-\epsilon \cdot {\mathbb I}_{T>n}$ for all $n\in\Nset_0$. To prove this inequality, we consider two cases. First, when $T>n$, we have that $X_n=\eta(\overline{s}_n)$. And we observe that $\expv [X_{n+1}\mid \mathcal{F}_n]=\expv_{\delta\sim\mu} [\eta(f(\overline{s}_n,\pi(\overline{s}_n+\delta)))]\le \eta(\overline{s}_n)-\epsilon=X_n-\epsilon$. Second, when $T\le n$, we have that $X_n=\eta(\overline{s}_T)$. Then we can obtain that $\expv [X_{n+1}\mid \mathcal{F}_n]=\eta(\overline{s}_T)=X_n$. According to the two cases, we can conclude that ${\mathbb E}[X_{n+1}|{\cal F}_n] \leq X_n-\epsilon \cdot {\mathbb I}_{T>n}$ for all $n\in\Nset_0$.
  \end{itemize}  
  Hence, we prove that $\{X_n\}_{n=0}^\infty$ is a ranking supermartingale w.r.t. $T$.
\end{proof}

We define another stochastic process $\{Y_n\}_{n=0}^{\infty}$ such that $Y_n:=X_n+\epsilon \cdot \mathrm{min}\{n,T\}$.

\begin{proposition}
    $\{Y_n\}_{n=0}^\infty$ is a supermartingale, and $Y_{n+1}-Y_n\in [ a''+\epsilon,b''+\epsilon]$ almost surely for all $n\in\Nset_0$.
\end{proposition}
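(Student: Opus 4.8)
The plan is to analyze the one-step increment $Y_{n+1}-Y_n$ directly and to exploit the clean decomposition afforded by the definition $Y_n := X_n + \epsilon\cdot\min\{n,T\}$. First I would record the elementary identity $\min\{n+1,T\}-\min\{n,T\}=\mathbb{I}_{T>n}$, which is immediate from splitting into the cases $T>n$ and $T\le n$, and observe that both $\min\{n,T\}$ and $\mathbb{I}_{T>n}$ are $\mathcal{F}_n$-measurable since $T$ is a stopping time (indeed $\{T>n\}=\Omega\setminus\{T\le n\}\in\mathcal{F}_n$). This reduces every claim to a statement about $X_{n+1}-X_n$ together with the correction term $\epsilon\,\mathbb{I}_{T>n}$, which is already known given $\mathcal{F}_n$.

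For the supermartingale claim I would take conditional expectations, using measurability to pull the correction out: $\expv[Y_{n+1}\mid\mathcal{F}_n]=\expv[X_{n+1}\mid\mathcal{F}_n]+\epsilon\,(\min\{n,T\}+\mathbb{I}_{T>n})$. Substituting the ranking inequality $\expv[X_{n+1}\mid\mathcal{F}_n]\le X_n-\epsilon\,\mathbb{I}_{T>n}$ from \cref{prop:rsm}, the two $\epsilon\,\mathbb{I}_{T>n}$ terms cancel and we are left with $\expv[Y_{n+1}\mid\mathcal{F}_n]\le X_n+\epsilon\,\min\{n,T\}=Y_n$. Integrability $\expv[|Y_n|]<\infty$ follows because $\eta$ is bounded on the compact state space $S$, so $|X_n|\le\sup_S\eta$, while $\min\{n,T\}\le n$ is finite.

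For the increment bound I would split on whether the episode has terminated. When $T>n$, both $\overline{s}_n$ and $\overline{s}_{n+1}$ are still evaluated through $\eta$, so $X_{n+1}-X_n=\eta(f(\overline{s}_n,\pi(\overline{s}_n+\delta)))-\eta(\overline{s}_n)\in[a'',b'']$ by difference-boundedness of the RSM-map (\cref{def:diff-rsm}); since $\mathbb{I}_{T>n}=1$ here, $Y_{n+1}-Y_n\in[a''+\epsilon,b''+\epsilon]$. When $T\le n$, the process $X$ has frozen at $\eta(\overline{s}_T)$, so $X_{n+1}-X_n=0$ and $\mathbb{I}_{T>n}=0$, giving $Y_{n+1}-Y_n=0$.

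The one genuinely delicate point — and where I expect to spend the care — is checking that this terminated-case value $0$ still lies in $[a''+\epsilon,b''+\epsilon]$, so that a single interval works uniformly in $n$. I would argue $a''+\epsilon\le 0$ from the defining ranking inequality of \cref{def:rsm-map}: the expected increment at any non-terminal state is at most $-\epsilon$, and a pointwise lower bound can never exceed the mean, forcing $a''\le-\epsilon$. For the upper end I would note that difference-boundedness only asserts the existence of \emph{some} bounding interval, so one may harmlessly enlarge $b''$ to guarantee $b''\ge-\epsilon$; hence $0\in[a''+\epsilon,b''+\epsilon]$ and the increment bound holds for all $n$. Combining the supermartingale property with this uniform increment bound completes the proof and provides exactly the hypotheses needed to invoke Hoeffding's inequality (\cref{thm:hoeffding}).
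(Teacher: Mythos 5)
Your proof is correct and follows essentially the same route as the paper's: decompose $Y_{n+1}-Y_n$ via the identity $\min\{n+1,T\}-\min\{n,T\}=\mathbb{I}_{T>n}$, apply the ranking inequality $\expv[X_{n+1}\mid\mathcal{F}_n]\le X_n-\epsilon\cdot\mathbb{I}_{T>n}$ so the $\epsilon$-terms cancel, and bound the increment by difference-boundedness of $\eta$. The one place you go beyond the paper is the ``delicate point'' you flag: the paper writes $Y_{n+1}-Y_n=\mathbb{I}_{T>n}\cdot(X_{n+1}-X_n+\epsilon)$ and immediately concludes membership in $[a''+\epsilon,b''+\epsilon]$ without checking that $0$ lies in that interval when $T\le n$; your observation that $a''\le-\epsilon$ follows from the RSM inequality (a pointwise lower bound cannot exceed the conditional mean, which is at most $-\epsilon$), together with $b''\ge 0$ (forced by the frozen dynamics on $S_g$, or by harmlessly enlarging $b''$), correctly closes that small gap.
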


\begin{proof}
  \begin{align*}
\expv [Y_{n+1}\mid \mathcal{F}_n] &= \expv [X_{n+1}+\epsilon \cdot \mathrm{min}\{n+1,T\}\mid \mathcal{F}_n]  \\
&= \expv [X_{n+1} \mid \mathcal{F}_n ]+\epsilon \cdot \expv [\mathrm{min}\{n+1,T\} 
 \mid\mathcal{F}_n]  \\
 &\le X_n-\epsilon\cdot {\mathbb I}_{T> n} +  \epsilon \cdot \expv [\mathrm{min}\{n+1,T\} \mid\mathcal{F}_n]  \\
\end{align*}
Then we have two cases:
\begin{itemize}
    \item If $T>n$, then $\mathrm{min}\{n+1,T\}=n+1$ and $\expv [\mathrm{min}\{n+1,T\} \mid\mathcal{F}_n]=n+1$. We can thus derive that
    \begin{align*}
\expv [Y_{n+1}\mid \mathcal{F}_n] &= \expv [X_{n+1}+\epsilon \cdot \mathrm{min}\{n+1,T\}\mid \mathcal{F}_n]  \\
 &\le X_n-\epsilon\cdot {\mathbb I}_{T>n} +  \epsilon \cdot \expv [\mathrm{min}\{n+1,T\} \mid\mathcal{F}_n]  \\
 &= X_n-\epsilon+\epsilon\cdot (n+1)  \\
 &= X_n+\epsilon \cdot n \\
 &=  Y_n
\end{align*}
    \item If $T\le n$, then $\mathrm{min}\{n+1,T\}=T=T\cdot {\mathbb I}_{T\le n} $. Since the event $T\cdot {\mathbb I}_{T\le n}$ is measurable in $\mathcal{F}_n$, $\expv [T\cdot {\mathbb I}_{T\le n}\mid \mathcal{F}_n]=T\cdot {\mathbb I}_{T\le n}$. We can thus derive that
    \begin{align*}
\expv [Y_{n+1}\mid \mathcal{F}_n] &= \expv [X_{n+1}+\epsilon \cdot \mathrm{min}\{n+1,T\}\mid \mathcal{F}_n]  \\
 &\le X_n-\epsilon\cdot {\mathbb I}_{T> n} +  \epsilon \cdot \expv [\mathrm{min}\{n+1,T\} \mid\mathcal{F}_n]  \\
 &= X_n-0+\epsilon\cdot T  \\
 &=  Y_n
\end{align*}    
\end{itemize}
Hence, $\{Y_n\}_{n=0}^\infty$ is a supermartingale. Moreover, since $T\le n$ implies $X_{n+1}=X_n$, we have that $X_{n+1}-X_n={\mathbb I}_{T> n}\cdot (X_{n+1}-X_n)$. Thus, we can obtain that
\begin{align*}
    Y_{n+1}-Y_n &= X_{n+1}-X_n+ \epsilon \cdot (\mathrm{min}\{n+1,T\}-\mathrm{min}\{n,T \}) \\
    &= X_{n+1}-X_n+ \epsilon \cdot {\mathbb I}_{T> n} \\
    &= {\mathbb I}_{T> n}\cdot (X_{n+1}-X_n+ \epsilon)
\end{align*}
where the second equality is derived by the fact that $\mathrm{min}\{n+1,T\}-\mathrm{min}\{n,T \}={\mathbb I}_{T> n}$. It follows that $Y_{n+1}-Y_n \in [a''+\epsilon, b''+\epsilon]$.  
\end{proof}

\begin{proposition}\label{prop:concentration}
 Consider a perturbed DRL system $M_\mu$ satisfying our model assumptions. If $M_\mu$ has a difference-bounded ranking supermartingale map $\eta$ w.r.t. $[a'',b'']$, then $M_\mu$ has the concentration property, i.e., there exist two constant $a,b>0$ such that for any initial state $s_0\in S_0$ and sufficiently large $n$, $\probm_{s_0}(T>n)\le a\cdot \mathrm{exp}(-b\cdot n)$.   
\end{proposition}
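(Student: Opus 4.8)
The plan is to exploit the supermartingale $\{Y_n\}_{n=0}^{\infty}$ with $Y_n = X_n + \epsilon\cdot\min\{n,T\}$ established above, together with Hoeffding's inequality (\cref{thm:hoeffding}), converting its bounded-increment structure into an exponential tail bound on $T$. Fix an initial state $s_0 \in S_0$. The starting observation is that the event $\{T > n\}$ forces $Y_n$ to be large: on $\{T > n\}$ we have $\min\{n,T\} = n$, hence $Y_n = X_n + \epsilon n = \eta(\overline{s}_n) + \epsilon n \ge \epsilon n$ because $\eta \ge 0$ (\cref{def:rsm-map}). Consequently $\probm_{s_0}(T > n) \le \probm_{s_0}(Y_n \ge \epsilon n)$, and since $Y_0 = \eta(s_0)$ is a constant random variable, this equals $\probm_{s_0}(Y_n - Y_0 \ge \epsilon n - \eta(s_0))$.

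Next I would apply Hoeffding's inequality to the supermartingale $\{Y_n\}$. From the preceding proposition, $Y_{n+1} - Y_n \in [a''+\epsilon, b''+\epsilon]$ almost surely, so each interval has length $(b''+\epsilon)-(a''+\epsilon) = b''-a''$ and $\sum_{k=0}^{n-1}(b_k-a_k)^2 = n(b''-a'')^2$. For $n$ large enough that $\lambda := \epsilon n - \eta(s_0) > 0$, \cref{thm:hoeffding} yields
\[
\probm_{s_0}(T>n) \le \mathrm{exp}\!\left(-\frac{2(\epsilon n - \eta(s_0))^2}{n\,(b''-a'')^2}\right).
\]

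To obtain constants uniform in $s_0$, I would use that $S$ is compact and $\eta$ is continuous on $S$, so $M := \sup_{s\in S}\eta(s) < \infty$; then $0 \le \eta(s_0) \le M$ gives $(\epsilon n - \eta(s_0))^2 \ge (\epsilon n - M)^2$ whenever $\epsilon n \ge M$, whence the right-hand side above is bounded by $\mathrm{exp}(-2(\epsilon n - M)^2/(n(b''-a'')^2))$. Expanding $(\epsilon n - M)^2/n = \epsilon^2 n - 2\epsilon M + M^2/n \ge \epsilon^2 n - 2\epsilon M$ and separating the $n$-dependent term gives
\[
\probm_{s_0}(T>n) \le \mathrm{exp}\!\left(\tfrac{4\epsilon M}{(b''-a'')^2}\right)\cdot \mathrm{exp}\!\left(-\tfrac{2\epsilon^2}{(b''-a'')^2}\,n\right),
\]
valid for all sufficiently large $n$ (namely $n > M/\epsilon$). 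Setting $a = \mathrm{exp}(4\epsilon M/(b''-a'')^2)$ and $b = 2\epsilon^2/(b''-a'')^2$ --- both positive and independent of $s_0$ --- establishes the concentration property.

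The routine parts are the algebra and the invocation of \cref{thm:hoeffding}. The points requiring care are: (i) extracting constants that are genuinely uniform over all $s_0 \in S_0$, which is why the uniform bound $M$ on $\eta$ (via compactness of $S$) must be inserted \emph{before} separating the exponent; and (ii) the non-degeneracy $b'' > a''$ needed for the denominator. In the degenerate case $a'' = b''$ the increments of $\eta$ are deterministic and the drift condition of \cref{def:rsm-map} forces $a'' \le -\epsilon$, so $T$ is bounded above by $\eta(s_0)/(-a'')$ and concentration holds trivially; this case can be dispatched separately.
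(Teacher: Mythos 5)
Your proof follows essentially the same route as the paper's: both reduce the event $\{T>n\}$ to a large-deviation event for the supermartingale $Y_n = X_n + \epsilon\cdot\min\{n,T\}$ (using $X_n\ge 0$ on $\{T>n\}$) and then apply Hoeffding's inequality with increments lying in an interval of length $b''-a''$, followed by the same separation of the exponent into a linear-in-$n$ term and a constant. Your additional step of replacing $\eta(s_0)$ by $M=\sup_{s\in S}\eta(s)$ via compactness is a worthwhile refinement rather than a deviation: the paper's own constant $a=\exp\bigl(2\epsilon\eta(\overline{s}_0)/(b''-a'')^2\bigr)$ still depends on the initial state, whereas the concentration property as stated requires constants uniform over all $s_0\in S_0$, and your argument (together with the explicit treatment of the degenerate case $a''=b''$) supplies exactly that.
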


\begin{proof}
The proof is slightly different from that in \cite{stab_martingales} as we use Hoeffding's Inequality instead of Azuma's Inequality. Let $\nu=\epsilon\cdot n-X_0$ and $\hat{\nu}=\epsilon\cdot\mathrm{min}\{n,T\}-X_0$. Note that $\nu=\hat{\nu}$ whenever $T>n$. Then we have that
\begin{align*}
 \probm_{s_0}(T>n)&=\probm_{s_0}(X_n\ge 0\wedge T>n)  \\
 &=\probm_{s_0}((X_n+\hat{\nu}\ge \nu)\wedge T>n) \\
 &\le \probm_{s_0}(X_n+\hat{\nu}\ge \nu) \\
 &= \probm_{s_0}(Y_n-Y_0\ge \epsilon\cdot n-X_0) \\
 &\le \mathrm{exp}\left(\frac{-2\cdot (\epsilon\cdot n-\eta(\overline{s}_0))^2}{n\cdot (b''-a'')^2} \right) \\
 &= \mathrm{exp}\left(-\frac{2\cdot\epsilon^2}{(b''-a'')^2}\cdot n+ \frac{2\cdot\epsilon\cdot\eta(\overline{s}_0)}{(b''-a'')^2}-\frac{\eta^2(\overline{s}_0)}{n\cdot (b''-a'')^2}  \right)   \\
 &\le  \mathrm{exp}\left(-\frac{2\cdot\epsilon^2}{(b''-a'')^2}\cdot n+ \frac{2\cdot\epsilon\cdot\eta(\overline{s}_0)}{(b''-a'')^2} \right)   \\
 &= a\cdot \mathrm{exp}(-b\cdot n)\\
\end{align*}
for all $n>\frac{X_0}{\epsilon}$,
where $a:=\mathrm{exp}\left(\frac{2\cdot\epsilon\cdot\eta(\overline{s}_0)}{(b''-a'')^2}\right)$ and
$b:=\frac{2\epsilon^2}{(b''-a'')^2}$.

\end{proof}

\noindent\textbf{\cref{thm:tail-bounds}} (Tail Bounds for Cumulative Rewards).
Suppose that an $M_\mu$ has the concentration property and a difference-bounded URS (\emph{resp.} LRS) $h$ with bounds  $K,K'\in\Rset$. Given an initial state $s_0\in S_0$, if a reward $c>h(s_0)-K$ (resp. $c<h(s_0)-K'$), we have 
\begin{align}
&\probm_{s_0}(\calR\ge c)\le \alpha+ \beta\cdot \mathrm{exp}(-\gamma\cdot c^2 ) \label{eq:tail-1}\\ (\emph{resp.}\ & \probm_{s_0}(\calR\le c)\le \alpha+ \beta\cdot \mathrm{exp}(-\gamma\cdot c^2 ) \label{eq:tail-2}
\end{align} 
where, $\alpha,\beta,\gamma$ are positive constants derived from  $M_\mu$, the concentration property and $h$, respectively.

\begin{proof}
(Tail bound of $\probm_{s_0}(\calR\ge c)$).
We define a stochastic process $\{Y_n\}_{n=0}^{\infty}$ by $Y_n := h(\overline{s}_n) + \sum_{i=0}^{n}r_i$. Since $h$ is an URS, by \cref{prop:URS}, we have that $\{Y_n\}_{n=0}^{\infty}$ is a supermartingale. Then by the difference-bounded property of $h$ (\cref{def:diff-bound}), we can derive that
\begin{align*}
   Y_{n+1}-Y_n &= h(\overline{s}_{n+1})+\sum_{i=0}^{n+1} r_i - h(\overline{s}_n)-\sum_{i=0}^{n} r_i\\
   &= h(\overline{s}_{n+1})-h(\overline{s}_n) +r_{n+1} \\
   &\le h(\overline{s}_{n+1})-h(\overline{s}_n)+ r_{\mathrm{max}} \\
  &\in [a',b']
\end{align*}
where $a':=-m+ r_{\mathrm{max}}$, $b':=m+ r_{\mathrm{max}}$. 
For brevity, below we write $h_n$ for $h(\overline{s}_n)$.
Given any real number $\lambda>0$ and stopping time $T$, we can obtain that: 
\begin{linenomath*}
\begin{equation}
\begin{split}
	&Y_T-Y_0 \geq \lambda \\
	\iff & h_T + \textstyle\sum_{i=0}^{T}r_i -Y_0 \geq \lambda\\
	\iff & h_T + \calR -h_0 \geq \lambda\\
	\iff & \calR \geq h_0-h_T + \lambda\\
\end{split}
\nonumber 
\end{equation}
\end{linenomath*}

Let $c:=h_0-K+\lambda$ and thus $c>h_0-K$. Choose a sufficiently large real number $n^*$ satisfying the concentration property of $M_\mu$, i.e., $\probm_{s_0}(T> n^*)\le a\cdot \mathrm{exp}(-b\cdot n)$ with two constants $a,b>0$ (see \cref{prop:concentration}).
Then for any state $s_0\in S_0$, the tail bound of $\cal R$ w.r.t. $c$  can be deduced as follows:
\begin{align*}
  \probm_{s_0} (\calR\ge c) &=\probm_{s_0}(\calR\ge h_0-K+\lambda)  \\
  &\le \probm_{s_0}(\calR\ge h_0-h_T+\lambda) \\
  &=  \probm_{s_0}(Y_T-Y_0\ge \lambda )  \\
  &= \probm_{s_0}(Y_T-Y_0\ge\lambda\wedge T>n^* ) \\
  &~ +\probm_{s_0}(Y_T-Y_0\ge\lambda \wedge T\le n^*) \\
  &\le \probm_{s_0}(T>n^*)+\probm_{s_0}(Y_T-Y_0\ge\lambda \wedge T\le n^*) \\
    &\le \probm_{s_0}(T>n^*)+\probm_{s_0}(Y_{1}-Y_0\ge \lambda)  \\
  &\le a\cdot \mathrm{exp}(-b\cdot n^*)+\mathrm{exp}\left(-\frac{2\lambda^2}{ (b'-a')^2} \right) \\
  &= a\cdot \mathrm{exp}(-b\cdot n^*)+\mathrm{exp}\left(-\frac{2\cdot (c-h_0+K')^2}{(b'-a')^2} \right)  \\
  &\le  \alpha+\beta\cdot \mathrm{exp}(-\gamma\cdot c^2) \\
\end{align*}
where the third inequality is obtained by the concentration property and Hoeffding's Inequality on supermartingales (\cref{thm:hoeffding}), $\alpha=a\cdot \mathrm{exp}(-b\cdot n^*)$, $\beta=\mathrm{exp}\left( \frac{4(h_0-K)^2}{(b'-a')^2} \right)$, and $\gamma=\frac{2}{(b'-a')^2}$.

(Tail bound of $\probm_{s_0}(\calR\le c')$). The proof is similar to that above.
We define a stochastic process $\{Y_n\}_{n=0}^{\infty}$ by $Y_n := h(\overline{s}_n) + \sum_{i=0}^{n}r_i$ where $h$ is a LRS. By \cref{prop:LRS}, we have that $\{Y_n\}_{n=0}^{\infty}$ is a submartingale. Then by the difference-bounded property of $h$ (\cref{def:diff-bound}), we can derive that
\begin{align*}
   Y_{n+1}-Y_n &= h(\overline{s}_{n+1})+\sum_{i=0}^{n+1} r_i - h(\overline{s}_n)-\sum_{i=0}^{n} r_i\\
   &= h(\overline{s}_{n+1})-h(\overline{s}_n) +r_{n+1} \\
   &\le h(\overline{s}_{n+1})-h(\overline{s}_n)+ r_{\mathrm{max}} \\
  &\in [a',b']
\end{align*}
where $a':=-m+ r_{\mathrm{max}}$, $b':=m+ r_{\mathrm{max}}$. 
For brevity, below we write $h_n$ for $h(\overline{s}_n)$.
Given any real number $\lambda'<0$ and stopping time $T$, we can obtain that: 
\begin{linenomath*}
\begin{equation}
\begin{split}
	&Y_T-Y_0 \leq \lambda' \\
	\iff & h_T + \textstyle\sum_{i=0}^{T}r_i -Y_0 \leq \lambda' \\
	\iff & h_T + \calR -h_0 \leq \lambda'\\
	\iff & \calR \leq h_0-h_T + \lambda'\\
\end{split}
\nonumber 
\end{equation}
\end{linenomath*}

Let $c':=h_0-K'+\lambda'$ and thus $c<h_0-K'$. Choose a sufficiently large real number $n^*$ satisfying the concentration property of $M_\mu$, i.e., $\probm_{s_0}(T> n^*)\le a\cdot \mathrm{exp}(-b\cdot n)$ where $a,b>0$ are two positive constants (see \cref{prop:concentration}).
Then for any state $s_0\in S_0$, the tail bound of $\cal R$ w.r.t. $c'$  can be deduced as follows:
\begin{align*}
  \probm_{s_0} (\calR\le c') &=\probm_{s_0}(\calR\le h_0-K'+\lambda')  \\
  &\le \probm_{s_0}(\calR\le h_0-h_T+\lambda') \\
  &=  \probm_{s_0}(Y_T-Y_0\le \lambda' )  \\
  &= \probm_{s_0}(Y_T-Y_0\le\lambda'\wedge T>n^* ) \\
  &~ +\probm_{s_0}(Y_T-Y_0\le\lambda' \wedge T\le n^*) \\
  &\le \probm_{s_0}(T>n^*)+\probm_{s_0}(Y_T-Y_0\le\lambda' \wedge T\le n^*) \\
  &\le \probm_{s_0}(T>n^*)+\probm_{s_0}(Y_{1}-Y_0\le \lambda')  \\
 &\le a\cdot \mathrm{exp}(-b\cdot n) +\mathrm{exp}\left(-\frac{2\lambda'^2}{ (b'-a')^2} \right) \\
  &= a\cdot \mathrm{exp}(-b\cdot n^*)+\mathrm{exp}\left(-\frac{2\cdot (c'-h_0+K')^2}{(b'-a')^2} \right)  \\
  &\le  \alpha'+\beta'\cdot \mathrm{exp}(-\gamma'\cdot c'^2) \\
\end{align*}
where the third inequality is obtained by the concentration property and Hoeffding's Inequality on submartingales (\cref{thm:hoeffding}), $\alpha'=a\cdot \mathrm{exp}(-b\cdot n^*)$, $\beta'=\mathrm{exp}\left( \frac{4(h_0-K')^2}{(b'-a')^2} \right)$, and $\gamma'=\frac{2}{(b'-a')^2}$.

\end{proof}

\subsection{Proofs of \cref{theorem:verify_RM}}

\noindent\textbf{\cref{theorem:verify_RM}}. 
Given an $M_{\mu}$ and a function $h:S\rightarrow\Rset$, 
we have 
$pre_h(s)\le h(s)$ for any state $s\in \overline{S_g}$ if the formula below 
\begin{align*}
\expv_{\delta\sim \mu}[h(f(\tilde{s},\pi(\tilde{s}+\delta)))]\le h(\tilde{s})-\zeta \quad\quad\quad\quad\quad\quad (\ref{formula:verify_URS})
\end{align*} holds 
for any state $\tilde{s}\in \tilde{S}\cap  \overline{S_g}$, where $\zeta=r_{\mathrm{max}}+ \tau\cdot L_h\cdot (1+L_f\cdot (1+L_\pi))$ with  $L_f,L_\pi,L_h$ being the Lipschitz constants of $f,\pi,h$, 
and $r_{\mathrm{max}}$ being the maximum value of $R$, respectively. \\
Analogously, we have that 
$pre_h(s)\ge h(s)$ for any state $s\in \overline{S_g}$ if 
the formula below 
\begin{align*}
\expv_{\delta\sim \mu}[h(f(\tilde{s},\pi(\tilde{s}+\delta)))]\ge h(\tilde{s})-\zeta'
\quad\quad\quad\quad\quad\quad (\ref{formula:verify_LRS})
\end{align*} holds 
for any state $\tilde{s}\in \tilde{S}\cap \overline{S_g}$, where $\zeta'=r_{\mathrm{min}}-\tau\cdot L_h\cdot (1+L_f\cdot (1+L_\pi))$ with $r_{\mathrm{min}}$ being the minimum  value of $R$.

\begin{proof}
 Let $L_f,L_\pi,L_h$ be the Lipschitz constants for the system dynamics $f$, the trained policy $\pi$ and the neural network function $h$, respectively. 
    Given a state $s\in \overline{S_g}$, let $\tilde{s}$ be such that $||\tilde{s}-s||_1\le \tau$. 
    
    To prove \cref{formula:verify_URS}, by the Lipschitz continuities, we have that
\begin{align*}
pre_h(s)=& r+\expv_{\delta\sim \mu}[h(f(s,\pi(s+\delta)))] \\
\le & r_{\mathrm{max}}+\expv_{\delta\sim \mu}[h(f(\tilde{s},\pi(\tilde{s}+\delta)))]
\\ & +||f(\tilde{s},\pi(\tilde{s}+\delta))-f(s,\pi(s+\delta))||_1\cdot L_h \\
\le& r_{\mathrm{max}}+\expv_{\delta\sim \mu}[h(f(\tilde{s},\pi(\tilde{s}+\delta)))]
\\ &~+||(\tilde{s},\pi(\tilde{s}+\delta))-(s,\pi(s+\delta))||_1\cdot L_h\cdot L_f   \\
\le& r_{\mathrm{max}}+\expv_{\delta\sim \mu}[h(f(\tilde{s},\pi(\tilde{s}+\delta)))]
 \\ &~+ ||\tilde{s}-s||_1\cdot L_h\cdot L_f \cdot (1+L_\pi) \\
\le& r_{\mathrm{max}}+\expv_{\delta\sim \mu}[h(f(\tilde{s},\pi(\tilde{s}+\delta)))]
+ \tau\cdot L_h\cdot L_f \cdot (1+L_\pi)
\end{align*}
and 
\begin{align*}
-h(s)\le h(\tilde{s})+||\tilde{s}-s||_1\cdot L_h\le -h(\tilde{s})+\tau\cdot L_h,
\end{align*}
where $r_{\mathrm{max}}$ is the maximum value of the reward function $R$.
Thus, we can derive that
\begin{align*}
pre_h(s)-h(s)&\le \expv_{\delta\sim \mu}[h(f(\tilde{s},\pi(\tilde{s}+\delta)))]-h(\tilde{s})
\\ &~ +r_{\mathrm{max}} +\tau\cdot L_h\cdot L_f \cdot (1+L_\pi) +\tau\cdot L_h\\
&\le   \expv_{\delta\sim \mu}[h(f(\tilde{s},\pi(\tilde{s}+\delta)))]-h(\tilde{s}) +\zeta \\
&\le 0  
\end{align*}

To prove \cref{formula:verify_LRS}, by the Lipschitz continuities, we have that
\begin{align*}
pre_h(s)=& r+\expv_{\delta\sim \mu}[h(f(s,\pi(s+\delta)))] \\
\ge & r_{\mathrm{min}}+\expv_{\delta\sim \mu}[h(f(\tilde{s},\pi(\tilde{s}+\delta)))]
\\ & -||f(\tilde{s},\pi(\tilde{s}+\delta))-f(s,\pi(s+\delta))||_1\cdot L_h \\
\ge& r_{\mathrm{min}}+\expv_{\delta\sim \mu}[h(f(\tilde{s},\pi(\tilde{s}+\delta)))]
\\ &~-||(\tilde{s},\pi(\tilde{s}+\delta))-(s,\pi(s+\delta))||_1\cdot L_h\cdot L_f   \\
\ge& r_{\mathrm{min}}+\expv_{\delta\sim \mu}[h(f(\tilde{s},\pi(\tilde{s}+\delta)))]
 \\ &~- ||\tilde{s}-s||_1\cdot L_h\cdot L_f \cdot (1+L_\pi) \\
\ge& r_{\mathrm{min}}+\expv_{\delta\sim \mu}[h(f(\tilde{s},\pi(\tilde{s}+\delta)))]
- \tau\cdot L_h\cdot L_f \cdot (1+L_\pi)
\end{align*}
and 
\begin{align*}
-h(s)\ge -h(\tilde{s})-||s-\tilde{s}||_1\cdot L_h\ge -h(\tilde{s})-\tau\cdot L_h,
\end{align*}
where $r_{\mathrm{min}}$ is the minimum value of the reward function $R$.
Thus, we can derive that
\begin{align*}
pre_h(s)-h(s)&\ge \expv_{\delta\sim \mu}[h(f(\tilde{s},\pi(\tilde{s}+\delta)))]-h(\tilde{s})
\\ &~ +r_{\mathrm{min}} -\tau\cdot L_h\cdot L_f \cdot (1+L_\pi) -\tau\cdot L_h\\
&\ge   \expv_{\delta\sim \mu}[h(f(\tilde{s},\pi(\tilde{s}+\delta)))]-h(\tilde{s}) +\zeta' \\
&\ge 0  
\end{align*}

\end{proof}

\section{Implementation Details and Additional Experimental Results}

\subsection{Experimental Environment}

\begin{table}[h!]
	\vspace{-4mm}
	\centering
	\footnotesize
	\setlength{\tabcolsep}{2.7pt}
	\begin{tabular}{l|l }
		\hline
		\textbf{Hyperparameter} &  \textbf{Value} \\
		\hline
            Neural network size & $3 \times 200$ \\
            Activation function & $Relu$\\
		Learning rate  &  $1e-3$ \\
		Optimizer &  $Adam$\\
            Weight decay & $1.5e-3$\\ 
            Timeout threshold & $60 \  minutes$ \\
            Boundedness parameter $K$ & -0.01\\
            Boundedness parameter $K'$ & $ \ $0.01 \\
            Loss coefficient  $k_1, k_1'$ & 1\\
            Loss coefficient  $k_2, k_2'$ & \\
              & 0.01 $\ \ \ $ CP \\
              & 0.005 $\ \ $ B1\\
              & 0.007 $\ \ $ MC\\
              & 0.05 $\ \ \ $ B2\\
            Loss coefficient  $k_3, k_3'$ & 1\\
             Number of partition cells $k$ & 10 \\
		\hline
	\end{tabular}
	\captionsetup{skip=5pt}
	\caption{Hyperparametes  for training  and validating reward martingales.}
	\vspace{-4mm}
	\label{table:Hyperpar}
\end{table}

We conducted experiments on a workstation running Ubuntu  18.04 with a 32-core AMD Ryzen Threadripper CPU and 128GB RAM.  We show a  list of ordinary hyperparameters for training and validating reward martingales in \Cref{table:Hyperpar}, and discuss the effects of other hyperparameters of interest below. 
Moreover, the Lipschitz constants used in our theorems and algorithms are calculated using the same method as those in \cite{stab_martingales}.

\subsection{Analysis of Hyperparameters}
\paragraph{Neccessity of the third loss terms.}
The hyperparameters $\overline{u}$ and $ \underline{l}$ in the  third loss terms $\calL_{C3},\calL_{C3'}$ are used to enforce the upper and lower bounds calculated by reward martingales as tight as possible.
For each perturbation and policy, we execute the trained systems by 200 episodes starting from different initial states, and employ the best cumulative rewards plus a constant $0.1$ and the worst cumulative rewards subtraction a constant $0.1$ as $\overline{u}$ and $\underline{l}$, correspondingly. 

We train reward martingales with  and without $\calL_{C3}$   (resp. $\calL_{C3'}$),  and the comparison between them is shown in \Cref{fig:out_bound}. The policies of CP and B2 are trained on abstract states, and those of MC and  B1 are trained on concrete states,  correspondingly. As we can see, the upper and lower bounds in the left subfigures (i.e., the bounds trained without the third loss term) are looser than the bounds in the right subfigures (i.e., the bounds trained with the third loss term). This proves the necessity of the existence of the two heuristic loss terms. Although the first two terms in the loss functions can enforce the candidate reward martingales to be a URS or LRS, the lack of tightness may make the results trivial.

\begin{figure}[h!]
	\footnotesize 
		\begin{subfigure}[b]{0.23\textwidth}
		\includegraphics[width=\textwidth]{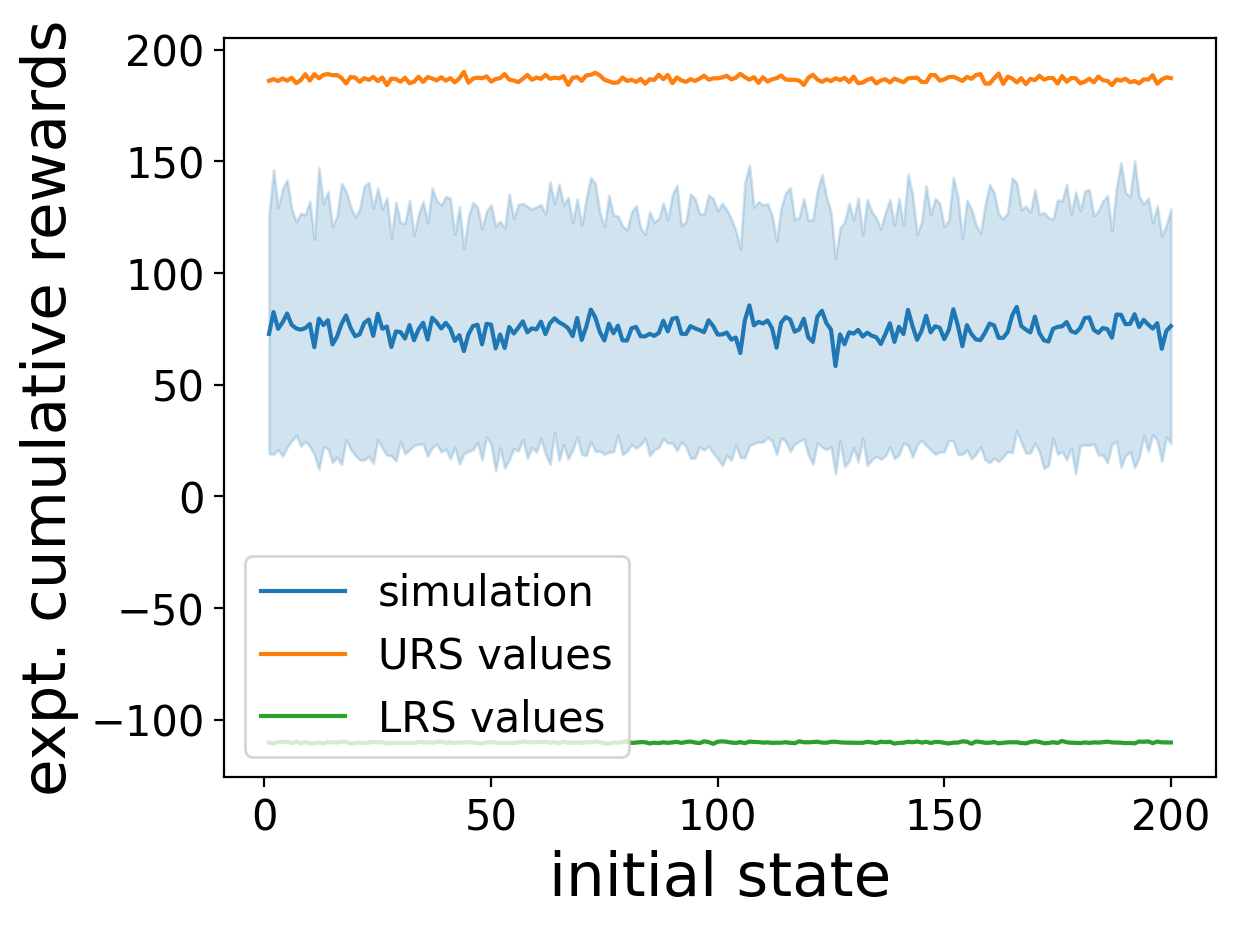}
		\caption{CP($r=0.2$ w/o $\calL_{C3/C3'}$)}
	\end{subfigure}\!
            \begin{subfigure}[b]{0.23\textwidth}
		\includegraphics[width=\textwidth]{imgs/CP_u_0.2_ok.png}
		\caption{CP($r=0.2$)}
	\end{subfigure}\\
	\begin{subfigure}[b]{0.23\textwidth}
		\includegraphics[width=\textwidth]{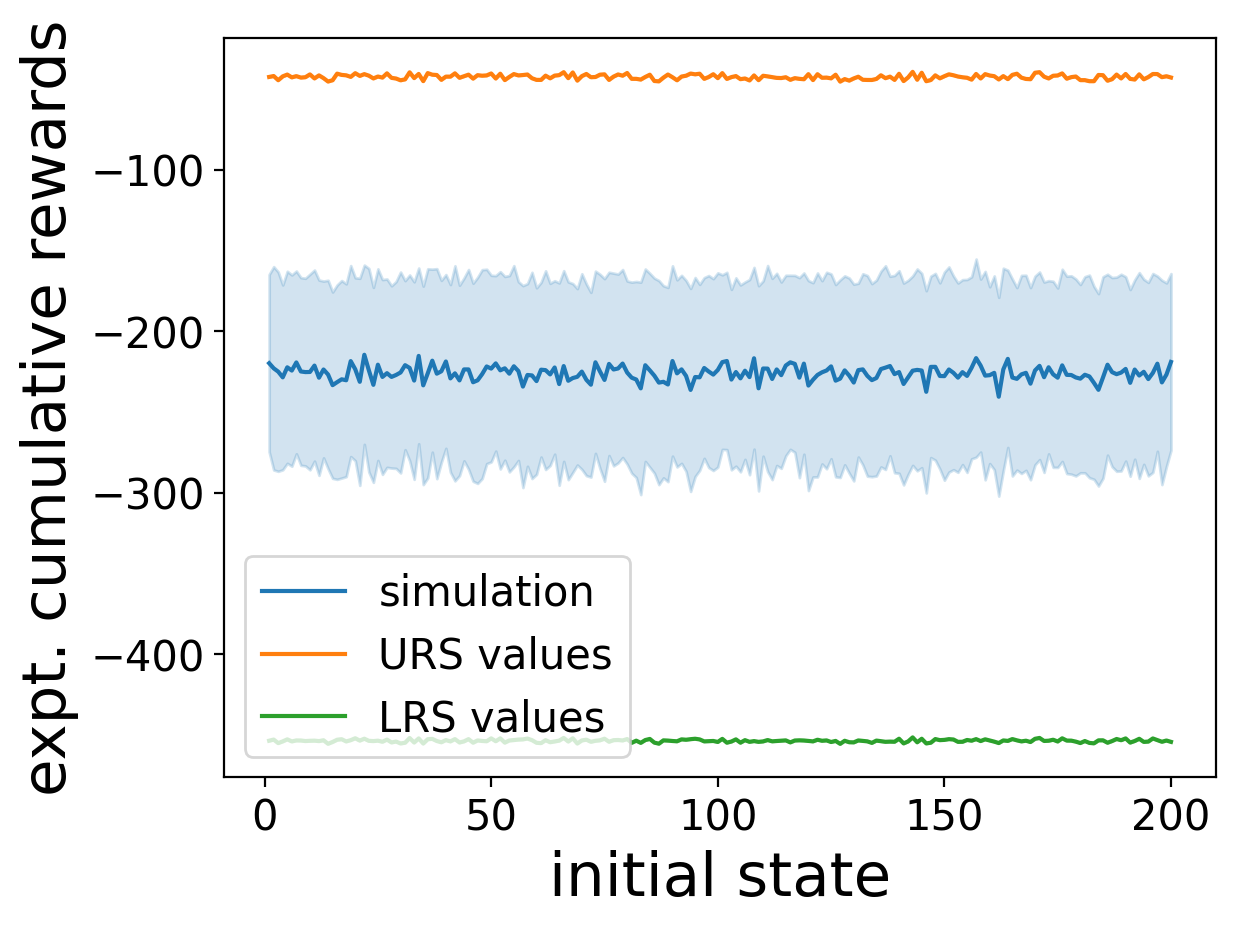}
		\caption{B1 ($\sigma=0.3$ w/o $\calL_{C3/C3'})$}
	\end{subfigure}\!
        \begin{subfigure}[b]{0.23\textwidth}
		\includegraphics[width=\textwidth]{imgs/B1_g_0.3_ok_DNN.png}
		\caption{B1 ($\sigma=0.3$)}
	\end{subfigure}\\
        \begin{subfigure}[b]{0.23\textwidth}
		\includegraphics[width=\textwidth]{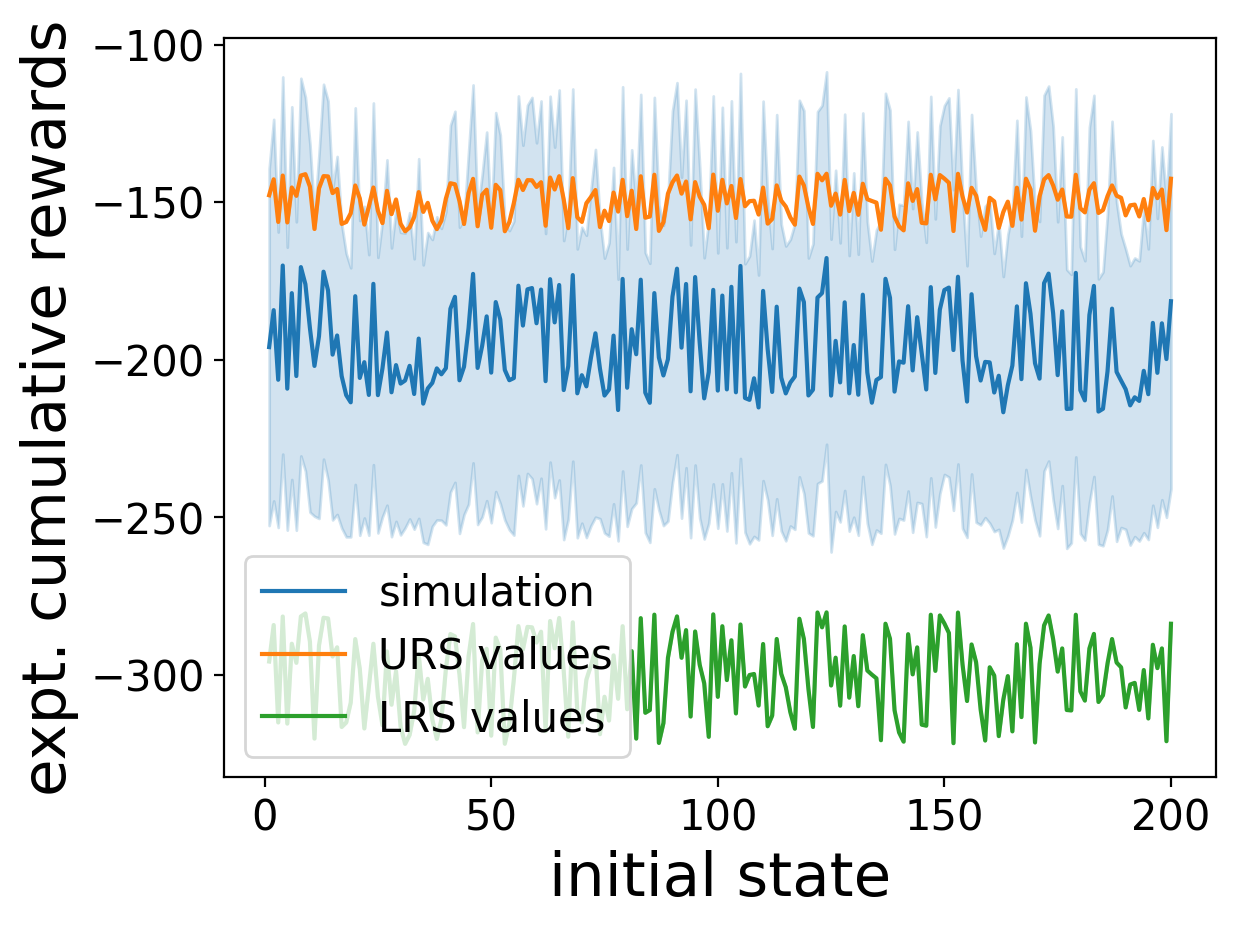}
		\caption{MC ($\sigma=0.08$  w/o $\calL_{C3/C3'})$}
	\end{subfigure}\!
        \begin{subfigure}[b]{0.23\textwidth}
		\includegraphics[width=\textwidth]{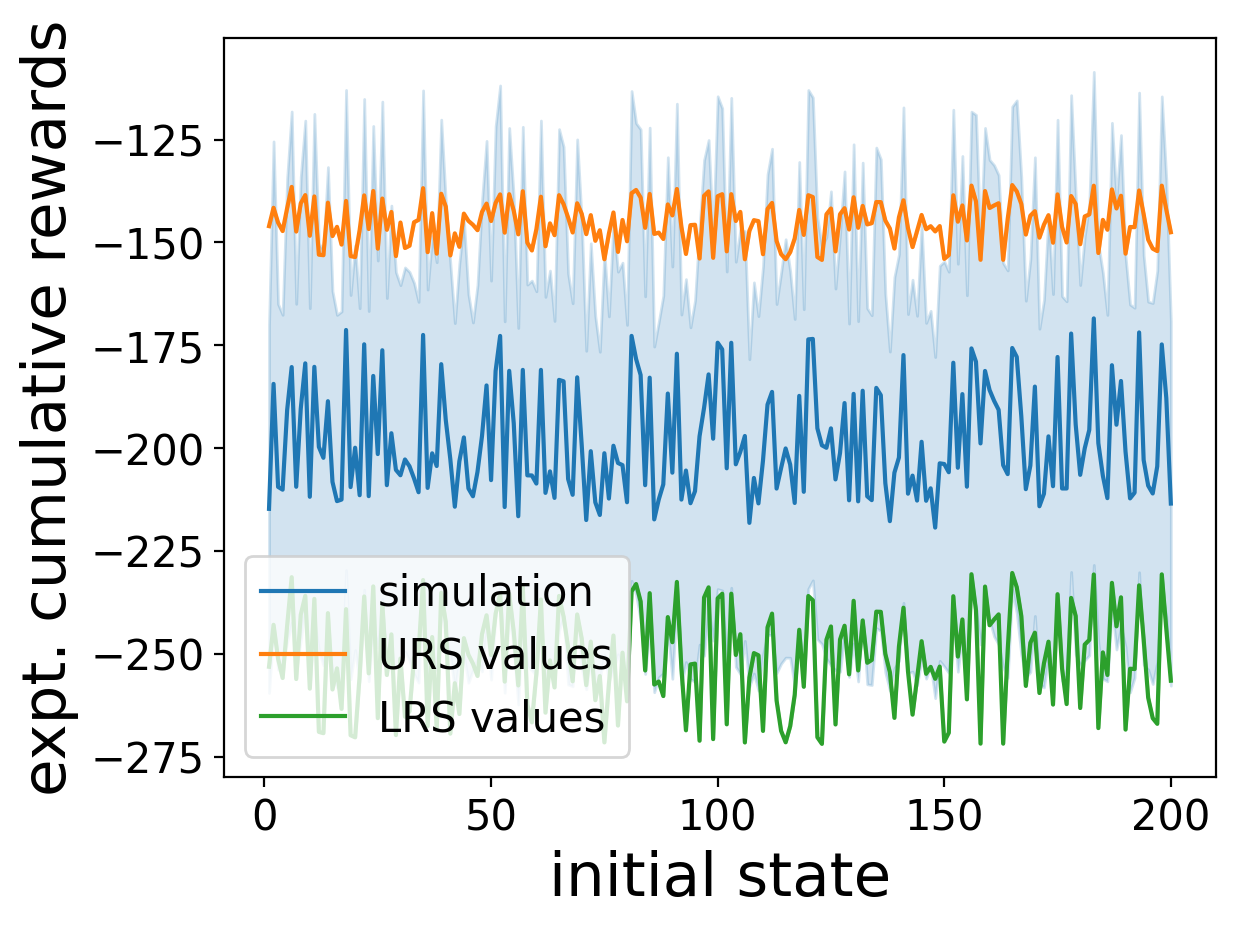}
		\caption{MC ($\sigma=0.08$ )}
	\end{subfigure}\\
	\begin{subfigure}[b]{0.23\textwidth}
		\includegraphics[width=\textwidth]{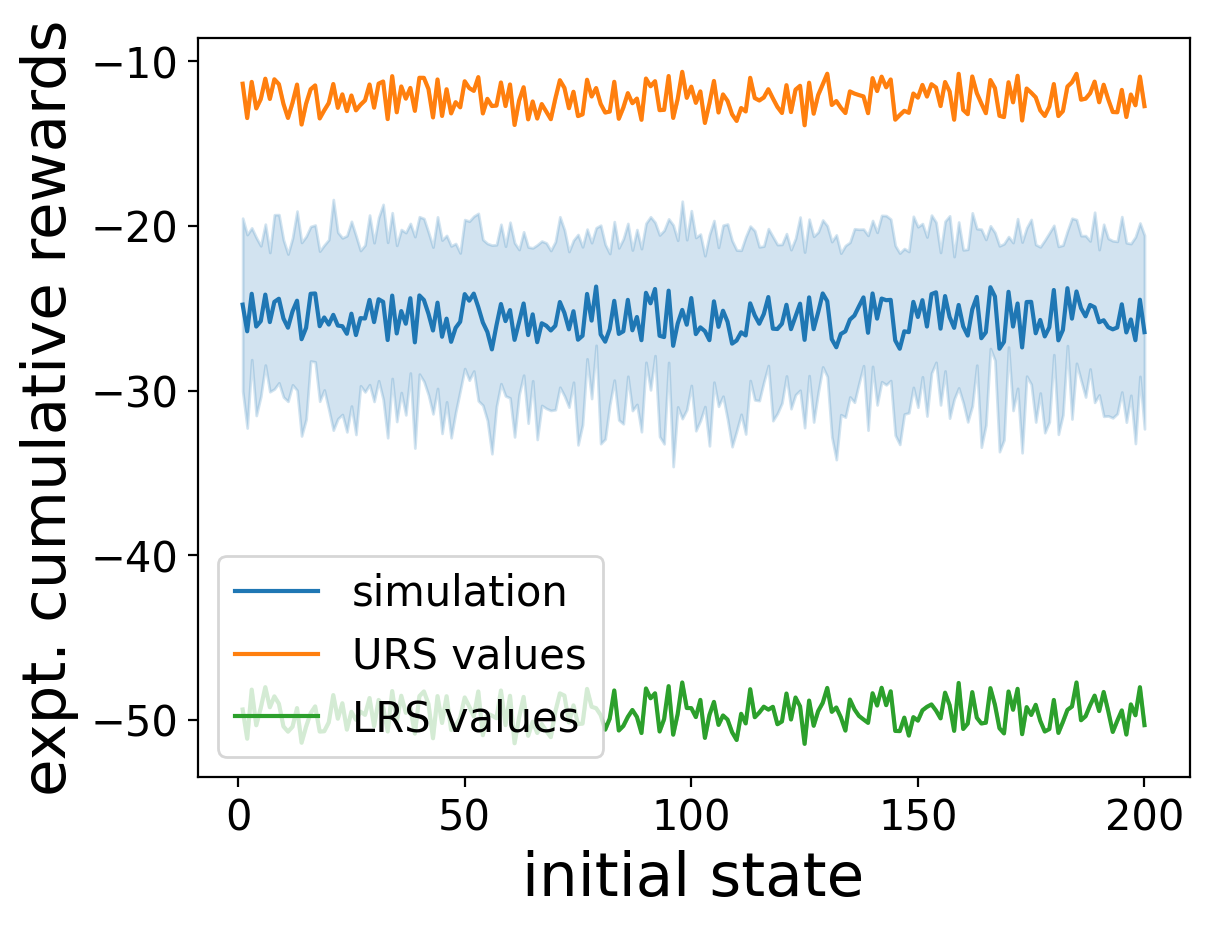}
		\caption{B2 ($r=0.1$  w/o $\calL_{C3/C3'}$)}
	\end{subfigure}\!
 \begin{subfigure}[b]{0.23\textwidth}
		\includegraphics[width=\textwidth]{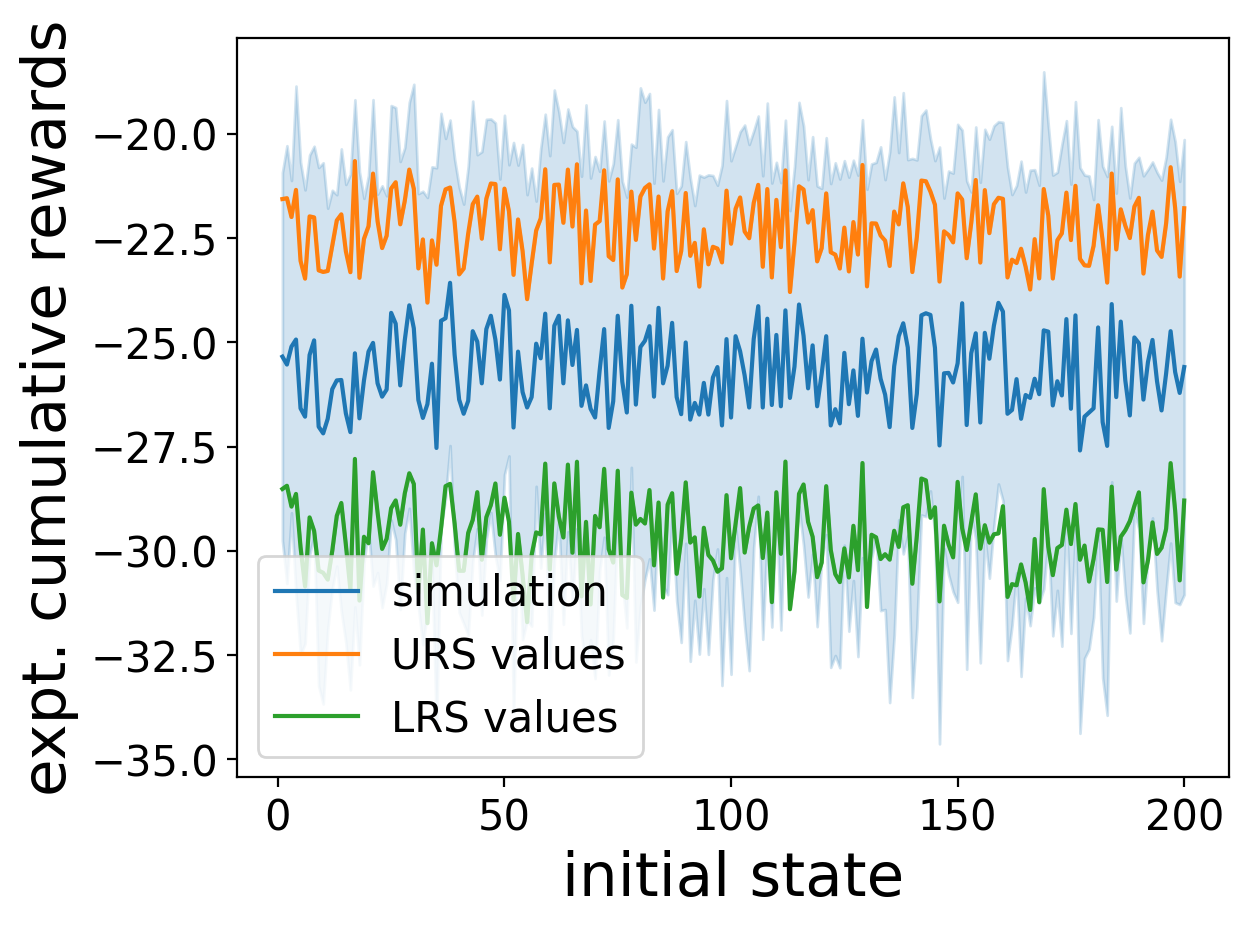}
		\caption{B2 ($r=0.1$)}
	\end{subfigure}
	\vspace{-2mm}
	\caption{ Comparison between certified bounds and simulation results under different perturbations  with and without  tightness constraint.}
	\vspace{-4mm}
	\label{fig:out_bound}
\end{figure}

\paragraph{Analysis of granularity $\tau$ and refinement step length $\xi$. }

\begin{table}[t]
	\centering
	\footnotesize
	\setlength{\tabcolsep}{4pt}
	\begin{tabular}{l|l| l| l| c | l l l}
	\hline
	\textbf{Task} & \centering \textbf{Pert.} & $\tau$ & $\xi$     & \textbf{Iters.} & \textbf{T.T.} & \textbf{V.T.} &     \textbf{To.T.} \\ \hline
        \multirow{2}{*}{\textbf{CP}} & \multirow{2}{*}{$\sigma=0.1$} & 0.04 & 0.001 & 2 &\multicolumn{3}{c}{$timeout$} \\
          &   & 0.02  &  0.002 & 1 & 1203&873 & 2076 \\
          \hline
        \multirow{2}{*}{\textbf{B1}} & \multirow{2}{*}{$\sigma=0.3$} & 0.04  & 0.0015 & 5 & \multicolumn{3}{c}{$timeout$} \\
          &   & 0.02  & 0.002 & 2 &898 & 529 & 1427\\
          \hline
        \multirow{2}{*}{\textbf{MC}} & \multirow{2}{*}{$r=0.1$} & 0.014 & 0.002& 3  &  1952 & 1532  &  3484 \\
          &   & 0.01  & 0.002 & 1 & 712 & 528 & 1240 \\
          \hline
        \multirow{2}{*}{\textbf{B2}} & \multirow{2}{*}{$r=0.1$} & 0.04 & 0.001 & 6 & \multicolumn{3}{c}{$timeout$} \\
          &   & 0.02  & 0.002 & 2 & 917 & 502 & 1419 \\
          \hline
	\end{tabular}
	\captionsetup{skip=2pt}
	\caption{Time on training and validating URS under different $\tau$ and $\xi$ for policies with C.S.}
	\vspace{-2mm}
	\label{table:time_tau_xi}
\end{table}

\Cref{table:time_tau_xi} shows the training and validating  time for URS with different $\tau$ and $\xi$ for policies trained on concrete states.

As we can see, the tasks with a smaller initial granularity  $\tau$ and a possibly larger refinement step length $\xi$
complete the training and validating in fewer iterations, e.g., 
B2 with $\tau =0.02,\xi=0.002$. 
On the contrary, the tasks with a larger initial granularity $\tau$ and a possibly smaller refinement step length $\xi$ will increase the iterations of the 'training-validating', which leads to an increase in total elapsed time (e.g., MC with $\tau=0.014,\xi=0.002$) and even timeout (e.g., CP with $\tau=0.04,\xi=0.001$).
This is because by using a smaller initial granularity $\tau$ and a larger refinement step length $\xi$, the algorithm can generate an increased amount of training data. As a result, it accelerates the training and validation process for the reward martingales.

\paragraph{The correlation between noises and tightness.}
We also study 
the tightness of the calculated bounds under the same type of noises with different magnitudes.
However, no general statement can be made. For instance, the results of MC under different uniform noises are shown in ~\Cref{fig:tight_noise}. Though the performance of the system decreases with the increase in noise (i.e., the blue line drops with the increase of the noise), the tightness of calculated reward martingales does not show a significant trend of change, i.e., the gaps between the upper bounds (the orange line) and the lower bounds (the green line) seem similar.
It is worth further investigating what factors affect the tightness to produce tighter bounds.

\begin{figure}[h!]
	\footnotesize 
		\begin{subfigure}[b]{0.15\textwidth}
		\includegraphics[width=\textwidth]{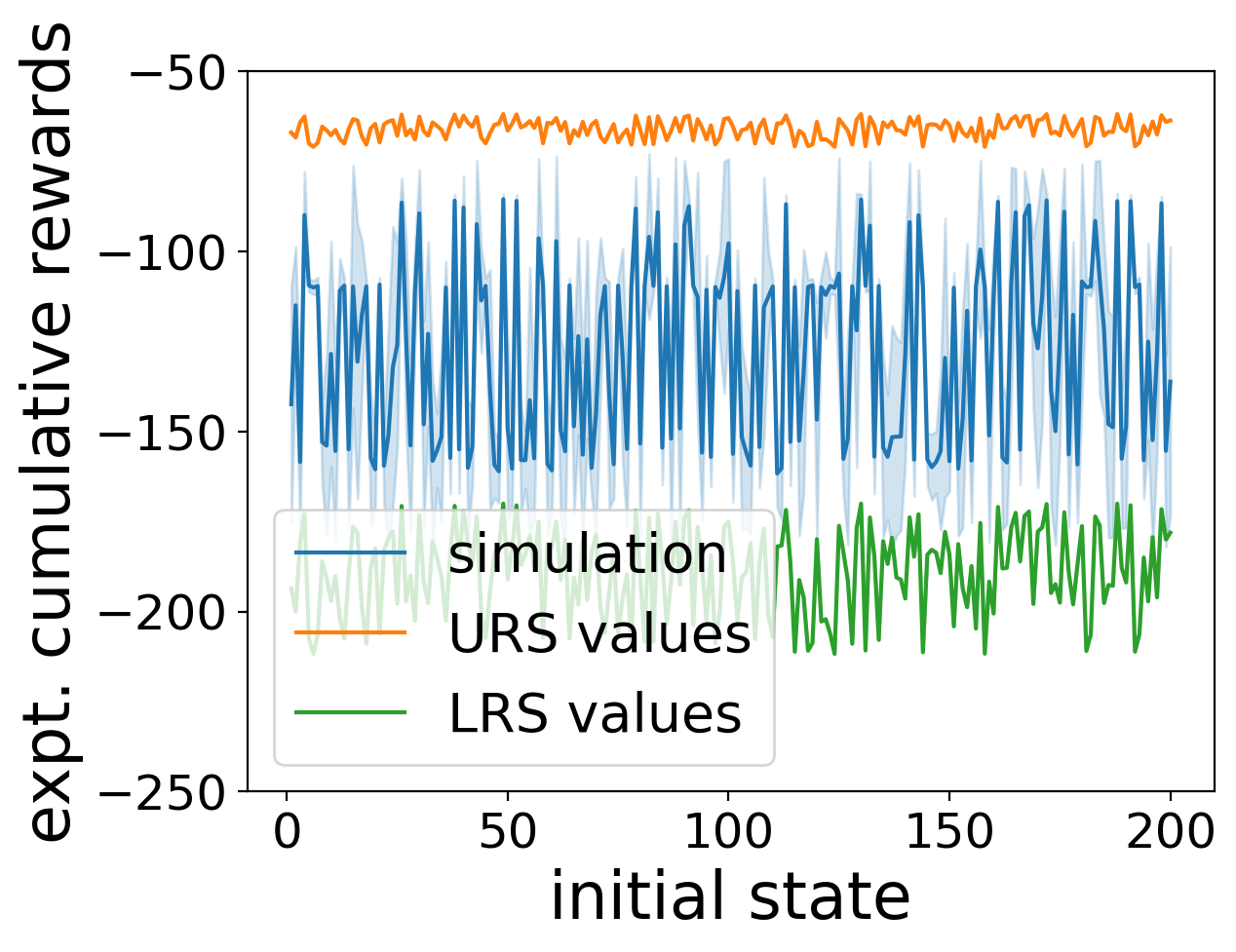}
		\caption{$r=0.06$}
	\end{subfigure}\!
            \begin{subfigure}[b]{0.15\textwidth}
		\includegraphics[width=\textwidth]{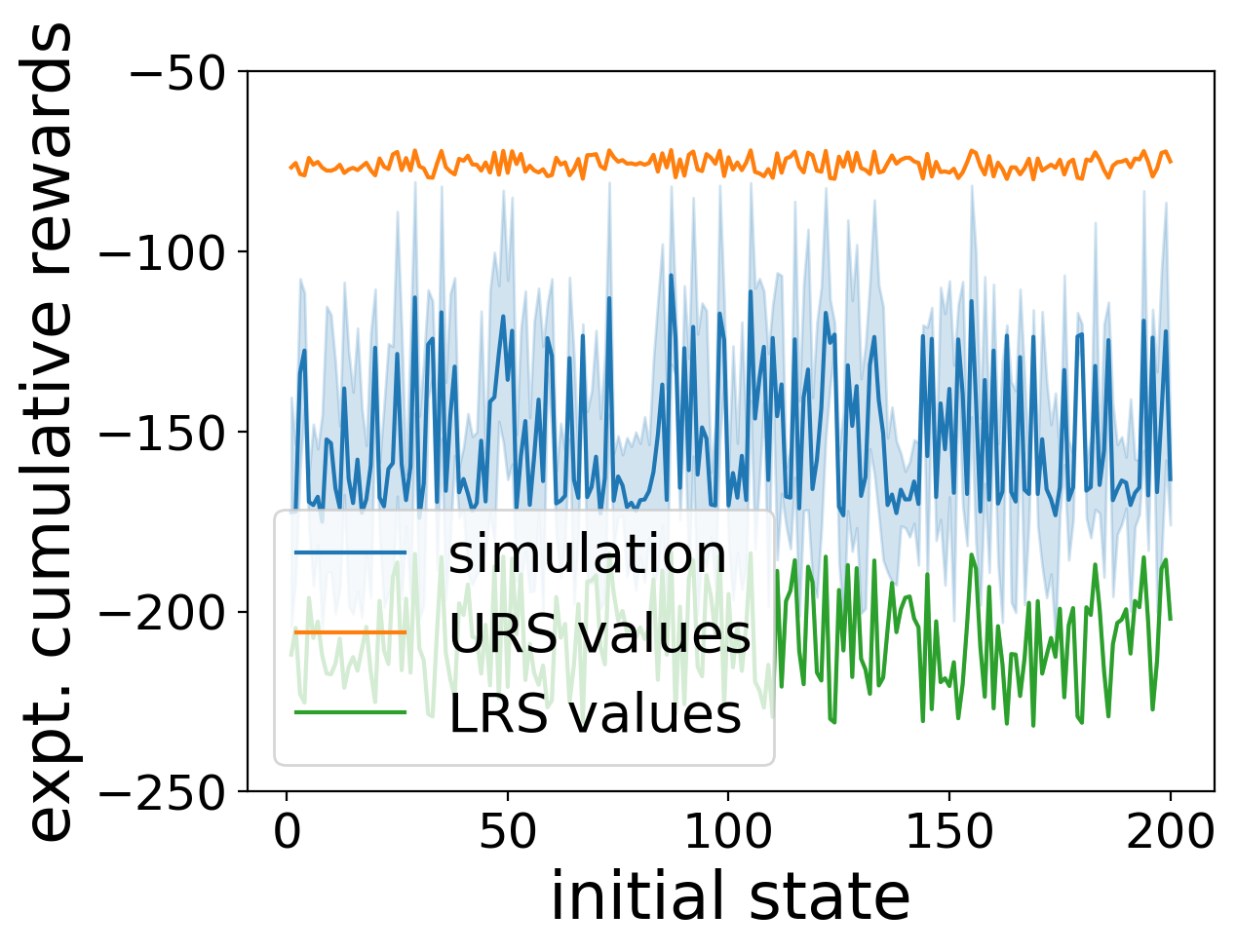}
		\caption{$r=0.08$}
	\end{subfigure}\!
	\begin{subfigure}[b]{0.15\textwidth}
		\includegraphics[width=\textwidth]{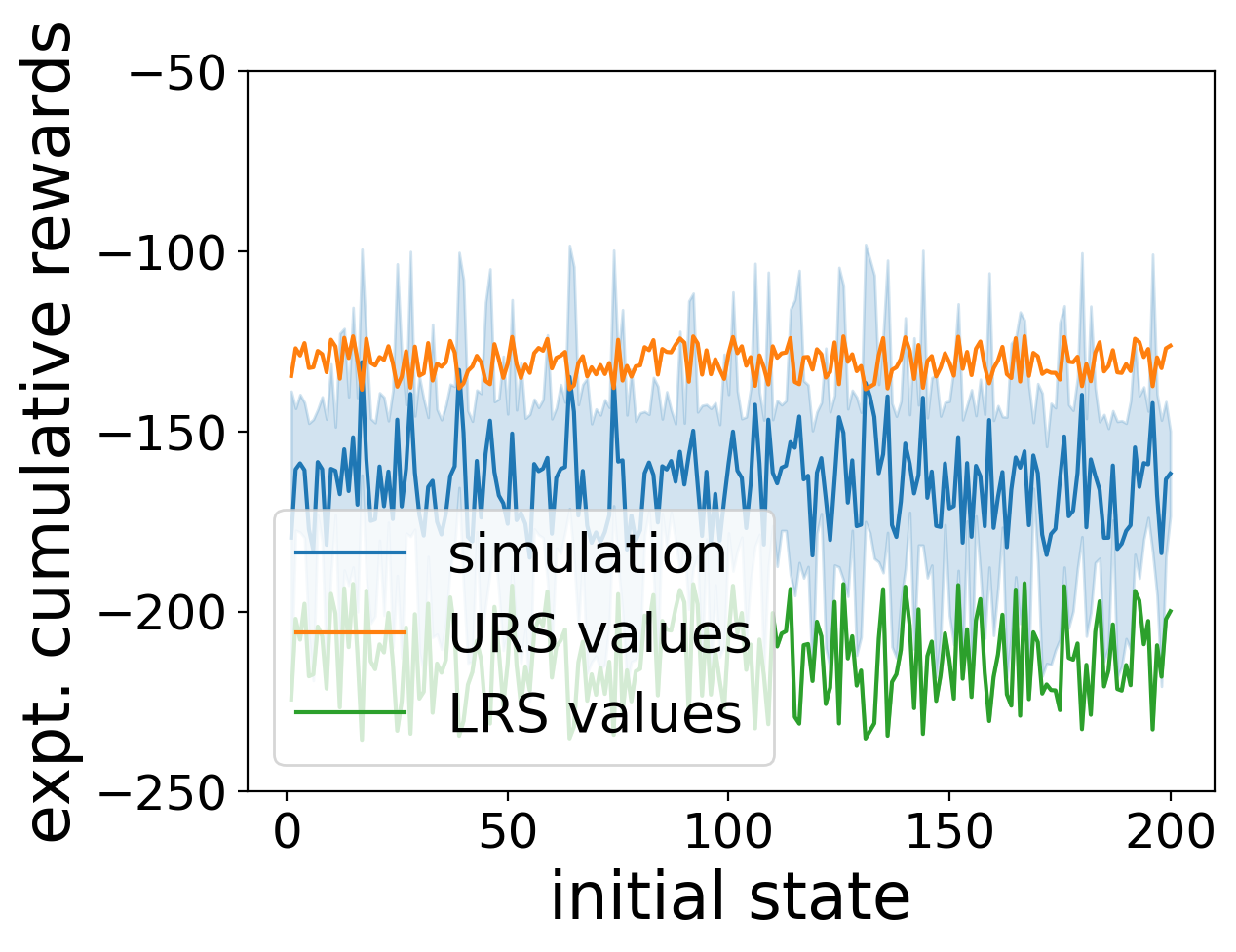}
		\caption{$r=0.1$}
	\end{subfigure}
	\caption{Calculated bounds and simulation results for MC trained on abstract states under different uniform noises.}
	\label{fig:tight_noise}
\end{figure}

\subsection{Additional Experimental  Results}

\begin{figure}[h!]
	\footnotesize 
		\begin{subfigure}[b]{0.23\textwidth}
		\includegraphics[width=\textwidth]{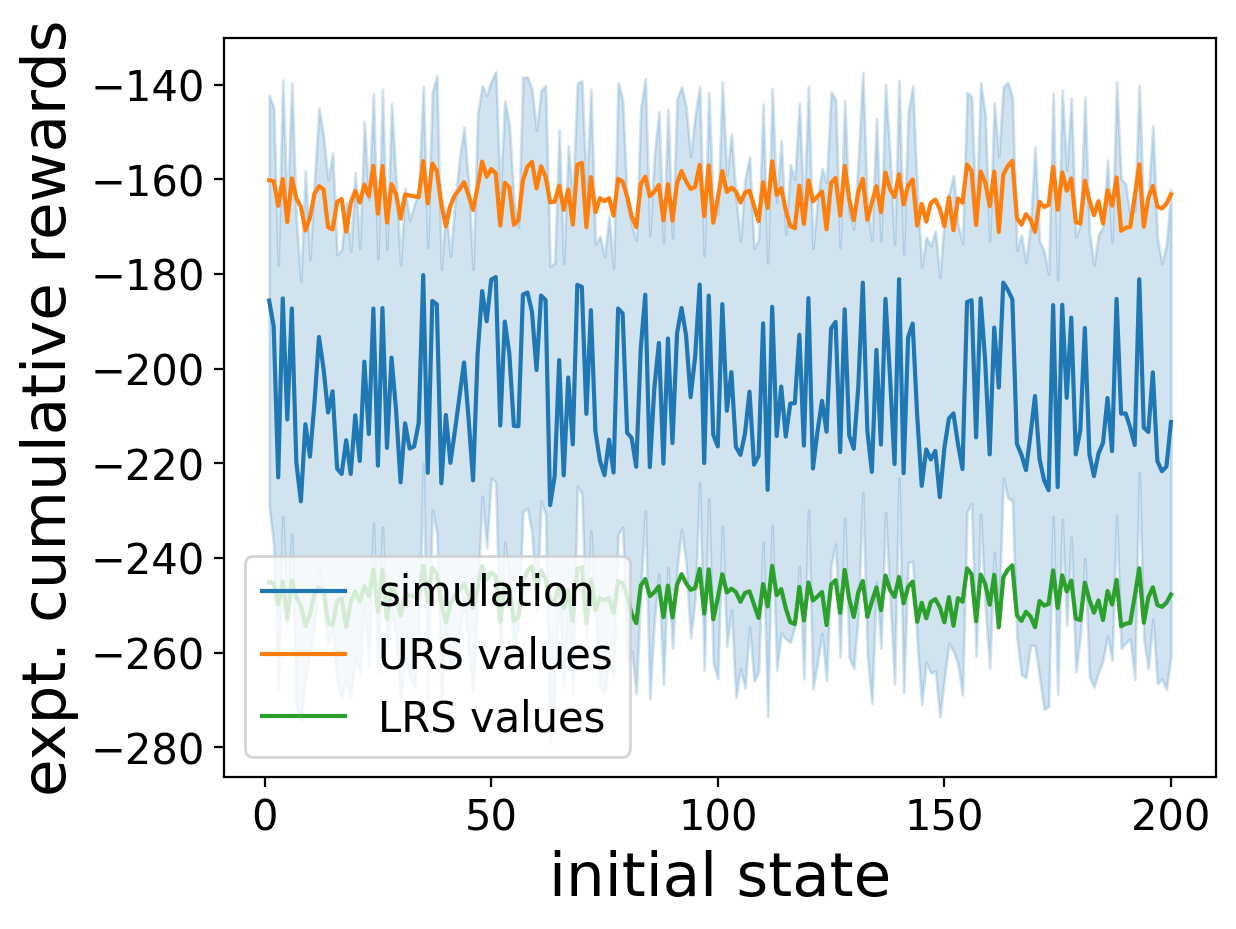}
		\caption{$\sigma=0.08$(A.S.)}
	\end{subfigure}\!
	\begin{subfigure}[b]{0.23\textwidth}
		\includegraphics[width=\textwidth]{imgs/MC_g_0.08_ok_DNN.png}
		\caption{$\sigma=0.08$(C.S.)}
	\end{subfigure}\\
        \begin{subfigure}[b]{0.23\textwidth}
		\includegraphics[width=\textwidth]{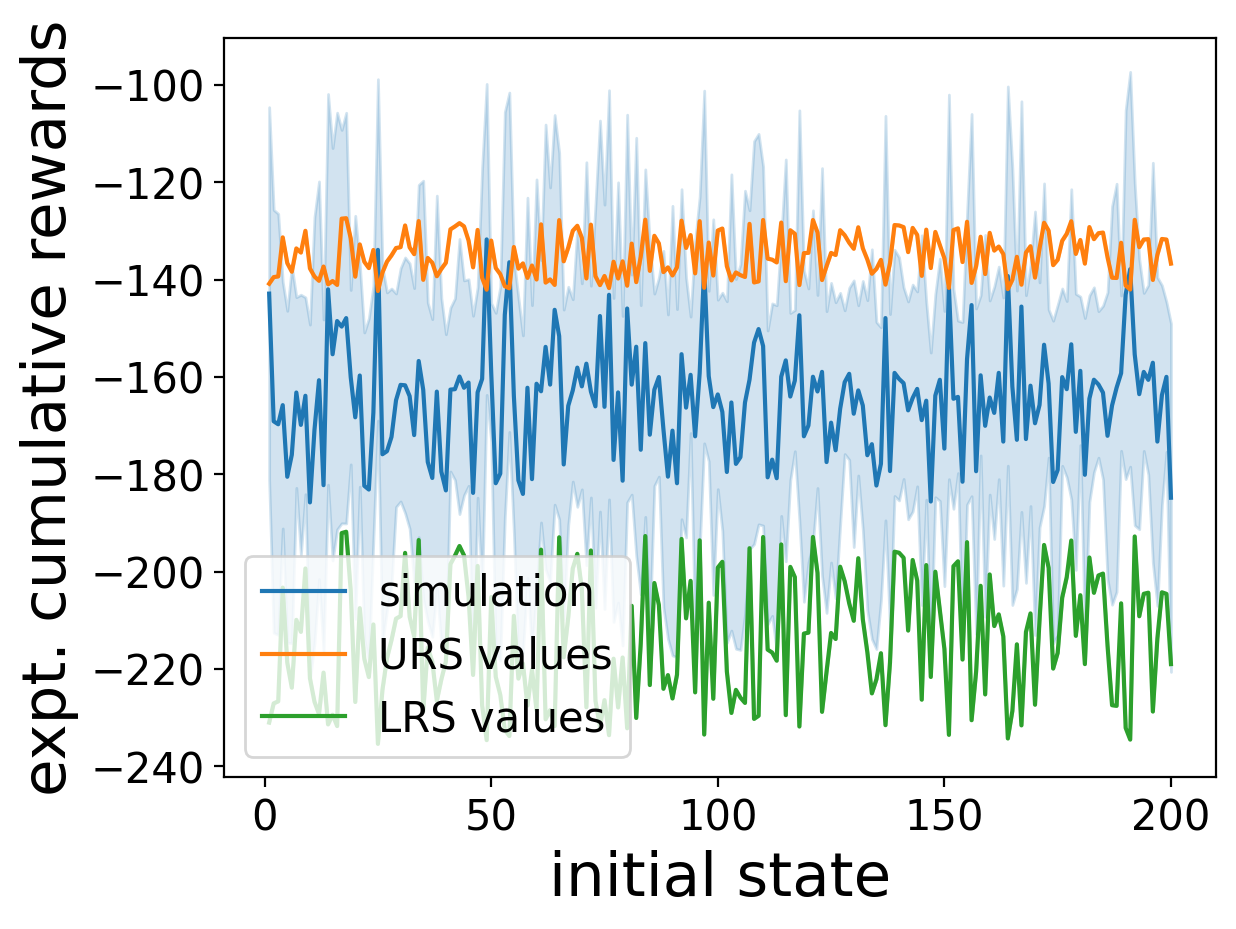}
		\caption{$r=0.1$(A.S.)}
	\end{subfigure}\!
	\begin{subfigure}[b]{0.23\textwidth}
		\includegraphics[width=\textwidth]{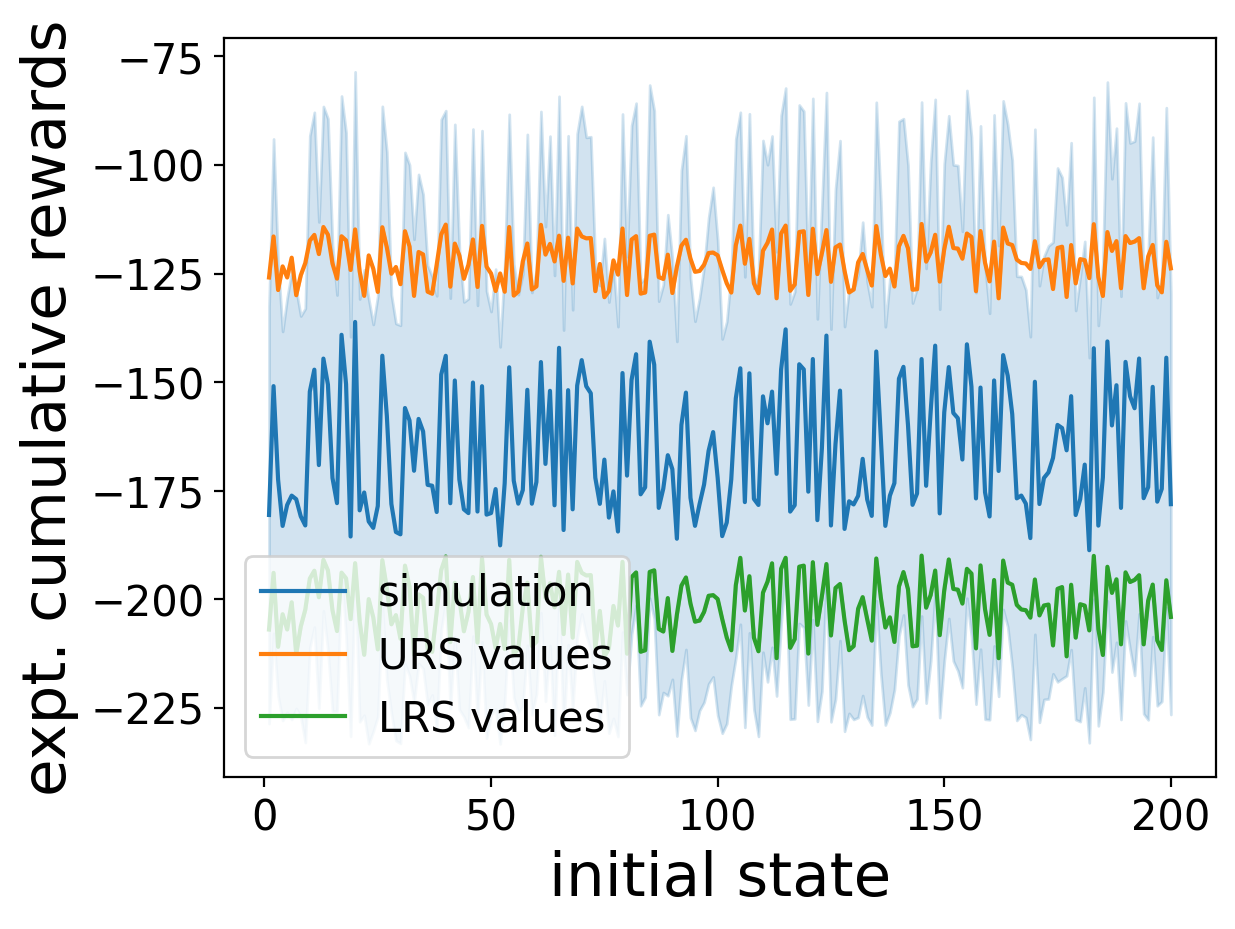}
		\caption{$r=0.1$(C.S.)}
	\end{subfigure}\\

        \begin{subfigure}[b]{0.23\textwidth}
		\includegraphics[width=\textwidth]{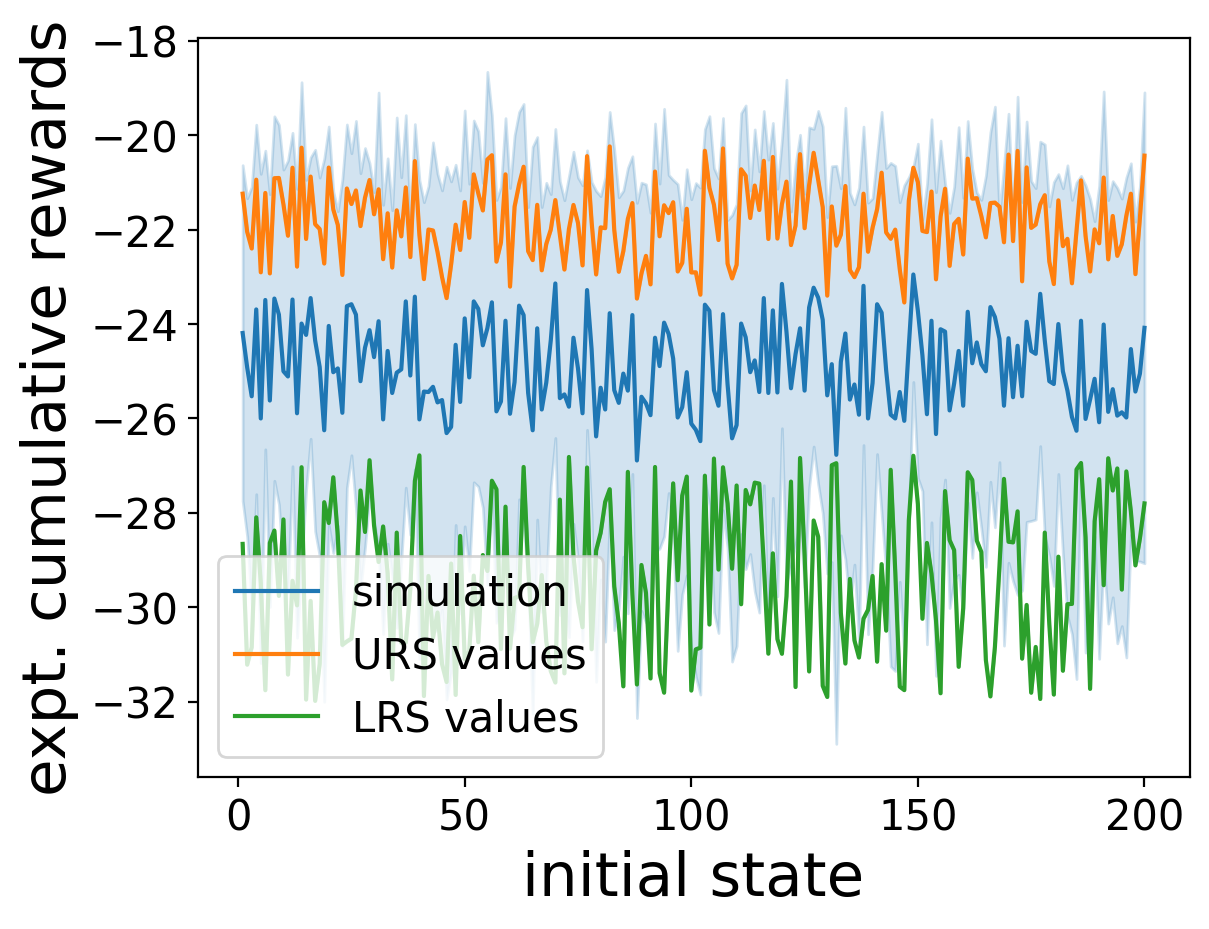}
		\caption{$\sigma=0.05$(A.S.)}
	\end{subfigure}\!
	\begin{subfigure}[b]{0.23\textwidth}
		\includegraphics[width=\textwidth]{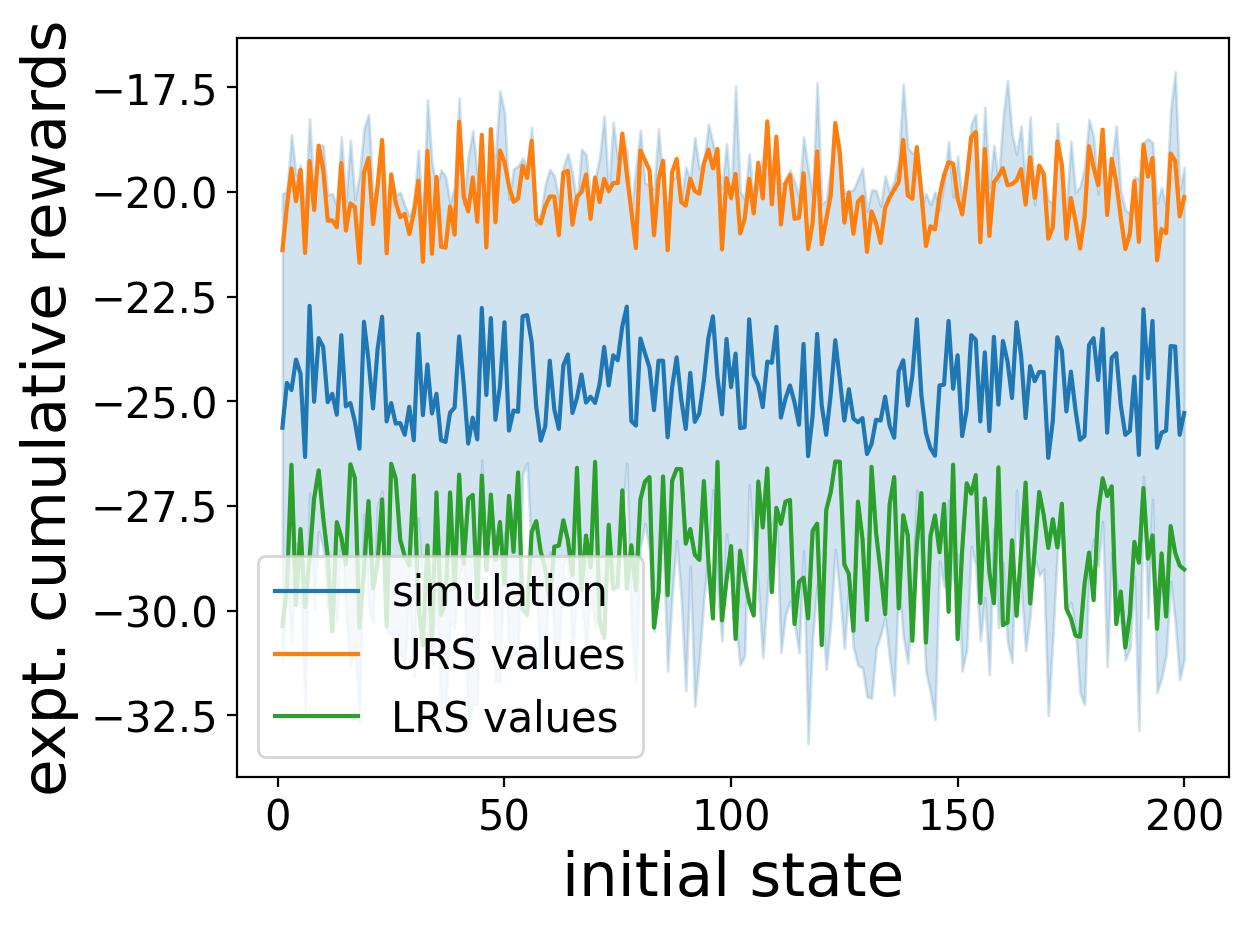}
		\caption{$\sigma=0.05$(C.S.)}
	\end{subfigure}\\
        \begin{subfigure}[b]{0.23\textwidth}
		\includegraphics[width=\textwidth]{imgs/B2_u_0.1_ok.png}
		\caption{$r=0.1$(A.S.)}
	\end{subfigure}\!
	\begin{subfigure}[b]{0.23\textwidth}
		\includegraphics[width=\textwidth]{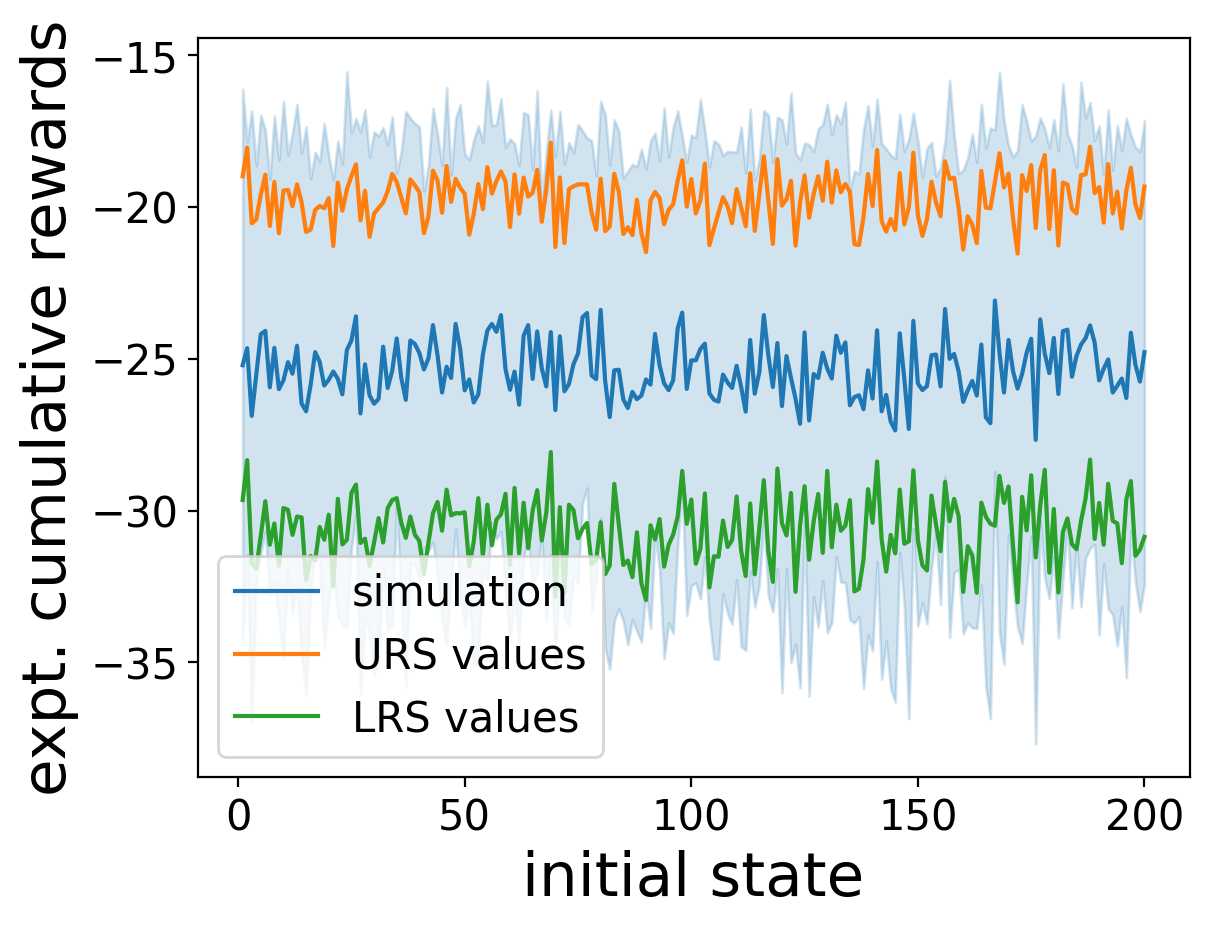}
		\caption{$r=0.1$(C.S.)}
	\end{subfigure}

	\caption{ Comparison between certified bounds and simulation results under different perturbations and policies for MountainCar (a-d) and B2 (e-h).}
	\label{fig:MC_B2_RM}
\end{figure}

\Cref{fig:MC_B2_RM,fig:MC_Tail_Bound_Gaus,fig:B2_Tail_Bound_Gaus} 
show the certified bounds of expected cumulative reward,  tail bounds of cumulative reward, and  the corresponding simulation results of MC and B2 under different perturbations and policies, respectively. 
Moreover, the comparison between the tail bounds of cumulative reward and  the corresponding simulation results for CP and B1 under different Gaussian noises is shown in ~\Cref{fig:CP_B1_Gau_Tail_Bound_Gaus}.
We can draw the same conclusions as those of CP and B1 in the main pages,  that is,  our calculated bounds tightly enclose the simulation and statistical outcomes, which demonstrates the effectiveness of our trained reward martingales.

\begin{figure}[h!]
	\footnotesize 

	\begin{subfigure}[b]{0.23\textwidth}
		\includegraphics[width=\textwidth]{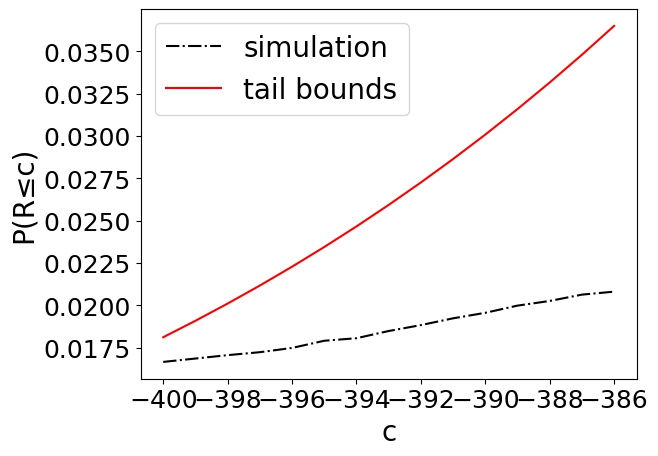}
		\caption{$\sigma=0.08$(A.S.)}
	\end{subfigure}\!
 		\begin{subfigure}[b]{0.23\textwidth}
		\includegraphics[width=\textwidth]{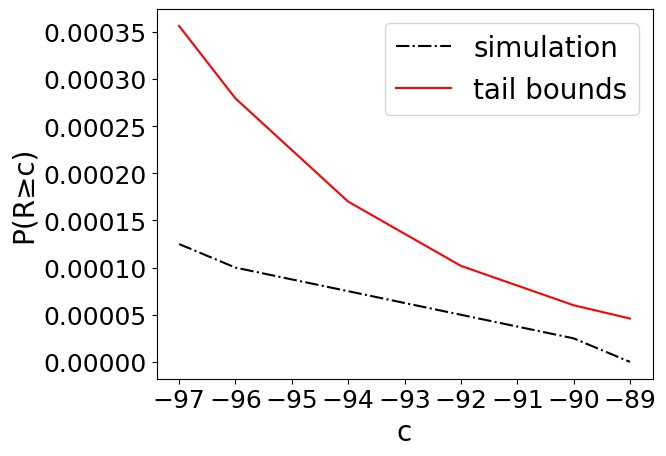}
		\caption{$\sigma=0.08$(A.S.)}
	\end{subfigure} \\

 \begin{subfigure}[b]{0.23\textwidth}
		\includegraphics[width=\textwidth]{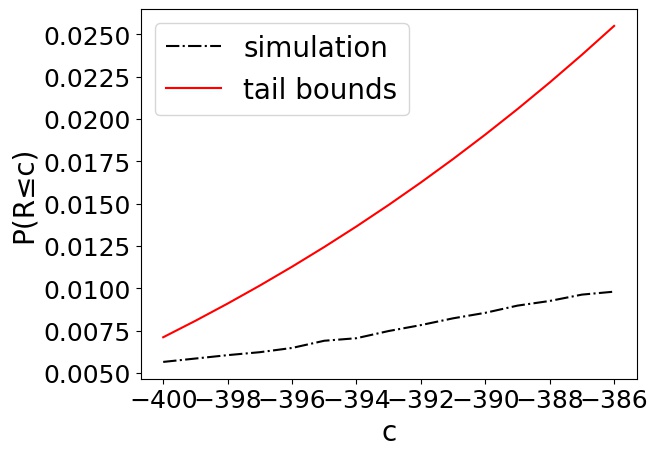}
		\caption{$\sigma=0.08$(C.S.)}
	\end{subfigure}\!
 		\begin{subfigure}[b]{0.23\textwidth}
		\includegraphics[width=\textwidth]{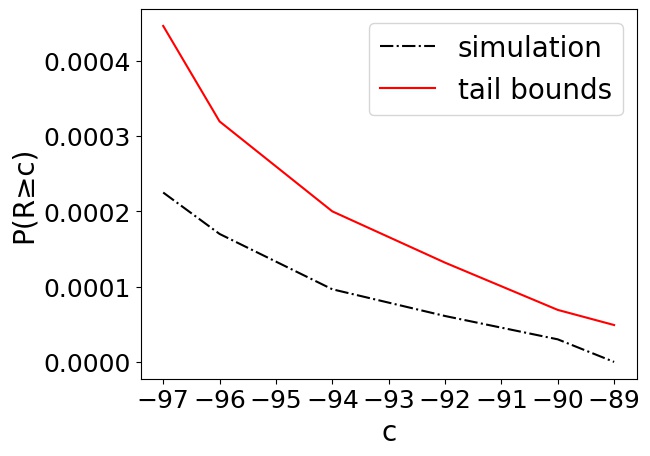}
		\caption{$\sigma=0.08$(C.S.)}
	\end{subfigure} \\
 \begin{subfigure}[b]{0.23\textwidth}
		\includegraphics[width=\textwidth]{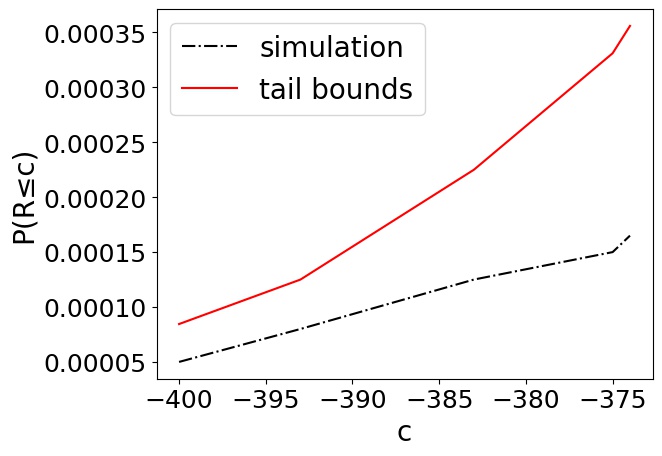}
		\caption{$r=0.1$(A.S.)}
	\end{subfigure}\!
 		\begin{subfigure}[b]{0.23\textwidth}
		\includegraphics[width=\textwidth]{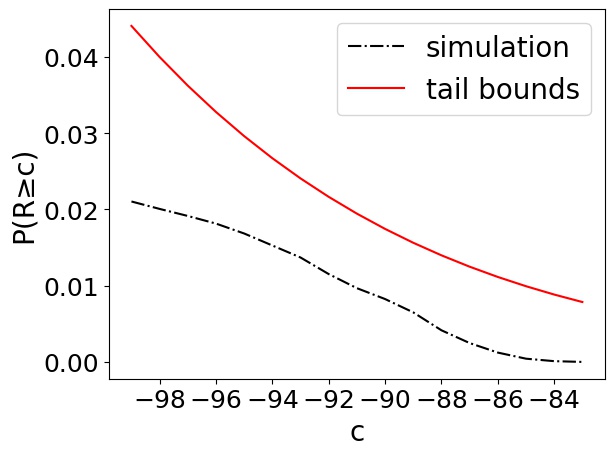}
		\caption{$r=0.1$(A.S.)}
	\end{subfigure} \\
        \begin{subfigure}[b]{0.23\textwidth}
		\includegraphics[width=\textwidth]{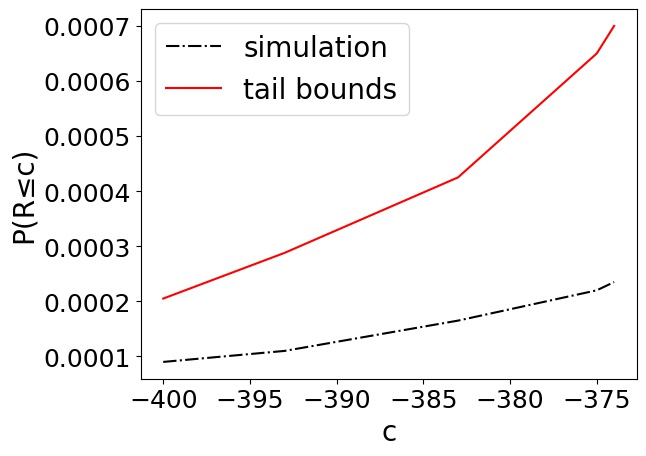}
		\caption{$r=0.1$(C.S.)}
	\end{subfigure}\!
	\begin{subfigure}[b]{0.23\textwidth}
		\includegraphics[width=\textwidth]{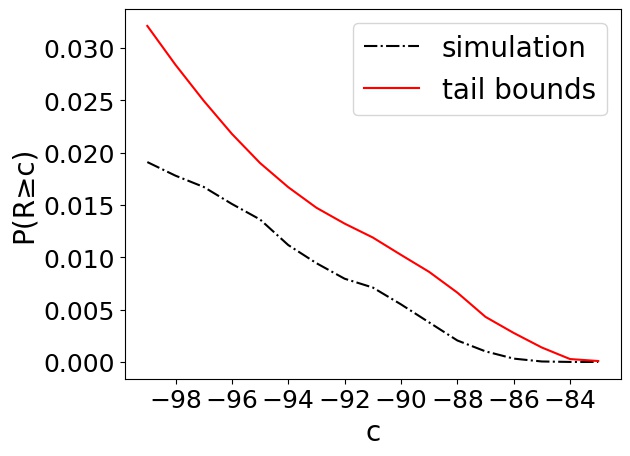}
		\caption{$r=0.1$(C.S.)}
	\end{subfigure}
	\caption{Certified tail bounds of expected cumulative rewards and simulation results under different perturbations for MountainCar.}
	\label{fig:MC_Tail_Bound_Gaus}
\end{figure}

\begin{figure}[h!]
	\footnotesize 

	\begin{subfigure}[b]{0.23\textwidth}
		\includegraphics[width=\textwidth]{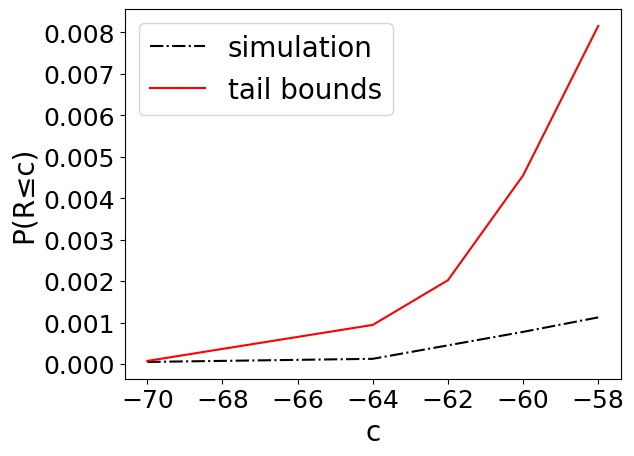}
		\caption{$\sigma=0.05$(A.S.)}
	\end{subfigure}\!
 		\begin{subfigure}[b]{0.23\textwidth}
		\includegraphics[width=\textwidth]{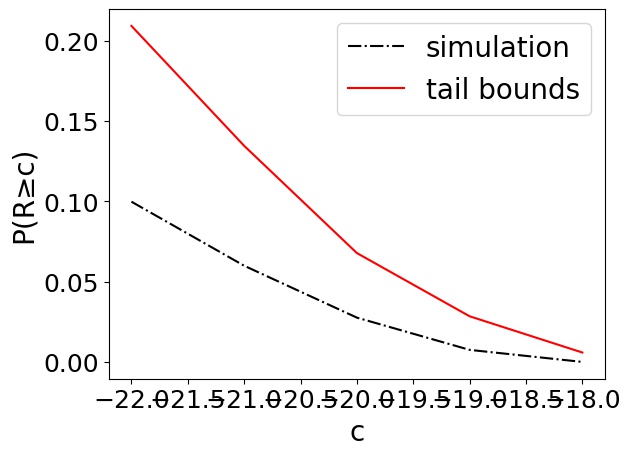}
		\caption{$\sigma=0.05$(A.S.)}
	\end{subfigure} \\

 \begin{subfigure}[b]{0.23\textwidth}
		\includegraphics[width=\textwidth]{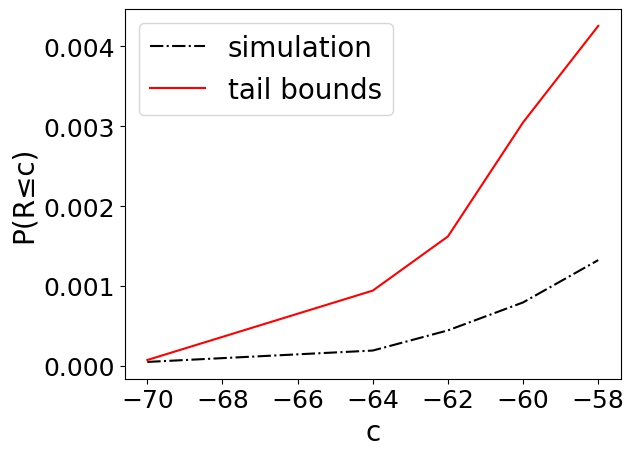}
		\caption{$\sigma=0.05$(C.S.)}
	\end{subfigure}\!
 		\begin{subfigure}[b]{0.23\textwidth}
		\includegraphics[width=\textwidth]{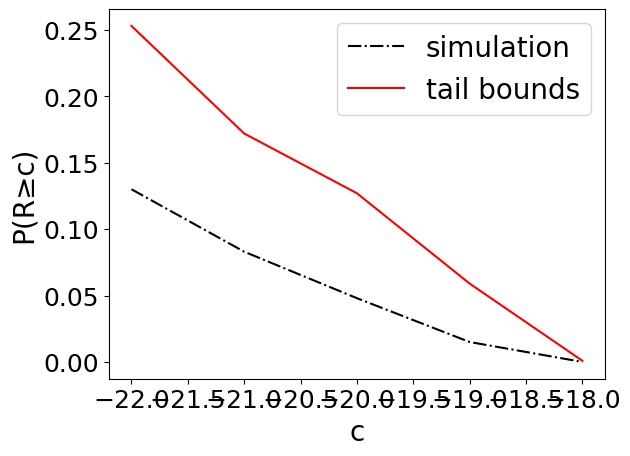}
		\caption{$\sigma=0.05$(C.S.)}
	\end{subfigure} \\
 \begin{subfigure}[b]{0.23\textwidth}
		\includegraphics[width=\textwidth]{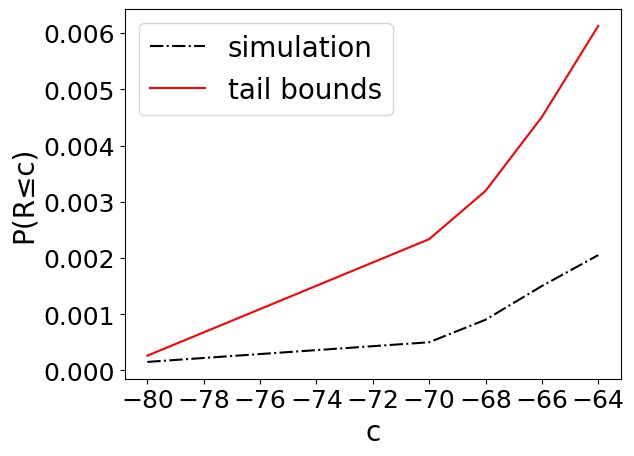}
		\caption{$r=0.1$(A.S.)}
	\end{subfigure}\!
 		\begin{subfigure}[b]{0.23\textwidth}
		\includegraphics[width=\textwidth]{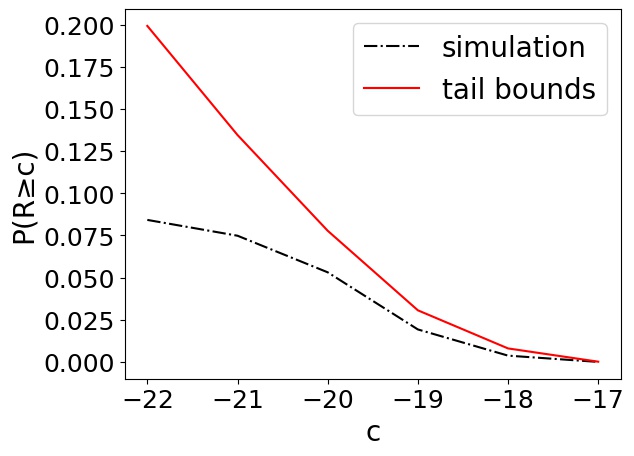}
		\caption{$r=0.1$(A.S.)}
	\end{subfigure} \\
        \begin{subfigure}[b]{0.23\textwidth}
		\includegraphics[width=\textwidth]{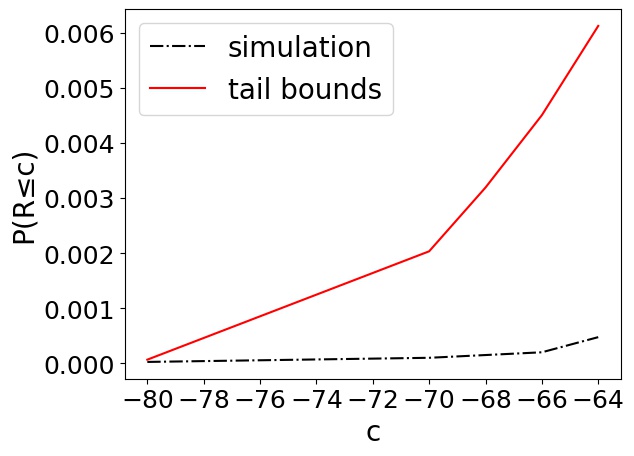}
		\caption{$r=0.1$(C.S.)}
	\end{subfigure}\!
	\begin{subfigure}[b]{0.23\textwidth}
		\includegraphics[width=\textwidth]{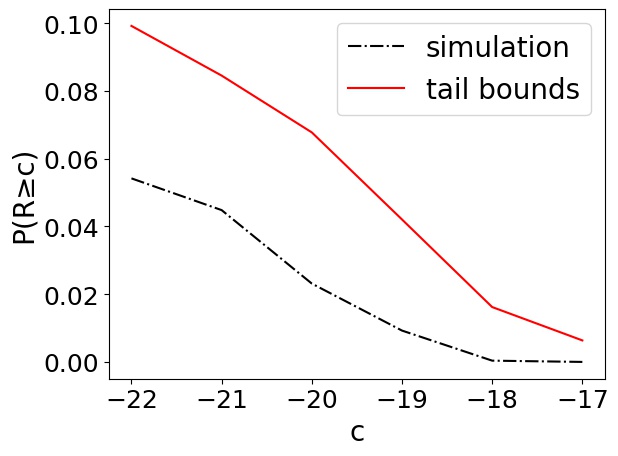}
		\caption{$r=0.1$(C.S.)}
	\end{subfigure}
	\caption{Certified tail bounds of expected cumulative rewards and simulation results under different perturbations for B2.}
	\label{fig:B2_Tail_Bound_Gaus}
\end{figure}

\begin{figure}[h!]
	\footnotesize 

	\begin{subfigure}[b]{0.23\textwidth}
		\includegraphics[width=\textwidth]{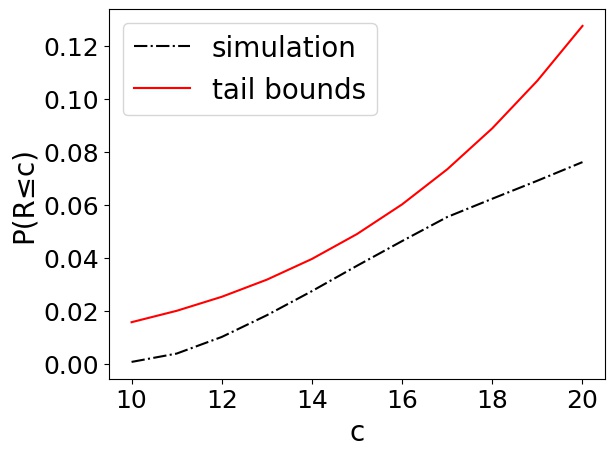}
		\caption{$\sigma=0.1$(A.S.)}
	\end{subfigure}\!
 		\begin{subfigure}[b]{0.23\textwidth}
		\includegraphics[width=\textwidth]{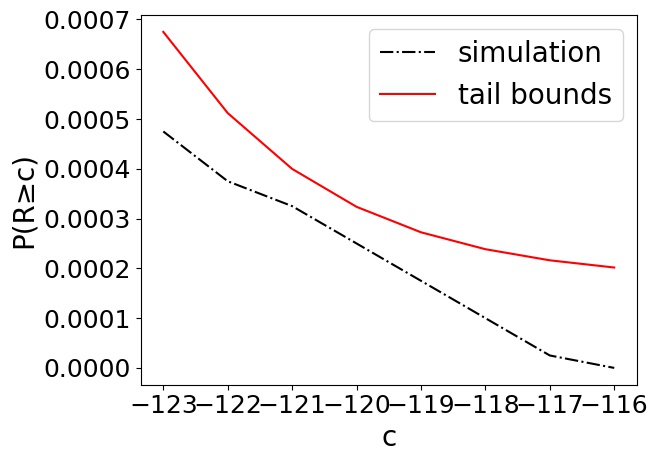}
		\caption{$\sigma=0.1$(A.S.)}
	\end{subfigure} \\

 \begin{subfigure}[b]{0.23\textwidth}
		\includegraphics[width=\textwidth]{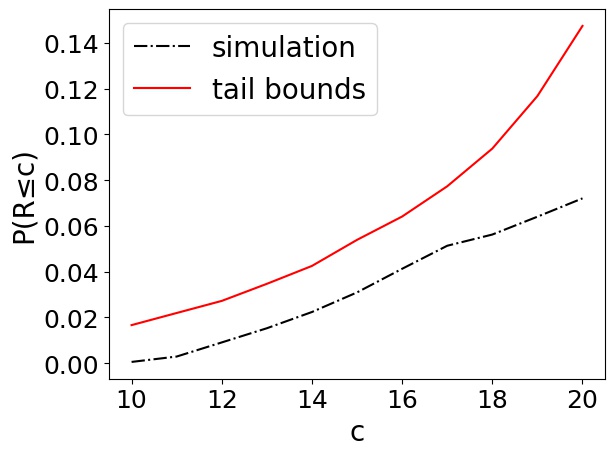}
		\caption{$\sigma=0.1$(C.S.)}
	\end{subfigure}\!
 		\begin{subfigure}[b]{0.23\textwidth}
		\includegraphics[width=\textwidth]{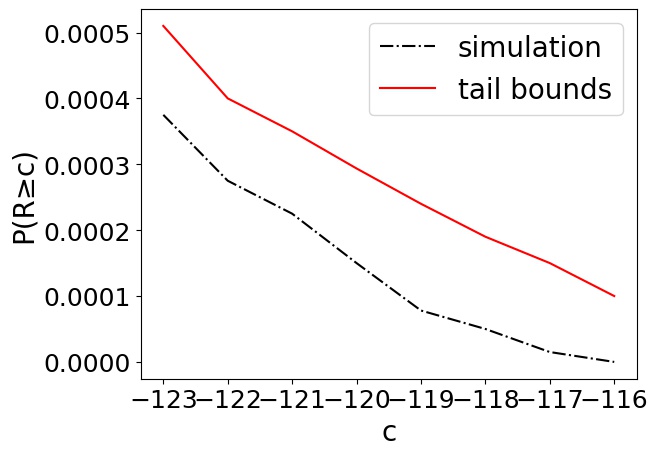}
		\caption{$\sigma=0.1$(C.S.)}
	\end{subfigure} \\
 \begin{subfigure}[b]{0.23\textwidth}
		\includegraphics[width=\textwidth]{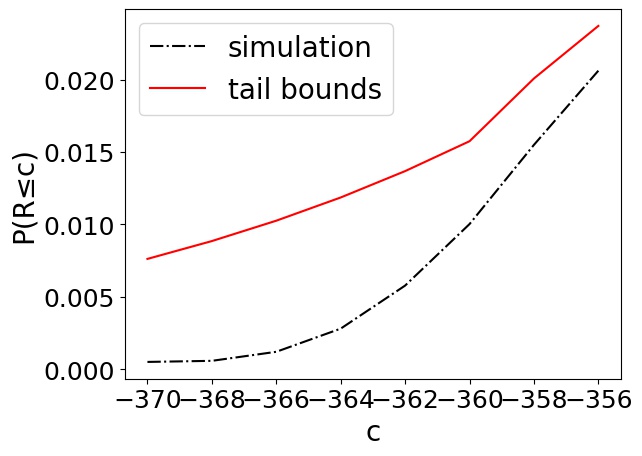}
		\caption{$r=0.3$(A.S.)}
	\end{subfigure}\!
 		\begin{subfigure}[b]{0.23\textwidth}
		\includegraphics[width=\textwidth]{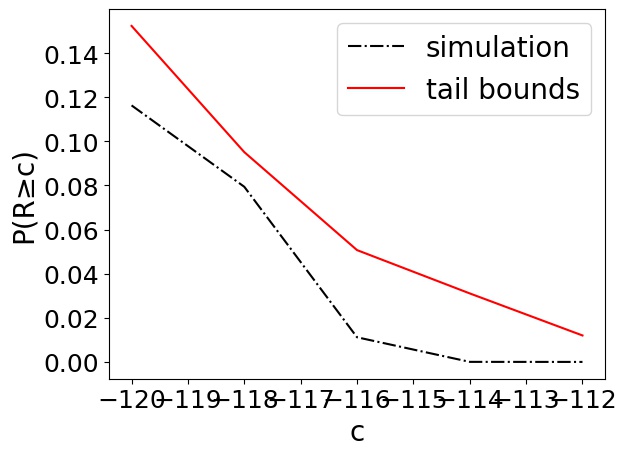}
		\caption{$r=0.3$(A.S.)}
	\end{subfigure} \\
        \begin{subfigure}[b]{0.23\textwidth}
		\includegraphics[width=\textwidth]{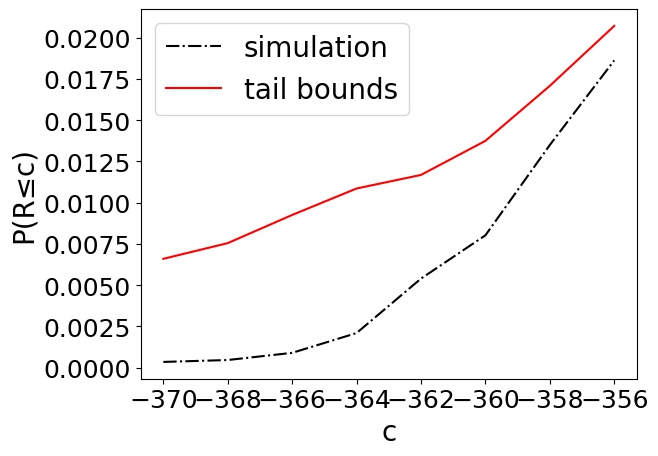}
		\caption{$r=0.3$(C.S.)}
	\end{subfigure}\!
	\begin{subfigure}[b]{0.23\textwidth}
		\includegraphics[width=\textwidth]{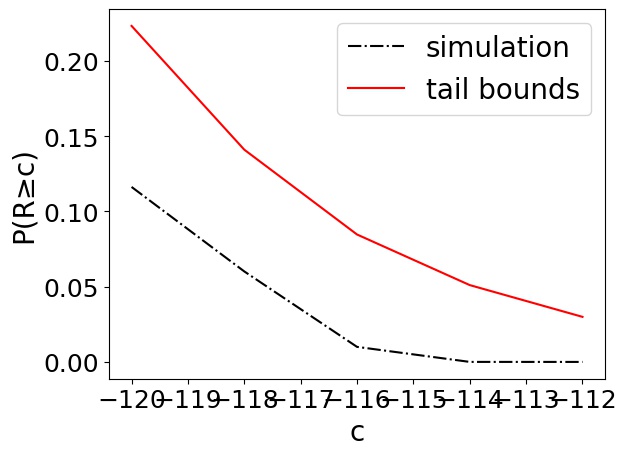}
		\caption{$r=0.3$(C.S.)}
	\end{subfigure}
	\caption{Certified tail bounds of expected cumulative rewards and simulation results under Gaussian perturbations for CP(a-d) and B1(e-h).}
	\label{fig:CP_B1_Gau_Tail_Bound_Gaus}
\end{figure}

\end{document}